\definecolor{dark-blue}{RGB}{0,0,191}
\newcommand{\explain}[1]{\tag*{(#1)}}
\newcommand{\explaind}[2]{\makebox[0.85\textwidth]{$\displaystyle#1$\hfill(#2)}}
\newcommand{\fS}{\mathcal{S}}
\newcommand{\fA}{\mathcal{A}}
\newcommand{\R}{\mathbb{R}}
\newcommand{\E}{\mathbb{E}}
\newcommand{\V}{\mathbb{V}}
\newcommand{\na}{{|\fA|}}
\newcommand{\pdisg}{G^{\text{PDIS}}}
\newcommand{\pdis}{\text{PDIS}}
\newcommand{\tb}[1]{{\textbf{#1}}}
\newtheorem{lemma}{Lemma}
\title{Doubly Optimal Policy Evaluation for \\ Reinforcement Learning}
\author{Shuze Daniel Liu\\
Department of Computer Science \\
University of Virginia\\
\texttt{shuzeliu@virginia.edu} \\
\And
Claire Chen \\
School of Arts and Science\\
University of Virginia\\
\texttt{clairechen@email.virginia.edu} \\
\And
Shangtong Zhang\\
Department of Computer Science \\
University of Virginia\\
\texttt{shangtong@virginia.edu} \\
}
\begin{document}

\maketitle

\begin{abstract}

Policy evaluation estimates the performance of a policy by (1) collecting data from the environment and (2) processing raw data into a meaningful estimate. Due to the sequential nature of reinforcement learning, any improper data-collecting policy or data-processing method substantially deteriorates the variance of evaluation results over long time steps. Thus, policy evaluation often suffers from large variance and requires massive data to achieve the desired accuracy. In this work, we design an optimal combination of data-collecting policy and data-processing baseline. Theoretically, we prove our doubly optimal policy evaluation method is unbiased and guaranteed to have lower variance than previously best-performing methods. Empirically, compared with previous works, we show our method reduces variance substantially and achieves superior empirical performance. 
\end{abstract}

\section{Introduction}
Reinforcement learning (RL, \citet{sutton2018reinforcement}) has achieved remarkable success in various sequential decision-making problems. For example, RL algorithms have reduced energy consumption for Google data centers' cooling by $40\%$ \citep{chervonyi2022semianalytical}, predicted protein structures with competitive accuracy \citep{jumper2021highly}, and discovered faster matrix multiplication algorithms \citep{fawzi2022discovering}. 
When applying RL algorithms, \textit{policy evaluation} plays a critical role 
to allow practitioners to estimate the performance of a policy before committing to its full deployment and test different algorithmic choices. 
A commonly used approach among RL  practitioners for policy evaluation is the on-policy 
Monte Carlo method, where a policy (i.e., the target policy) is evaluated by directly executing itself. However, using the target policy itself as the behavior policy is not optimal \citep{hanna2017data, liu2024efficient, liu2024multi,chen2024efficient}, leading to evaluation with high variance. This suboptimality of on-policy evaluation results in extensive needs for collecting online samples to achieve a desired level of accuracy.

In many scenarios, heavily relying on online data is not preferable, since collecting massive online data through real-world interaction is both expensive and slow \citep{li2019perspective, Zhang_2023}. Even with a well-developed simulator, complex tasks like data center cooling take $10$ seconds per step \citep{chervonyi2022semianalytical}, making the evaluation of a policy requiring millions of steps prohibitively expensive.
To address the expensive nature of online data, 
offline RL is proposed to mitigate the dependency on online data. However, there are often mismatches between the offline data distribution and the data distribution induced by the target policy, leading to uncontrolled and ineliminable bias \citep{jiang2015doubly, farahmand2011model, marivate2015improved}. As a result, a policy with high performance on offline data may actually perform very poorly in real deployment \citep{levine2018reinforcement}. Consequently, both online and offline RL practitioners still heavily rely on online policy evaluation methods\citep{kalashnikov2018scalable, vinyals2019grandmaster}. 

Improving the online sample efficiency for policy evaluation by reducing the variance of estimators is thus a critical challenge in the RL community. 
In this paper, we tackle this challenge by decomposing policy evaluation into two phases: data collecting and data processing. Our contributions are summarized as follows:
\begin{enumerate}
\item We design a doubly optimal estimator by proposing an \textbf{optimal data-collecting policy} and an \textbf{optimal data-processing baseline}. They are carefully tailored to each other to guarantee both unbiasedness and substantial variance reduction.
\item Theoretically, we prove our method has guaranteed lower variance than the on-policy Monte Carlo estimator, and is superior to previously best methods \citep{jiang2015doubly, liu2024efficient}. Moreover, such superiority grows over the time horizon as ensured by rigorous mathematical analysis.
\item Empirically, we show that our method reduces variance substantially
compared with previous works and achieves state-of-the-art performance across a broad set of environments. 

\end{enumerate}

\section{Related Work}\label{sec: related work} 
Reducing the variance for policy evaluation in reinforcement learning (RL) 
has been widely studied.
One rising approach is variance reduction by designing a proper data-collecting policy, also known as the 
behavior policy. Noticing that the target policy itself is not the best behavior policy, \citet{hanna2017data} formulate the task of finding a variance-reduction behavior policy as an optimization problem. 
They use stochastic gradient descent to update a parameterized behavior policy. However, the stochastic method has been known to easily get stuck in highly suboptimal points in just moderately complex environments, where various local optimal points exist \citep{williams1992simple}.\textit{ By contrast, our method directly learns the globally optimal behavior policy without doing a policy search.}
Moreover, their method requires highly sensitive hyperparameter tuning to learn the behavior policy effectively.  Specifically, the learning rate can vary by up to $10^5$ times across different environments, as reported in the experiments of \citet{hanna2017data}. This extreme sensitivity requires online tuning, consuming massive online data.
\textit{By contrast, we propose an efficient algorithm to learn our optimal behavior policy with purely offline data. }Furthermore, \citet{hanna2017data} constrain the online data to be complete trajectories. \textit{By contrast, our method copes well with incomplete offline data tuples, which is widely applicable.} 

\citet{zhong2022robust} also aim to reduce the variance of policy evaluation through designing a proper behavior policy. They propose adjusting the behavior policy to focus on under-sampled data segments. Nevertheless, their method necessitates complete offline trajectories generated by known policies and assumes a strong similarity between the behavior and target policies, limiting the generalizability.\textit{ By contrast, our method effectively uses incomplete offline segments from unknown and diverse behavior policies.} Moreover, the estimates made by \citet{zhong2022robust} lack theoretical guarantees of unbiasedness nor consistency.
\textit{By contrast, we theoretically prove that our estimate is inherently unbiased.} 
Another approach by \citet{mukherjee2022revar} investigates behavior policies aimed at reducing variance in per-decision importance sampling estimators. However, their results are limited to tree-structured MDPs, a significant limitation since most problems do not adhere to tree structure. 
\textit{By contrast, our method works on general MDPs without any restriction on their inherent structures.} Moreover, \citet{mukherjee2022revar} explicitly require the knowledge of transition probability and, therefore, suffer from all canonical
challenges in model learning \citep{sutton1990integrated,sutton2012dyna,deisenroth2011pilco,chua2018deep}. \textit{By contrast, our approach is model-free and can use off-the-shelf offline policy evaluation methods (e.g. Fitted Q-Evaluation, \citet{le2019batch}).}
The current state-of-the-art method in behavior policy design is proposed by \citet{liu2024efficient}, where they find an optimal and offline-learnable behavior policy with the per-decision importance sampling estimator. However, all these approaches \citep{hanna2017data, mukherjee2022revar, zhong2022robust, liu2024efficient} only consider optimality in the data-collecting process, while ignoring the potential improvement in 
data-processing steps. \textit{By contrast, we model the variance reduction as a bi-level optimization problem, where the behavior policy is optimized with a baseline function. This fundamental difference makes our method superior in a broader context, as theoretically and empirically demonstrated in Section~\ref{section: comparison} and Section~\ref{sec: experiment}.}

Besides behavior policy design, another popular approach for reducing the variance in policy evaluation is using the baseline functions. \citet{jiang2015doubly} propose a doubly robust estimator by incorporating a baseline function into the plain per-decision importance sampling estimator. However, their method assumes that the behavior policy is fixed and given, but does not discuss how to choose a proper behavior policy.
Ignoring the choice of behavior policy loses the opportunity to save online samples manyfold.
\textit{By contrast, our method achieves optimality in both the design of behavior policy and the choice of baseline, thus outperforming the estimator of \citet{jiang2015doubly} both theoretically and empirically.} \citet{ope:thomas2016} extend the method of \citet{jiang2015doubly} into the infinite horizon setting, proposing a weighted doubly robust estimator. However, their method introduces bias into the estimator, potentially leading the estimation to systematically deviate from the true return of the target policy.

\section{Background}
In this paper, we study a finite horizon Markov Decision Process (MDP, \citet{puterman2014markov}). In this MDP, there is a finite action space $\fA$, a finite action space $\fA$, a transition probability function $p: \fS \times \fS \times \fA \to [0, 1]$,
a reward function $r: \fS \times \fA \to \R$,
an initial state distribution $p_0: \fS \to [0, 1]$,
and a constant horizon length $T$.  
For simplifying notations,
we consider the undiscounted setting without loss of generality. 
Our method naturally applies to the discounted setting  as long as the horizon is fixed and finite \citep{puterman2014markov}.
We define a shorthand $[n] \doteq \qty{0, 1, \dots, n}$ for any integer $n$.

The MDP process begins at time step $0$, where
an initial state $S_0$ is sampled from $p_0$.
At each time step $t \in [T-1]$,
an action $A_t$ is sampled based on $\pi_t(\cdot \mid S_t)$. Here, $\pi_t: \fA \times \fS \to [0, 1]$ denotes the policy at time step $t$. 
Then, a finite reward $R_{t+1} \doteq r(S_t, A_t)$ is given by the environment and a successor state $S_{t+1}$ is obtained based on $p(\cdot \mid S_t, A_t)$. 
We use abbreviations $\pi_{i:j} \doteq \qty{\pi_i, \pi_{i+1}, \dots, \pi_j}$ and $\pi \doteq \pi_{0:T-1}$.
At each time step $t$, the return is defined as 
$
  G_t \doteq \sum_{i={t+1}}^T R_i.
$
Then, we define the state-value and action-value functions as
$
v_{\pi, t}(s) \doteq \E_{\pi}\left[G_t \mid S_t = s\right]$ and $
q_{\pi, t}(s, a) \doteq \E_{\pi}\left[G_t \mid S_t = s, A_t = a\right].
$
We adopt the total rewards performance metric \citep{puterman2014markov} for the measurement of the performance of policy $\pi$,
which is defined as 
$J(\pi) \doteq \sum_s p_0(s) v_{\pi, 0}(s)$.
In this work, we use Monte Carlo
methods, as introduced by \citet{kakutani1945markoff}, for estimating the total rewards $ J(\pi)$. 
The most straightforward  Monte Carlo
method is to draw samples of $J(\pi)$ through the online execution of the policy $\pi$.
The empirical average of the sampled returns converges to $J(\pi)$ as the number of samples increases.
Since this method estimates a policy by executing itself, it is called on-policy learning (\citealt{sutton1988learning}). 

Moving forward, we focus on off-policy evaluation, where the goal is to estimate the total rewards $J(\pi)$ of an interested policy $\pi$, which is called the \textit{target policy}. Data for off-policy evaluation are collected
by executing a different policy $\mu$,
called the \textit{behavior policy}. 
In off-policy evaluation, we generate each trajectory
$
\textstyle \qty{S_0, A_0, R_1, S_1, A_1, R_2, \dots, S_{T-1}, A_{T-1}, R_T}
$ by a behavior policy $\mu$ with
$
A_{t} \sim \mu_{t}(\cdot | S_{t}).
$
We use a shorthand $
  \tau^{\mu_{t:T-1}}_{t:T-1} \doteq \qty{S_t, A_t, R_{t+1}, \dots, S_{T-1}, A_{T-1}, R_{T}}
$ for a trajectory generated by the behavior policy $\mu$ from time step $t$ to $T-1$ inclusively.
We use the importance sampling ratio to reweight the rewards obtained by the behavior policy $\mu$, in order to give an estimate of $J(\pi)$. 
We define the importance sampling ratio at time step $t$ as
$
\textstyle \rho_t \doteq \frac{\pi_t(A_t \mid S_t)}{\mu_t(A_t \mid S_t)}.
$
Then, the product of importance sampling ratios from time $t$ to $t' \geq t$ is defined as 
$
\textstyle \rho_{t:t'} \doteq \prod_{k=t}^{t'} \frac{\pi_k(A_k | S_k)}{\mu_k(A_k | S_k)}.
$
In off-policy learning, there are several ways to use the importance sampling ratios 
 \citep{geweke1988antithetic, hesterberg1995weighted, koller2009probabilistic,thomas2015safe}.
In this paper,
we investigate the per-decision importance sampling estimator (PDIS, \citet{precup:2000:eto:645529.658134}) and leave the investigation of others for future work.
We define the PDIS Monte Carlo estimator as $\textstyle \pdisg(\tau^{\mu_{t:T-1}}_{t:T-1}) \doteq \sum_{k=t}^{T-1} \rho_{t:k} R_{k+1},$
which can also be expressed recursively as
\begin{align}\label{eq: Gpdis recursive}
\pdisg(\tau^{\mu_{t:T-1}}_{t:T-1}) 
=&\begin{cases}
\rho_t \left(R_{t+1} + \pdisg(\tau^{\mu_{t+1:T-1}}_{t+1:T-1})\right) & t \in [T-2], \\
\rho_tR_{t+1} & t = T-1.
\end{cases}
\end{align}
Under the classic policy coverage assumption \citep{precup:2000:eto:645529.658134,maei2011gradient,sutton2016emphatic,zhang2022thesis,liu2024ode} $\forall t, s, a, \mu_t(a|s) = 0 \implies \pi_t(a|s) = 0$, this off-policy estimator $\pdisg(\tau^{\mu_{0:T-1}}_{0:T-1})$ provides an \emph{unbiased} estimation  for $J(\pi)$, i.e., $\E\qty[\pdisg(\tau^{\mu_{0:T-1}}_{0:T-1})] = J(\pi).$

In off-policy evaluation, a notorious curse is that the importance sampling ratios can be extremely large, resulting in infinite variance \citep{sutton2018reinforcement}. 
Even with the PDIS method, this fundamental issue still remains if the behavior policy significantly differs from the target policy, particularly when the behavior policy assigns very low probabilities to actions favored by the target policy. Moreover, such degeneration of important sampling ratios typically grows with the dimensions of state and action spaces as well as the time horizon \citep{levine2020offline}.
One way to control for the violation in important sampling ratios is to subtract a baseline from samples \citep{williams1992simple, greensmith2004variance, jiang2015doubly, thomas2017policy}. Using $b$ to denote an arbitrary baseline function, the PDIS estimator with baseline is defined as 
\begin{align}\label{eq: GQbase recursive}
G^b(\tau^{\mu_{t:T-1}}_{t:T-1}) 
=&\begin{cases}
\rho_t \left(R_{t+1} + G^b(\tau^{\mu_{t+1:T-1}}_{t+1:T-1})-b_t(S_t,A_t)\right)+\bar{b}_t(S_t) & t \in [T-2], \\
\rho_t(R_{t+1}-b_t(S_t,A_t))+\bar{b}_t(S_t) & t = T-1,
\end{cases}
\end{align}
where
\begin{align}
\bar{b}_t(S_t) \doteq \E_{A_t \sim \pi} \qty[b_t(S_t,A_t)].\label{def: Vbase}
\end{align}

The variance of \eqref{eq: GQbase recursive} highly depends on the importance sampling ratio $\rho_t = \frac{\pi_t(A_t|S_t)}{\mu_t(A_t|S_t)}$ and the choice of baseline function $b$.

\section{Variance Reduction in Reinforcement Learning} \label{section: variance}
We seek to reduce the variance $\V( G^b(\tau^{\mu_{0:T-1}}_{0:T-1}) )$ 
by designing an optimal behavior policy and an optimal baseline function at the same time. 
We solve the bi-level optimization problem 
\begin{align}\label{def: optimization g baseline}
\min_{b} \min_{\mu}& \quad  \V( G^b(\tau^{\mu_{0:T-1}}_{0:T-1}) )  \\
\text{ s.t.}& \quad  \E\qty[ G^b(\tau^{\mu_{0:T-1}}_{0:T-1})] = J(\pi),
\end{align}
where the optimal behavior policy $\mu^*$ and the optimal baseline function $b^*$ are carefully tailored to each other to guarantee both unbiasedness and substantial variance reduction. 

Our paper proceeds as follows. In Section \ref{section: variance}, we solve this 
bi-level optimization problem in closed-form. In Section \ref{section: comparison}, we mathematically quantify the superiority in variance reduction of our designed optimal behavior policy and baseline function, in comparison with cutting-edge methods \citep{jiang2015doubly,liu2024efficient}.
In Section \ref{sec: experiment}, we empirically show that such doubly optimal design reduces the variance substantially compared with the on-policy Monte Carlo estimator and previously best methods \citep{jiang2015doubly,liu2024efficient} in 
a broad set of environments. 

To ensure that the off-policy estimator $G^b(\tau^{\mu_{0:T-1}}_{0:T-1})$
is unbiased, the classic reinforcement learning wisdom \citep{precup:2000:eto:645529.658134,maei2011gradient,sutton2016emphatic,zhang2022thesis} 
requires that the behavior policy $\mu$ covers the target policy $\pi$. 
This means that they
constraint $\mu$ to be in a set 
\begin{align}
\Lambda^- \doteq 
& \{\mu \mid
\forall t, s, a, \pi_t(a|s) \neq 0 \implies \mu_t(a|s) \neq 0  \} \\
=& \{\mu \mid
\forall t, s, a, \mu_t(a|s) = 0 \implies \pi_t(a|s) = 0\},
\end{align}
which contains all policies that satisfy the policy coverage constraint in off-policy learning (\citealt{sutton2018reinforcement}). By specifying the policy coverage constraint, 
the optimization problem \eqref{def: optimization g baseline} is reformulated as 
\begin{align}\label{eq: rl opt problem Lambda -}
\min_{b} \min_{\mu} & \quad  \V( G^b(\tau^{\mu_{0:T-1}}_{0:T-1}) )    \\
\text{ s.t.} & \quad  \mu \in \Lambda^-.
\end{align}

In this paper, compared with the classic reinforcement learning literature, we enlarge the search space of $\mu$ from this set $\Lambda^-$ to a set $\Lambda$. To achieve a superior and reliable optimization solution, 
we require $\Lambda$ to have two properties. 
\begin{enumerate}
\item (Broadness) $\Lambda$ must be broad enough such that it includes all policies satisfying the classic policy coverage constraint  \citep{precup:2000:eto:645529.658134, sutton2018reinforcement}. Formally, 
\begin{align}\label{eq: Lambda general}
\Lambda^- \subseteq \Lambda.
\end{align}

\item (Unbiasedness)
Every behavior policy in $\Lambda$ must be well-behaved such that the data collected by it can be used by the off-policy estimator to achieve unbiased estimation for all state $s$ and time step $t$. Formally, $\forall \mu \in \Lambda$,
\begin{align}\label{eq: rl unbiasedness}
\forall  t, \forall  s, \quad \E\left[G^b(\tau^{\mu_{t:T-1}}_{t:T-1}) \mid S_t = s\right] = v_{\pi, t}(s).
\end{align}
\end{enumerate}

The space $\Lambda$ that satisfies those two properties will be defined shortly.
We now reformulate our bi-level optimization problem as 
\begin{align}
\label{eq: rl opt problem Lambda}
\min_{b} \min_{\mu}& \quad  \V( G^b(\tau^{\mu_{0:T-1}}_{0:T-1}) ) \\
\text{s.t.}& \quad \mu \in \Lambda. 
\end{align}
Compared with the classic approach \eqref{eq: rl opt problem Lambda -}, our bi-level optimization problem \eqref{eq: rl opt problem Lambda} searches for $\mu$ in a broader space $\Lambda$ such that $\Lambda^- \subseteq \Lambda$. 
Thus, the optimal solution of our optimization problem must be \emph{superior} to the optimal solution of the optimization problem with the classic policy coverage constraint. 
To solve our bi-level optimization problem \eqref{eq: rl opt problem Lambda}, we first give a closed-form solution of the inner optimization problem
\begin{align}
\label{eq: rl opt problem Lambda inner}
\min_{\mu}& \quad  \V( G^b(\tau^{\mu_{0:T-1}}_{0:T-1}) )   \\ \text{ s.t.}& \quad \mu \in \Lambda 
\end{align}
for any baseline function $b$. 
Notably, this baseline function $b$ does not need to be any kind of oracle. We design the optimal solution of \eqref{eq: rl opt problem Lambda inner} for this baseline function $b$ without requiring any property on $b$.
Now, we decompose the variance of our off-policy estimator $G^b(\tau^{\mu_{0:T-1}}_{0:T-1})$. By the law of total variance,
$\forall b, \forall \mu \in \Lambda$, 
\begin{align}
&\V\left(G^b(\tau^{\mu_{0:T-1}}_{0:T-1})\right) \\
=& \E_{S_0}\left[\V\left(G^b(\tau^{\mu_{0:T-1}}_{0:T-1}) \mid S_0\right)\right]+ \V_{S_0}\left(\E\left[G^b(\tau^{\mu_{0:T-1}}_{0:T-1}) \mid S_0\right]\right) \\
=& \explaind{\E_{S_0}\left[\V\left(G^b(\tau^{\mu_{0:T-1}}_{0:T-1}) \mid S_0\right)\right] + \V_{S_0}\left(v_{\pi, 0}(S_0)\right).}{by \eqref{eq: rl unbiasedness}}  \label{eq: total varaince mu single level}
\end{align}
The second term in~\eqref{eq: total varaince mu single level} is a constant given a target policy $\pi$ and is unrelated to the choice of $\mu$. 
In the first term, 
the expectation is taken over $S_0$ that is determined by the initial probability distribution $p_0$.  
Consequently, given any baseline function $b$, 
to solve the problem~\eqref{eq: rl opt problem Lambda inner},
it is sufficient to solve 
\begin{align}
\label{eq: rl opt problem2}
\min_{\mu} \quad&  \V( G^b(\tau^{\mu_{t:T-1}}_{t:T-1}) \mid S_t = s ) \\
\text{ s.t.} \quad&  \mu \in \Lambda
\end{align}
for all $s$ and $t$.
If we can find one optimal behavior policy $\mu^*$ that simultaneously solves the optimization problem \eqref{eq: rl opt problem2} on all states and time steps, $\mu^*$ is also the optimal solution for the optimization problem \eqref{eq: rl opt problem Lambda inner}.
Denote the variance of the state value function for the next state
given the current state-action pair as $\nu_{\pi, t}(s, a)$.
Recall the notation $[T-2]$ is a shorthand for the set $\qty{0, 1, \dots, T-2}$.
We have $\nu_{\pi, t}(s, a) \doteq 0$  for $t = T-1$,  and $\forall t \in [T-2]$,
\begin{align}\label{def: nu}
&\!\!\!\!\!\!\!\!\!\!\! \nu_{\pi,t}(s, a) \doteq\V_{S_{t+1}}\left(v_{\pi, t+1}(S_{t+1})\mid S_t=s, A_t=a\right).
\end{align}
Given any baseline function $b$, we construct a behavior policy $\mu^*$ as
\begin{align}
\label{def: mu star}
\mu_t^*(a|s)  \propto \textstyle \pi_{t}(a|s) \sqrt{u_{\pi, t}(s, a)}
\end{align}
where 
$u_{\pi, t}(s, a) \doteq \qty[q_{\pi, t}(s, a) - b_t(s,a)]^2$ for $t=T-1$, and $\forall t \in [T-2]$,
\begin{align}
u_{\pi, t}(s, a) \doteq \qty(q_{\pi, t}(s, a) - b_t(s,a))^2 + \nu_{\pi, t}(s, a) + \textstyle{\sum_{s'} p(s'|s, a)\V\left(G^b(\tau^{\mu^*_{t+1:T-1}}_{t+1:T-1}) \mid S_{t+1} = s'\right)}.\label{def: u}
\end{align}
Notably,
$u_{\pi, t}$ and $\mu^*_t$ are defined backwards and alternatively,
i.e.,
they are defined in the order of $u_{\pi, T-1}, \mu^*_{T-1}, u_{\pi, T-2}, \mu^*_{T-2}, \dots, u_{\pi, 0}, \mu^*_{0}$.
We now break down each term in $u_{\pi, t}(s, a)$. 
\begin{enumerate}
\item $\qty(q_{\pi, t}(s, a) - b_t(s,a))^2$  is the squared difference between the state-value function $q_{\pi, t}$ and the baseline function $b$. This term is always non-negative because of the square operation. Its magnitude is mainly controlled by the baseline function $b$.
\item $\nu_{\pi,t}(s, a)$ defined in \eqref{def: nu} is the variance of the value for the next state. This term is always non-negative by the definition of variance.  Its magnitude is mainly controlled by the stochasticity of the environment (i.e. transition function $p$).
\item $\textstyle{\sum_{s'} p(s'|s, a)\V\left(G^b(\tau^{\mu^*_{t+1:T-1}}_{t+1:T-1}) \mid S_{t+1} = s'\right)}$ is the expected future variance given the current state $s$ and action $a$. This term is always non-negative by the definition of variance. Its magnitude is jointly controlled by the choice of behavior policy $\mu^*$, the baseline function $b$, and the transition function $p$.
\end{enumerate}
$u_{\pi, t}(s, a)$ is non-negative because it is the sum of three non-negative terms.
Therefore, $\sqrt{u_{\pi, t}(s, a)}$ is always well-defined.
In \eqref{def: mu star},
$\mu_t^*(a|s) \propto \pi_t(a|s) \sqrt{u_{\pi, t}(s, a)}$ 
means $
\textstyle \mu_t^*(a|s) \doteq \frac{\pi_t(a|s) \sqrt{u_{\pi, t}(s, a)}}{\sum_b \pi_t(b|s) \sqrt{u_{\pi, t}(s, b)}}$.     
If $\forall a, \pi_t(a|s) \sqrt{u_{\pi, t}(s, a)} = 0$,
the denominator is zero.
In this case, we use the convention to interpret $\mu_t^*(a|s)$ as a uniform distribution, i.e., $\forall a, \mu_t^*(a|s) = 1/\na$.
We adopt this convention for $\propto$ in the rest of the paper to simplify the presentation.
Now, we define the enlarged space $\Lambda$ as
\begin{align}\label{def: Lambda}
\Lambda \doteq& \qty{\mu \mid \forall t, s, a, \mu_t(a|s) = 0 \implies \pi_t(a|s)u_{\pi, t}(s, a)  = 0}.
\end{align}
We prove that this policy space $\Lambda$ defined above satisfies the broadness  \eqref{eq: Lambda general} and the unbiasedness \eqref{eq: rl unbiasedness} by the following lemmas.
\begin{lemma}[Broadness]\label{lemma: Lambda broadness}
$\forall b$, $\Lambda^- \subseteq \Lambda.$
\end{lemma}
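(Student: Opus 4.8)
The plan is to establish the inclusion $\Lambda^- \subseteq \Lambda$ by a direct pointwise argument straight from the two set definitions, the crux being that the defining constraint of $\Lambda$ in \eqref{def: Lambda} is logically weaker than that of $\Lambda^-$. Concretely, I would fix an arbitrary baseline function $b$ and an arbitrary policy $\mu \in \Lambda^-$, and then verify that $\mu$ satisfies the membership condition of $\Lambda$.

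First I would fix arbitrary indices $t, s, a$ and assume the hypothesis $\mu_t(a|s) = 0$; the goal is to derive $\pi_t(a|s)\, u_{\pi, t}(s, a) = 0$. Since $\mu \in \Lambda^-$, the definition of $\Lambda^-$ supplies the implication $\mu_t(a|s) = 0 \implies \pi_t(a|s) = 0$, so the assumption forces $\pi_t(a|s) = 0$. Multiplying by the (well-defined, non-negative) quantity $u_{\pi, t}(s, a)$ then gives $\pi_t(a|s)\, u_{\pi, t}(s, a) = 0 \cdot u_{\pi, t}(s, a) = 0$, which is exactly the conclusion required by \eqref{def: Lambda}. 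As $t, s, a$ were arbitrary, $\mu$ meets the defining condition of $\Lambda$, so $\mu \in \Lambda$; and as $\mu \in \Lambda^-$ was arbitrary, $\Lambda^- \subseteq \Lambda$.

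There is essentially no hard step here: the inclusion holds because the consequent $\pi_t(a|s) = 0$ of the $\Lambda^-$ implication is strictly stronger than the consequent $\pi_t(a|s)\, u_{\pi, t}(s, a) = 0$ of the $\Lambda$ implication, independently of the actual value of $u_{\pi, t}(s, a)$. The only subtlety worth noting is that $u_{\pi, t}(s, a)$ is a genuine finite, non-negative number so that the product is well-defined, which is already guaranteed by the non-negativity discussion preceding \eqref{def: Lambda}. Since the argument never uses any property of $b$, the claim is thereby established for every baseline function $b$ simultaneously, matching the quantifier $\forall b$ in the statement.
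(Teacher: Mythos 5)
Your proof is correct and follows essentially the same route as the paper's: fix $\mu \in \Lambda^-$, use the definition of $\Lambda^-$ to get $\pi_t(a|s) = 0$ from $\mu_t(a|s) = 0$, and multiply by $u_{\pi,t}(s,a)$ to obtain the defining condition of $\Lambda$ in \eqref{def: Lambda}. The additional remark about $u_{\pi,t}(s,a)$ being well-defined and non-negative, and the observation that no property of $b$ is used, are consistent with the paper's treatment.
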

Its proof is in Appendix \ref{appendix: Lambda broadness}.
\begin{restatable}[Unbiasedness]{lemma}{reOOLambdaOOunbiasedness}
\label{lemma: Lambda unbiasedness}
$\forall b, \forall \mu \in \Lambda$, $\forall  t, \forall  s$, 
 $\E\left[G^b(\tau^{\mu_{t:T-1}}_{t:T-1}) \mid S_t = s\right] = v_{\pi, t}(s).$
\end{restatable}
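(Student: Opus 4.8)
The plan is to prove the claim by backward induction on the time step $t$, running from $t = T-1$ down to $t = 0$. The recursive definition of $G^b$ in \eqref{eq: GQbase recursive} makes this induction natural: the estimator at step $t$ is assembled from an importance-weighted reward-plus-baseline term together with the estimator at step $t+1$, so the step-$t$ conditional expectation reduces to the step-$(t+1)$ one.

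For the base case $t = T-1$, I would take the conditional expectation of $\rho_{T-1}(R_T - b_{T-1}(S_{T-1}, A_{T-1})) + \bar{b}_{T-1}(S_{T-1})$ given $S_{T-1} = s$ over $A_{T-1} \sim \mu_{T-1}(\cdot \mid s)$. Since $R_T = r(s, A_{T-1})$ and $q_{\pi, T-1}(s,a) = r(s,a)$, cancelling $\mu_{T-1}(a \mid s)$ against the importance ratio turns the weighted sum over the support of $\mu_{T-1}(\cdot \mid s)$ into $\sum_a \pi_{T-1}(a \mid s)(q_{\pi, T-1}(s,a) - b_{T-1}(s,a))$, which equals $v_{\pi, T-1}(s) - \bar{b}_{T-1}(s)$; adding back $\bar{b}_{T-1}(s)$ yields $v_{\pi, T-1}(s)$.

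For the inductive step, assume the claim at $t+1$. Conditioning on $S_t = s$ and $A_t = a$ and applying the law of iterated expectation over $S_{t+1} \sim p(\cdot \mid s, a)$ and the remaining trajectory, the induction hypothesis replaces $\E[G^b(\tau^{\mu_{t+1:T-1}}_{t+1:T-1}) \mid S_{t+1}]$ by $v_{\pi, t+1}(S_{t+1})$. The inner conditional expectation then collapses to $r(s,a) + \sum_{s'} p(s' \mid s, a) v_{\pi, t+1}(s') - b_t(s,a) = q_{\pi, t}(s,a) - b_t(s,a)$ by the Bellman equation. Weighting by $\mu_t(a \mid s)$ and the importance ratio and adding $\bar{b}_t(s)$ reproduces exactly the same algebra as in the base case, giving $v_{\pi, t}(s)$.

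The main obstacle, and the only place where the enlarged space $\Lambda$ in \eqref{def: Lambda} is actually used, is justifying that extending the importance-weighted sum from the support of $\mu_t(\cdot \mid s)$ to all actions introduces no error. For an action with $\mu_t(a \mid s) = 0$ the ratio $\rho_t$ is undefined, so such an action must contribute nothing to the weighted sum $\sum_a \pi_t(a \mid s)(q_{\pi, t}(s,a) - b_t(s,a))$. The definition of $\Lambda$ guarantees precisely this: $\mu_t(a \mid s) = 0$ forces $\pi_t(a \mid s) u_{\pi, t}(s, a) = 0$, and since $u_{\pi, t}$ in \eqref{def: u} is a sum of non-negative terms, $u_{\pi, t}(s,a) = 0$ implies its first term $(q_{\pi, t}(s,a) - b_t(s,a))^2 = 0$. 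Hence either $\pi_t(a \mid s) = 0$ or $q_{\pi, t}(s,a) = b_t(s,a)$, and in both cases $\pi_t(a \mid s)(q_{\pi, t}(s,a) - b_t(s,a)) = 0$. This is the crux: $\Lambda$ is designed exactly so that the actions $\mu$ fails to cover are either also uncovered by $\pi$ or have their baseline matching the value function, keeping the estimator unbiased despite the broader search space.
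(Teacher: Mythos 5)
Your proposal is correct and follows essentially the same route as the paper: backward induction on $t$ using the recursive form \eqref{eq: GQbase recursive}, with the crux being that the definition of $\Lambda$ forces $\pi_t(a|s)\qty(q_{\pi,t}(s,a)-b_t(s,a)) = 0$ whenever $\mu_t(a|s)=0$ (via non-negativity of the terms in $u_{\pi,t}$), so the importance-weighted sum over the support of $\mu_t$ can be extended to all actions. The only cosmetic difference is that the paper isolates this cancellation algebra in a standalone auxiliary lemma (Lemma~\ref{lemma: intermediate unbiasedness}) applied in both the base case and the inductive step, whereas you carry it out inline.
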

Its proof is in Appendix \ref{appendix: Lambda unbiasedness}. After confirming the broadness and unbiasedness of the space $\Lambda$, we now prove that the behavior policy $\mu^*$ is the optimal solution for the inner optimization problem.

\begin{restatable}{theorem}{restaterloptimal}
\label{lemma: single optimization mu star}
For a baseline function $b$,
the behavior policy $\mu^*$ defined in \eqref{def: mu star} is an optimal solution to the optimization problems $\forall t, s$, 
\begin{align}
\label{eq: single optimization mu star}
\min_{\mu} &\quad \V\left(G^b(\tau^{\mu_{t:T-1}}_{t:T-1})\mid S_t = s\right) \\
\text{s.t.} &\quad  \mu \in \Lambda.
\end{align}
\end{restatable}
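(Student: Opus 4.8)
The plan is to prove the claim by backward induction on the time step $t$, running from $t = T-1$ down to $t = 0$, and establishing at each stage that $\mu^*_{t:T-1}$ minimizes $\V(G^b(\tau^{\mu_{t:T-1}}_{t:T-1}) \mid S_t = s)$ over $\mu \in \Lambda$ for every state $s$. The engine of the argument is a two-fold application of the law of total variance. First I would condition on the action $A_t$: since $R_{t+1} = r(S_t, A_t)$, $b_t(S_t, A_t)$, and $\bar{b}_t(S_t)$ are all deterministic given $S_t = s$ and $A_t = a$, the recursion \eqref{eq: GQbase recursive} gives $\E[G^b \mid S_t = s, A_t = a] = \rho_t(q_{\pi,t}(s,a) - b_t(s,a)) + \bar{b}_t(s)$, where Lemma~\ref{lemma: Lambda unbiasedness} evaluates the conditional expectation of the future return as $v_{\pi,t+1}$ and the Bellman equation collapses it into $q_{\pi,t}$. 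Likewise $\V(G^b \mid S_t = s, A_t = a) = \rho_t^2 \V(G^b(\tau^{\mu_{t+1:T-1}}_{t+1:T-1}) \mid S_t = s, A_t = a)$, and a second application of the law of total variance, now conditioning on $S_{t+1}$, splits this into $\nu_{\pi,t}(s,a)$ (again via Lemma~\ref{lemma: Lambda unbiasedness}) plus the expected future variance $\sum_{s'} p(s'|s,a)\V(G^b(\tau^{\mu_{t+1:T-1}}_{t+1:T-1}) \mid S_{t+1} = s')$.

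Combining these pieces through $\E[(G^b)^2 \mid S_t=s] = \E_{A_t}[\V(\cdot \mid S_t, A_t) + (\E[\cdot \mid S_t, A_t])^2]$ and subtracting the constant $v_{\pi,t}(s)^2$, I expect every term that contains the current behavior policy $\mu_t$ to collect into the single expression $\sum_a \frac{\pi_t(a|s)^2}{\mu_t(a|s)}\big[(q_{\pi,t}(s,a) - b_t(s,a))^2 + \nu_{\pi,t}(s,a) + \sum_{s'} p(s'|s,a)\V(G^b(\tau^{\mu_{t+1:T-1}}_{t+1:T-1}) \mid S_{t+1} = s')\big]$, with all remaining terms (those involving $\bar{b}_t(s)$ and $v_{\pi,t}(s)$) independent of $\mu$. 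This is where the two layers of optimization decouple cleanly: because the coefficients $\frac{\pi_t(a|s)^2}{\mu_t(a|s)} \geq 0$ and $p(s'|s,a) \geq 0$ are nonnegative and, by the Markov property, the inner variance $\V(G^b(\tau^{\mu_{t+1:T-1}}_{t+1:T-1}) \mid S_{t+1} = s')$ depends only on $\mu_{t+1:T-1}$ and not on $\mu_t$, the induction hypothesis lets me replace each inner variance by its minimizer $\mu^*_{t+1:T-1}$ regardless of the choice of $\mu_t$, turning the bracket into exactly $u_{\pi,t}(s,a)$ as defined in \eqref{def: u}.

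With the future fixed to $\mu^*$, it remains to minimize $\sum_a \frac{\pi_t(a|s)^2 u_{\pi,t}(s,a)}{\mu_t(a|s)}$ over $\mu_t$ subject to $\sum_a \mu_t(a|s) = 1$ and $\mu \in \Lambda$. I would apply the Cauchy--Schwarz inequality in the form $\big(\sum_a \pi_t(a|s)\sqrt{u_{\pi,t}(s,a)}\big)^2 \leq \big(\sum_a \frac{\pi_t(a|s)^2 u_{\pi,t}(s,a)}{\mu_t(a|s)}\big)\big(\sum_a \mu_t(a|s)\big)$, whose equality condition is precisely $\mu_t(a|s) \propto \pi_t(a|s)\sqrt{u_{\pi,t}(s,a)}$, i.e. the definition of $\mu^*_t$ in \eqref{def: mu star}. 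The base case $t = T-1$ is the same computation with $\nu_{\pi,T-1} \equiv 0$ and no future term, so $u_{\pi,T-1}(s,a) = (q_{\pi,T-1}(s,a) - b_{T-1}(s,a))^2$ and the Cauchy--Schwarz step applies directly.

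Two points need care. First, I must confirm feasibility and attainability inside $\Lambda$: the definition \eqref{def: Lambda} is tailored so that $\mu_t(a|s) = 0$ is permitted exactly when $\pi_t(a|s) u_{\pi,t}(s,a) = 0$, which both keeps the objective finite (terms with vanishing numerator are read as $0$ under the $0/0$ convention) and guarantees $\mu^* \in \Lambda$, so the Cauchy--Schwarz minimizer lies in the feasible set. Second, the main obstacle is justifying the decoupling rigorously --- namely that minimizing the joint objective over $\mu_{t:T-1}$ can be done by first minimizing the inner variances over $\mu_{t+1:T-1}$ (uniformly over every next state $s'$ via the induction hypothesis) and only then minimizing over $\mu_t$. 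This hinges on the nonnegativity of the weights multiplying the inner variances together with the crucial fact that the single policy $\mu^*_{t+1:T-1}$ simultaneously minimizes $\V(G^b(\tau^{\mu_{t+1:T-1}}_{t+1:T-1}) \mid S_{t+1} = s')$ for all $s'$ at once, so that it serves as a common future minimizer independent of both $\mu_t$ and $s'$.
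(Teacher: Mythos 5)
Your proposal is correct and takes essentially the same route as the paper's proof: the same recursive variance decomposition via two applications of the law of total variance (the paper's Lemma~\ref{lemma: recursive var}), the same backward induction in which the future variance term is replaced by its minimum using the induction hypothesis and nonnegativity of the weights, and the same extremal step, since your Cauchy--Schwarz inequality with equality condition $\mu_t(a|s) \propto \pi_t(a|s)\sqrt{u_{\pi,t}(s,a)}$ is exactly the paper's Jensen's-inequality lower bound $\E_{A_t\sim\mu_t}\left[\rho_t^2 u_{\pi,t}(S_t,A_t)\mid S_t\right] \geq \E_{A_t\sim\pi_t}\left[\sqrt{u_{\pi,t}(S_t,A_t)}\mid S_t\right]^2$ combined with its direct verification that $\mu^*$ attains the bound. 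Your handling of the feasibility and $0/0$ issues via the definition of $\Lambda$ likewise matches the paper's use of the identity extending sums over the support of $\mu_t$ to all actions.
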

Its proof is in Appendix \ref{appendix:  b rl optimal}.
Theorem \ref{lemma: single optimization mu star} proves that $\forall  b$, the behavior policy $\mu^*$ \eqref{def: mu star} is the closed-form optimal solution for all $t$ and $s$.
With Theorem \ref{lemma: single optimization mu star}, for any $t$ and $s$, we reduce the bi-level optimization problem 
\begin{align}
\min_{b} \min_{\mu} &\quad \V\left(G^b(\tau^{\mu_{t:T-1}}_{t:T-1})\mid S_t = s\right) \\
\text{  s.t.}& \quad \mu \in \Lambda
\end{align}
to a single-level unconstrained optimization problem 
\begin{align}
\min_{b}  \quad & \V\left(G^b(\tau^{\mu^*_{t:T-1}}_{t:T-1})\mid S_t = s\right). \label{eq: single b optimization} 
\end{align}
In this unconstrained optimization problem, we design a function $b$ that influences both the data processing estimator $G^b$ \eqref{eq: GQbase recursive} and the optimal behavior policy $\mu^*$ \eqref{def: mu star}. Notably, the optimal behavior policy $\mu^*$ depends on the baseline $b$ because it is tailored to a baseline function $b$ in Theorem \ref{lemma: single optimization mu star}. 
Unless otherwise noted, we omit explicitly writing this dependency in the notation of $\mu^*$ for simplicity.
We show that although both $G^b$ and $\mu^*$ depend on $b$, through the mathematical proof in the appendix,
the optimal baseline function $b^*$ has a concise format.
Define $\forall t,s,a$, 
\begin{align}\label{def: b star}
b^*_t(s,a) \doteq q_{\pi,t}(s,a).
\end{align}

\begin{restatable}{theorem}{reOOsingleOOoptimizationOObOOstar}
\label{lemma: single optimization b star}
$b^*$ is the optimal solution to the  optimization problems $\forall t, s$, 
\begin{align}\label{eq: single optimization b star}
\min_{b}  \quad & \V\left(G^b(\tau^{\mu^*_{t:T-1}}_{t:T-1})\mid S_t = s\right).
\end{align}
\end{restatable}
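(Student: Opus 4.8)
The plan is to first obtain a closed form for the objective $\V\!\left(G^b(\tau^{\mu^*_{t:T-1}}_{t:T-1})\mid S_t=s\right)$ as a function of $b$, and then minimize it by backward induction on $t$. Applying the law of total variance to $G^b$ conditioned first on $A_t$ and then on $S_{t+1}$, and invoking the unbiasedness Lemma \ref{lemma: Lambda unbiasedness} to evaluate the conditional means, one finds that the quantity $R_{t+1}+G^b(\tau^{\mu^*_{t+1:T-1}}_{t+1:T-1})-b_t(S_t,A_t)$ has conditional mean $q_{\pi,t}(s,a)-b_t(s,a)$ and conditional second moment exactly $u_{\pi,t}(s,a)$. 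Substituting the closed form $\mu^*_t(a|s)\propto \pi_t(a|s)\sqrt{u_{\pi,t}(s,a)}$ from Theorem \ref{lemma: single optimization mu star} then collapses the importance-weighted second-moment sum (the same computation as in that proof) and yields
\begin{align}
\V\!\left(G^b(\tau^{\mu^*_{t:T-1}}_{t:T-1}) \mid S_t = s\right)= \left(\sum_a \pi_t(a|s)\sqrt{u_{\pi,t}(s,a)}\right)^2 - \left(\sum_a \pi_t(a|s)\big(q_{\pi,t}(s,a)-b_t(s,a)\big)\right)^2.
\end{align}
This is the expression I would minimize over $b$.

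The key step is a Minkowski (triangle) inequality that isolates the dependence on $b_t$. I would split $u_{\pi,t}(s,a)=(q_{\pi,t}(s,a)-b_t(s,a))^2 + c_t(s,a)$, where $c_t(s,a)\doteq \nu_{\pi,t}(s,a)+\sum_{s'}p(s'|s,a)\V(G^b(\tau^{\mu^*_{t+1:T-1}}_{t+1:T-1})\mid S_{t+1}=s')\ge 0$ depends only on the future baselines $b_{t+1:T-1}$ and \emph{not} on $b_t$. Viewing $\big(q_{\pi,t}(s,a)-b_t(s,a),\,\sqrt{c_t(s,a)}\big)$ as a planar vector and applying the triangle inequality to the $\pi_t(\cdot|s)$-weighted sum of these vectors gives
\begin{align}
\left(\sum_a \pi_t(a|s)\sqrt{u_{\pi,t}(s,a)}\right)^2 \ge \left(\sum_a \pi_t(a|s)\big(q_{\pi,t}(s,a)-b_t(s,a)\big)\right)^2 + \left(\sum_a \pi_t(a|s)\sqrt{c_t(s,a)}\right)^2,
\end{align}
so the objective is bounded below by $\big(\sum_a \pi_t(a|s)\sqrt{c_t(s,a)}\big)^2$, with equality attained when $q_{\pi,t}(s,a)-b_t(s,a)=0$, i.e. exactly at $b_t=b^*_t=q_{\pi,t}$.

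It remains to combine this pointwise bound with a backward induction on $t$. At $t=T-1$ we have $c_{T-1}\equiv 0$, so the lower bound is $0$ and is achieved by $b^*_{T-1}=q_{\pi,T-1}$, giving zero conditional variance for every $s$. For the inductive step I would use that the lower bound $\big(\sum_a \pi_t(a|s)\sqrt{c_t(s,a)}\big)^2$ is monotonically increasing in each future conditional variance $\V(G^b(\tau^{\mu^*_{t+1:T-1}}_{t+1:T-1})\mid S_{t+1}=s')$; by the induction hypothesis $b^*=q_{\pi,\cdot}$ simultaneously minimizes all of these at step $t+1$, hence it minimizes $c_t(s,a)$ for every reachable $s'$, and therefore minimizes the lower bound. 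Since choosing $b_t=q_{\pi,t}$ also attains the Minkowski equality, $b^*$ attains the lower bound and is optimal for $\V(G^b(\tau^{\mu^*_{t:T-1}}_{t:T-1})\mid S_t=s)$ at every $s$, completing the induction. Baselines at steps before $t$ do not enter this conditional variance, so they may be set to $q_{\pi,\cdot}$ as well.

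The main obstacle is the interlocking dependence between the baseline and the behavior policy: because $\mu^*$ is itself tailored to $b$, perturbing $b$ changes the estimator through both $G^b$ and $\mu^*$ at once. The closed-form variance expression is what dissolves this coupling, after which the remaining difficulty is the Minkowski step together with the verification that $c_t$ genuinely does not depend on $b_t$ — this is what lets the current baseline $b_t$ and the future baselines be optimized separately and then reconciled by the induction.
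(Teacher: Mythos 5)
Your proposal is correct, and it reaches the paper's conclusion by a genuinely different route. The paper also inducts backward on $t$, but its inductive step is a chain of inequalities: it starts from the recursive variance formula (its Lemma \ref{lemma: recursive var}), applies the inductive hypothesis \emph{inside} the expectation to replace the future variance under $b$ by that under $b^*$, discards the current-step term $\V_{A_t\sim\mu^{*,b}_t}\left(\rho_t[q_{\pi,t}-b_t]\mid S_t\right)$ by variance non-negativity, and then uses Jensen's inequality plus the unbiasedness identity for $\sqrt{u^{b^*}}$ to reach the bound $\E_{A_t\sim\pi_t}\left[\sqrt{u^{b^*}_{\pi,t}(S_t,A_t)}\mid S_t\right]^2$; that last step forces the paper to prove the containment $\Lambda^b \subseteq \Lambda^{b^*}$ so the identity applies under $\mu^{*,b}$. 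You instead first compute the variance under $\mu^{*,b}$ \emph{exactly} as
$\left(\sum_a \pi_t(a|s)\sqrt{u^b_{\pi,t}(s,a)}\right)^2 - \left(\sum_a \pi_t(a|s)\left(q_{\pi,t}(s,a)-b_t(s,a)\right)\right)^2$
(this is the paper's Lemma \ref{lemma: recursive var} combined with \eqref{eq: mu star sqrt u}, with no slack), then isolate the $b_t$-dependence with a single planar Minkowski/triangle inequality, and push the remaining dependence on $b_{t+1:T-1}$ through the induction via monotonicity of the bound in the future variances. What your route buys: the closed form makes the objective's dependence on $b$ explicit, the equality case $b_t = q_{\pi,t}$ is transparent, and you never need the $\Lambda^b \subseteq \Lambda^{b^*}$ argument, since all your expectations are under $\pi_t$ once the importance-weighted sum collapses. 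What the paper's route buys: it avoids Minkowski entirely, using only Jensen and variance non-negativity, and exhibits the $b$-independent lower bound $\E_{A_t\sim\pi_t}\left[\sqrt{u^{b^*}_{\pi,t}}\mid S_t\right]^2$ directly. Both arguments share the prerequisite $\mu^{*,b}\in\Lambda^b$ (established in the proof of Theorem \ref{lemma: single optimization mu star}) so that the recursive variance formula and the unbiasedness lemma apply; your write-up implicitly uses this and would only need a sentence acknowledging it.
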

Its proof is in Appendix \ref{appendix: single optimization b star}. By solving each level of the optimization problem, we show $(\mu^*, b^*)$ is the optimal solution for the bi-level optimization problem by utilizing Theorem \ref{lemma: single optimization mu star} and Theorem \ref{lemma: single optimization b star}.

\begin{restatable}{theorem}{reOObilevelOOoptimizationOObOOstarOOmuOOstar}
\label{lemma: bilevel optimization b star mu star}
$(\mu^*, b^*)$ is the optimal solution to the bi-level optimization problems $\forall t$, $s$,
\begin{align}
\min_{b}\min_{\mu} & \quad \V\left(G^b(\tau^{\mu_{t:T-1}}_{t:T-1})\mid S_t = s\right) \\
\text{ s.t.} & \quad \mu \in \Lambda. 
\end{align}
\end{restatable}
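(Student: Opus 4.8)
The plan is to prove Theorem~\ref{lemma: bilevel optimization b star mu star} by sequentially peeling off the two levels of the optimization, invoking the two single-level results already established. The bi-level problem asks us to minimize over both $b$ and $\mu$ jointly (subject to $\mu \in \Lambda$), and the natural strategy is to show that the nested minimization decomposes exactly as we reformulated it in the main text: for a fixed baseline $b$, the inner minimum over $\mu \in \Lambda$ is attained at $\mu^*$ (which depends on $b$), and then the outer minimum over $b$ of the resulting optimized objective is attained at $b^*$.

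First I would fix an arbitrary time step $t$ and state $s$ and observe, by Theorem~\ref{lemma: single optimization mu star}, that for \emph{every} baseline function $b$ the behavior policy $\mu^*$ (the one tailored to that particular $b$ via \eqref{def: mu star}) solves the inner problem $\min_{\mu \in \Lambda} \V\left(G^b(\tau^{\mu_{t:T-1}}_{t:T-1})\mid S_t = s\right)$. This means the inner minimum value, as a function of $b$, equals precisely $\V\left(G^b(\tau^{\mu^*_{t:T-1}}_{t:T-1})\mid S_t = s\right)$, so the bi-level objective collapses to the single-level unconstrained problem \eqref{eq: single b optimization}. Then I would apply Theorem~\ref{lemma: single optimization b star}, which states that $b^*$ minimizes exactly this quantity, to conclude that $b^*$ is the outer optimizer. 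Chaining the two facts, $(\mu^*, b^*)$ — where here $\mu^*$ is understood as the policy \eqref{def: mu star} instantiated at $b = b^*$ — achieves the global bi-level minimum.

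The delicate point to handle carefully, and what I expect to be the main obstacle, is the $b$-dependence of $\mu^*$. Because $\mu^*$ is defined in terms of $u_{\pi,t}$, which itself contains $b$ through the term $(q_{\pi,t}(s,a)-b_t(s,a))^2$ and through the recursively-defined future variance, the symbol $\mu^*$ denotes a \emph{different} policy for each choice of $b$. I must therefore be explicit that the reduction to \eqref{eq: single b optimization} is legitimate: for each candidate $b$ in the outer minimization, the inner value is the variance evaluated at the $b$-specific $\mu^*$, and Theorem~\ref{lemma: single optimization b star} must already be interpreted with this coupling built in (as indeed it is, since \eqref{eq: single optimization b star} is written with $\mu^*$ in the trajectory). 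Once this bookkeeping is made precise, the argument is a clean two-line composition of the two preceding theorems, valid uniformly over all $t$ and $s$. I would close by noting that since the decomposition holds for every $t$ and $s$, and in particular the single behavior policy $\mu^*$ simultaneously optimizes across all states and time steps (as established in the lead-up to Theorem~\ref{lemma: single optimization mu star}), the pair $(\mu^*, b^*)$ solves the bi-level problem in the required pointwise sense.
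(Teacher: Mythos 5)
Your proposal is correct and follows essentially the same route as the paper's proof: the paper chains the two inequalities $\V\left(G^b(\tau^{\mu_{t:T-1}}_{t:T-1})\mid S_t = s\right) \geq \V\left(G^b(\tau^{\mu^*_{t:T-1}}_{t:T-1})\mid S_t = s\right) \geq \V\left(G^{b^*}(\tau^{\mu^*_{t:T-1}}_{t:T-1})\mid S_t = s\right)$, invoking Theorem~\ref{lemma: single optimization mu star} and then Theorem~\ref{lemma: single optimization b star}, which is exactly your inner-then-outer decomposition phrased as inequalities. Your explicit attention to the $b$-dependence of $\mu^*$ is sound bookkeeping (the paper handles the same coupling via the $\mu^{*,b}$ notation in its proof of Theorem~\ref{lemma: single optimization b star}) but does not change the substance of the argument.
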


\begin{proof}
$\forall b, \forall \mu \in \Lambda$, we have $\forall t, \forall s$
\begin{align}
&\V\left(G^b(\tau^{\mu_{t:T-1}}_{t:T-1})\mid S_t = s\right) \\
\geq&\V\left(G^b(\tau^{\mu^*_{t:T-1}}_{t:T-1})\mid S_t = s\right) \explain{Theorem \ref{lemma: single optimization mu star}} \\
\geq&\V\left(G^{b^*}(\tau^{\mu^*_{t:T-1}}_{t:T-1})\mid S_t = s\right). \explain{Theorem \ref{lemma: single optimization b star}}
\end{align}
Thus, $(\mu^*, b^*)$ achieves the minimum value of $\V\left(G^b(\tau^{\mu_{t:T-1}}_{t:T-1})\mid S_t = s\right)$ for all $t$ and $s$.
\end{proof}

\section{Variance Comparison}
\label{section: comparison}
Theorem \ref{lemma: bilevel optimization b star mu star} shows $(\mu^*, b^*)$ is the optimal behavior policy  and 
 baseline function. 
This means $(\mu^*, b^*)$  is superior to any other choice of $(\mu, b)$. In this section, we further quantify its superiority.
We quantify the variance reduction in  
reinforcement learning. We show that the variance reduction compounds over each step, bringing substantial advantages. 
Specifically, we provide a theoretical comparison of our method—the doubly optimal estimator—with the following baselines: (1) the on-policy Monte Carlo estimator, (2) the offline data informed estimator \citep{liu2024efficient}, and (3) the doubly robust estimator  \citep{jiang2015doubly}.
We use $u^{b^*}_t$ to denote $u_t$ \eqref{def: u} using $b^*$ as the baseline function. 
First, we compare our off-policy estimator with the on-policy Monte Carlo estimator (ON).
\begin{restatable}{theorem}
{reOOdoubleOOoptimalOOsmallerOOthanOOonOOPDIS}
\label{lemma: double optimal smaller than on PDIS}
$\forall t, s$, 
\begin{align}
&\V\left(\pdisg(\tau^{\pi_{t:T-1}}_{t:T-1})\mid S_t = s\right) - \V\left(G^{b^*}(\tau^{\mu^*_{t:T-1}}_{t:T-1})\mid S_t = s\right) \\
=&\textstyle \underbrace{\V_{A_t \sim \pi_t}\qty(\sqrt{u^{b^*}_t(S_t,A_t)} \mid S_t=s) }_{\hypertarget{4.1}{(4.1)}}+ \underbrace{\V_{A_t \sim \pi_t}\left(q_{\pi,t}(S_t,A_t)\mid S_t = s\right) }_{\hypertarget{4.2}{(4.2)}} + \underbrace{\delta^{\text{ON, ours}}_t(s) }_{\hypertarget{4.3}{(4.3)}},
\end{align}
where $\delta^{\text{ON, ours}}_t(s)\doteq0$ for $t=T-1$ and $\forall t\in [T-2]$, $\delta^{\text{ON, ours}}_t(s)\doteq $
\begin{align}
\! \! \! \! \! \! \!\textstyle  \E_{A_t \sim \pi_t, S_{t+1}}\left[ \V\left(\pdisg(\tau^{\pi_{t+1:T-1}}_{t+1:T-1}) \mid S_{t+1}\right) - \V\left(G^{b^*}(\tau^{\mu^*_{t+1:T-1}}_{t+1:T-1}) \mid S_{t+1}\right)  \mid S_t = s\right].
\end{align}
Moreover, we prove $\forall t, s, \delta^{\text{ON, ours}}_t(s) \geq 0$ meaning the variance reduction in future steps is compounded into the current step.
\end{restatable}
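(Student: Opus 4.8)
The plan is to establish the stated decomposition by computing both conditional variances via the law of total variance and subtracting; the equation is a single-step identity (the future difference is absorbed into $\delta^{\mathrm{ON,ours}}_t$), so no induction is needed for it, only direct verification at $t=T-1$ and a generic step. Write $w_t^{\mathrm{ON}}(s)\doteq\V(\pdisg(\tau^{\pi_{t:T-1}}_{t:T-1})\mid S_t=s)$ and $w_t^*(s)\doteq\V(G^{b^*}(\tau^{\mu^*_{t:T-1}}_{t:T-1})\mid S_t=s)$. For the on-policy term, note $\mu=\pi$ forces $\rho_t\equiv 1$, so the estimator is the plain return. Applying the law of total variance first over $A_t\sim\pi_t$ and then over $S_{t+1}$, and using $\E[G_t\mid S_t=s,A_t=a]=q_{\pi,t}(s,a)$ together with the definition of $\nu_{\pi,t}$ in \eqref{def: nu}, I would obtain
\[
w_t^{\mathrm{ON}}(s)=\V_{A_t\sim\pi_t}\qty(q_{\pi,t}(s,A_t)\mid S_t=s)+\E_{A_t\sim\pi_t}\qty[\nu_{\pi,t}(s,A_t)+\E_{S_{t+1}}[w_{t+1}^{\mathrm{ON}}(S_{t+1})\mid s,A_t]],
\]
whose first summand is already term (4.2).

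The more delicate computation is $w_t^*(s)$, and I expect this to be the main obstacle. The crucial simplification is that $b^*_t=q_{\pi,t}$. Conditioning on $(S_t,A_t)=(s,a)$ and using the unbiasedness Lemma~\ref{lemma: Lambda unbiasedness}, the conditional mean of $G^{b^*}$ collapses to $\rho_t\qty(q_{\pi,t}(s,a)-b^*_t(s,a))+\bar{b}_t(s)=v_{\pi,t}(s)$, which is \emph{constant} in $a$; hence the outer-variance term in the law of total variance vanishes and $w_t^*(s)=\E_{A_t\sim\mu^*_t}[\V(G^{b^*}\mid s,A_t)]$. A second application over $S_{t+1}$ gives $\V(G^{b^*}\mid s,a)=\rho_t^2\,u^{b^*}_t(s,a)$, where $u^{b^*}_t(s,a)=\nu_{\pi,t}(s,a)+\E_{S_{t+1}}[w_{t+1}^*(S_{t+1})\mid s,a]$ since the squared-bias term in \eqref{def: u} is zero. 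Substituting the closed form of $\mu^*$ from \eqref{def: mu star}, so that $\rho_t=\big(\sum_{a'}\pi_t(a'|s)\sqrt{u^{b^*}_t(s,a')}\big)/\sqrt{u^{b^*}_t(s,a)}$, makes $\rho_t^2\,u^{b^*}_t(s,a)$ independent of $a$ and yields the clean form
\[
w_t^*(s)=\qty(\E_{A_t\sim\pi_t}[\sqrt{u^{b^*}_t(s,A_t)}\mid S_t=s])^2.
\]
Degenerate states where $u^{b^*}_t(s,\cdot)\equiv 0$ give $w_t^*(s)=0$ and are handled by the uniform convention for $\mu^*$, not affecting the identity. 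Much of this may be recoverable from the proofs of Theorems~\ref{lemma: single optimization mu star} and~\ref{lemma: single optimization b star}, which I would reuse.

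Subtracting the two expressions is then elementary bookkeeping. Using $u^{b^*}_t(s,a)=\nu_{\pi,t}(s,a)+\E_{S_{t+1}}[w_{t+1}^*(S_{t+1})\mid s,a]$ and the identity $\V_{A_t\sim\pi_t}(\sqrt{u^{b^*}_t})=\E_{A_t\sim\pi_t}[u^{b^*}_t]-\qty(\E_{A_t\sim\pi_t}[\sqrt{u^{b^*}_t}])^2$, term (4.1) exactly converts $-\qty(\E_{A_t\sim\pi_t}[\sqrt{u^{b^*}_t}])^2$ into $(4.1)-\E_{A_t\sim\pi_t}[\nu_{\pi,t}+\E_{S_{t+1}}w_{t+1}^*]$. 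The $\nu_{\pi,t}$ contributions from the two recursions then cancel, leaving $\V_{A_t\sim\pi_t}(q_{\pi,t})$ (term (4.2)), term (4.1), and the residual $\E_{A_t\sim\pi_t}\E_{S_{t+1}}[w_{t+1}^{\mathrm{ON}}(S_{t+1})-w_{t+1}^*(S_{t+1})\mid s,A_t]$, which is precisely $\delta^{\mathrm{ON,ours}}_t(s)$ (term (4.3)). The base case $t=T-1$ is checked directly, giving $\delta^{\mathrm{ON,ours}}_{T-1}\equiv 0$.

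Finally, for the ``moreover'' claim $\delta^{\mathrm{ON,ours}}_t(s)\ge 0$, I would observe that the on-policy estimator equals $G^{0}(\tau^{\pi_{t:T-1}}_{t:T-1})$ (zero baseline with $\mu=\pi$), a feasible point of the bi-level problem since $\pi\in\Lambda^-\subseteq\Lambda$ by Lemma~\ref{lemma: Lambda broadness}. The global optimality of $(\mu^*,b^*)$ in Theorem~\ref{lemma: bilevel optimization b star mu star} then gives $w_{t+1}^*(s')\le w_{t+1}^{\mathrm{ON}}(s')$ for every $s'$, so $\delta^{\mathrm{ON,ours}}_t(s)$ is an expectation of a nonnegative integrand and hence nonnegative. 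The main obstacle throughout is the second step, namely verifying that $\rho_t^2 u^{b^*}_t(s,a)$ is genuinely constant in $a$ under the $\mu^*$ construction and correctly handling the zero-$u$ states, so that $w_t^*$ reduces to the \emph{squared} expectation of $\sqrt{u^{b^*}_t}$ rather than the expectation of $u^{b^*}_t$; everything else is algebra plus an appeal to already-proven results.
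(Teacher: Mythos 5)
Your proposal is correct, and for the main identity it follows essentially the same route as the paper: the paper's Lemma~\ref{lemma: recursive var} and Lemma~\ref{lemma: variance of q - b} are exactly the two law-of-total-variance recursions you derive (your direct exploitation of $b^*_t=q_{\pi,t}$ to kill the outer variance and the squared-bias term is just the specialization of those general lemmas, cf.\ \eqref{eq: u b star}), and your substitution of the closed form of $\mu^*$ to get $w^*_t(s)=\E_{A_t\sim\pi_t}\qty[\sqrt{u^{b^*}_t(s,A_t)}]^2$ is the paper's \eqref{eq: mu star sqrt u}; the final bookkeeping via $\E_\pi[u]-\qty(\E_\pi[\sqrt{u}])^2=\V_\pi(\sqrt{u})$ is identical. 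Where you genuinely diverge is the ``moreover'' claim $\delta^{\text{ON, ours}}_t(s)\ge 0$: the paper proves it by backward induction on $t$, using the just-established decomposition at step $t+1$ (sum of two variances plus $\delta^{\text{ON, ours}}_{t+1}$, each nonnegative by the inductive hypothesis), whereas you invoke the already-proven global optimality of $(\mu^*,b^*)$ (Theorem~\ref{lemma: bilevel optimization b star mu star}) at the feasible point $(b=0,\mu=\pi)$, noting $\pi\in\Lambda^-\subseteq\Lambda^{b=0}$ by Lemma~\ref{lemma: Lambda broadness} and that $G^0(\tau^{\pi}_{t+1:T-1})$ coincides with the on-policy return. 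Your argument is valid and non-circular (Theorem~\ref{lemma: bilevel optimization b star mu star} does not depend on this theorem) and is arguably cleaner, since it makes the nonnegativity a one-line corollary of optimality rather than a separate induction; the paper's induction, on the other hand, is self-contained and exhibits explicitly how the per-step variance gaps compound, which is the interpretive point the theorem is driving at.
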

Its proof is in Appendix \ref{appendix: double optimal smaller than on PDIS}. 
In Theorem \ref{lemma: double optimal smaller than on PDIS}, we show that the variance reduction of our method includes three sources. 
First, a part of the future variance \hyperlink{4.1}{(4.1)} is eliminated by choosing an optimal behavior policy $\mu^*$. 
Second, the variance of the $q$ function \hyperlink{4.2}{(4.2)} is eliminated by the optimal baseline function $b^*$. Third, the variance reduction in the future step \hyperlink{4.3}{(4.3)}  is compounded into the current step.

Next, the following theorem quantifies the variance reduction of our method compared with the offline data informed (ODI) method in \citet{liu2024efficient}.
Because the behavior policy $\mu^*$ is tailored for the baseline function $b$, 
we use $\mu^{*,b}$ to denote $\mu^*$ with a baseline function $b$ and $\mu^{*,\pdis}$ to denote $\mu^*$ with no baseline function (i.e., the plain PDIS estimator considered in offline data informed (ODI) method \citep{liu2024efficient}). 

\begin{restatable}{theorem}
{reOOdoubleOOoptimalOOsmallerOOthanOOoptimalOOmu}
\label{lemma: double optimal smaller than optimal mu}
$\forall t, s$, 
\begin{align}
&\V\left(\pdisg(\tau^{\mu^{*,\pdis}_{t:T-1}}_{t:T-1})\mid S_t = s\right) - \V\left(G^{b^*}(\tau^{\mu^{*,b^*}_{t:T-1}}_{t:T-1})\mid S_t = s\right) \\
\geq&\underbrace{ \V_{A_t \sim \mu^{*,\pdis}_t}\left(\rho_t q_{\pi,t}(S_t,A_t)\mid S_t\right)}_{\hypertarget{5.1}{(5.1)}}+\underbrace{ \delta^{\text{ODI, ours}}_t(s)}_{\hypertarget{5.2}{(5.2)}},
\end{align}
where $\delta^{\text{ODI, ours}}_t(s)\doteq0$ for $t=T-1$ and $\forall t\in [T-2]$, $\textstyle\delta^{\text{ODI, ours}}_t(s)\doteq $
\begin{align}
\textstyle \E_{A_t \sim \mu^{*,\pdis}_t, S_{t+1}}\left[\rho_t^2 \left[\V\left(\pdisg(\tau^{\mu^{*,\pdis}_{t+1:T-1}}_{t+1:T-1}) \mid S_{t+1}\right) - \V\left(\pdisg(\tau^{\mu^{*,b^*}_{t+1:T-1}}_{t+1:T-1}) \mid S_{t+1}\right) \right] \mid S_t\right]. 
\end{align}
Moreover, we prove $\forall t, s, \delta^{\text{ODI, ours}}_t(s) \geq 0$ meaning the variance reduction in future steps is compounded into the current step.
\end{restatable}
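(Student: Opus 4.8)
The plan is to peel off one time step with the law of total variance, reduce the comparison to a $\rho_t^2$-weighted difference of future conditional variances, and then run a backward induction on $t$.

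First I would apply $\V(X\mid S_t=s)=\E_{A_t}[\V(X\mid S_t,A_t)]+\V_{A_t}(\E[X\mid S_t,A_t])$ to each of the two estimators, conditioning on $A_t$. For $\pdisg$ collected under $\mu^{*,\pdis}$, the unbiasedness of the continuation (Lemma \ref{lemma: Lambda unbiasedness}) gives $\E[\pdisg\mid S_t=s,A_t=a]=\rho_t\,q_{\pi,t}(s,a)$, so its between-action term is exactly $(5.1)$. For $G^{b^*}$, the choice $b^*_t=q_{\pi,t}$ in \eqref{def: b star} cancels the residual $q_{\pi,t}-b^*_t$, forcing $\E[G^{b^*}\mid S_t=s,A_t=a]=\bar{b}^*_t(s)=v_{\pi,t}(s)$, which is constant in $a$; hence its between-action term vanishes. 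This already produces $(5.1)$ and isolates the two within-action (expected future) variances.

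The obstacle is that the two estimators are collected under different behavior policies, so their ratios $\rho_t$ differ and the within-action variances are not directly comparable. I would remove this mismatch with the optimality of $\mu^*$ (Theorem \ref{lemma: single optimization mu star}): since $\mu^{*,b^*}$ minimizes $\V(G^{b^*}(\tau^{\mu}_{t:T-1})\mid S_t=s)$ over $\mu\in\Lambda$, passing to the common behavior policy $\mu^{*,\pdis}$ can only increase this variance, producing a nonnegative slack. Once both estimators sit under $\mu^{*,\pdis}$ the ratios coincide, and a second application of the law of total variance conditioning on $S_{t+1}$ makes the two $\V_{S_{t+1}}(v_{\pi,t+1})=\nu_{\pi,t}$ contributions cancel in the difference, leaving exactly the $\rho_t^2$-weighted difference of future conditional variances that defines $\delta^{\text{ODI, ours}}_t(s)$.

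I would establish $\delta^{\text{ODI, ours}}_t(s)\ge 0$ and close the argument by the same backward induction. The base case $t=T-1$ is immediate, as there is no future and $\delta^{\text{ODI, ours}}_{T-1}(s)=0$. For the inductive step the step-$(t+1)$ statement shows the bracketed future-variance difference is nonnegative; multiplying by $\rho_t^2\ge 0$ and taking the expectation over $A_t\sim\mu^{*,\pdis}$ and $S_{t+1}$ preserves the sign, so the future reduction compounds into $\delta^{\text{ODI, ours}}_t(s)\ge 0$ and into the current step.

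The hard part, I expect, is not the algebra but the support bookkeeping forced by the enlarged set $\Lambda$. Because $\mu^{*,b^*}$ and $\mu^{*,\pdis}$ may zero out different actions—$u^{b^*}$ can vanish where the PDIS quantity $q_{\pi,t}^2$ does not—the importance ratios under the two policies live on different supports, so I must verify that every ratio in the recursion is well-defined and that both the optimality comparison (Theorem \ref{lemma: single optimization mu star}) and unbiasedness (Lemma \ref{lemma: Lambda unbiasedness}) are invoked on the correct constraint set before the two estimators can be placed under a common policy.
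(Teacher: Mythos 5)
Your decomposition machinery is sound: the between-action terms you compute are exactly (5.1) and zero, the support issue you flag is the right one and does hold (since $u^{\pdis}_t \geq u^{b^*}_t \geq 0$ pointwise, $\mu^{*,\pdis}_t(a|s)=0 \implies \pi_t(a|s)u^{\pdis}_t(s,a)=0 \implies \pi_t(a|s)u^{b^*}_t(s,a)=0$, which is \eqref{eq: Lambda b in Lambda b star} with $b=0$, so $\mu^{*,\pdis}$ lies in the constraint set for $b^*$), and the backward-induction frame is fine. The genuine gap is in the step where you ``remove the mismatch.'' You invoke Theorem \ref{lemma: single optimization mu star} at the \emph{whole-trajectory} level, replacing $\V\left(G^{b^*}(\tau^{\mu^{*,b^*}_{t:T-1}}_{t:T-1})\mid S_t = s\right)$ by the larger $\V\left(G^{b^*}(\tau^{\mu^{*,\pdis}_{t:T-1}}_{t:T-1})\mid S_t = s\right)$ so that both estimators sit under a common policy. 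After that substitution every \emph{future} step of our estimator is also evaluated under $\mu^{*,\pdis}$, so your second law-of-total-variance application leaves $\tilde{\delta}_t(s) \doteq \E_{A_t \sim \mu^{*,\pdis}_t, S_{t+1}}\left[\rho_t^2\left[\V\left(\pdisg(\tau^{\mu^{*,\pdis}_{t+1:T-1}}_{t+1:T-1})\mid S_{t+1}\right) - \V\left(G^{b^*}(\tau^{\mu^{*,\pdis}_{t+1:T-1}}_{t+1:T-1})\mid S_{t+1}\right)\right]\mid S_t\right]$, \emph{not} the $\delta^{\text{ODI, ours}}_t(s)$ of the statement, whose second entry is the future variance under ours' own policy $\mu^{*,b^*}$ (the paper identifies that entry with $\V\left(G^{b^*}(\tau^{\mu^{*,b^*}_{t+1:T-1}}_{t+1:T-1})\mid S_{t+1}\right)$ via \eqref{eq: u b star}). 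By the very optimality you invoked, $\V\left(G^{b^*}(\tau^{\mu^{*,\pdis}_{t+1:T-1}}_{t+1:T-1})\mid S_{t+1}\right) \geq \V\left(G^{b^*}(\tau^{\mu^{*,b^*}_{t+1:T-1}}_{t+1:T-1})\mid S_{t+1}\right)$, hence $\tilde{\delta}_t(s) \leq \delta^{\text{ODI, ours}}_t(s)$: you prove the left-hand side is at least $(5.1)+\tilde{\delta}_t(s)$, which is strictly weaker than the claimed bound whenever $\mu^{*,\pdis}$ is not future-optimal for $G^{b^*}$. No rearrangement can repair this, because your Step A erases $\mu^{*,b^*}$ from the picture before $\delta$ is ever formed.

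The fix is to apply optimality only at the \emph{current} step, which is what the paper does. Expand $\V\left(\pdisg(\tau^{\mu^{*,\pdis}_{t:T-1}}_{t:T-1})\mid S_t = s\right)$ by Lemma \ref{lemma: recursive var} with $b=0$, extract (5.1) by Lemma \ref{lemma: variance of q - b}, and then add and subtract the future variance of ours under $\mu^{*,b^*}$ inside the expectation---this is what creates $\delta^{\text{ODI, ours}}_t(s)$ exactly. What remains is the single-step quantity $\E_{A_t \sim \mu^{*,\pdis}_t}\left[\rho_t^2 u^{b^*}_t(S_t,A_t)\mid S_t\right]$, where $u^{b^*}_t = \nu_{\pi,t} + \E_{S_{t+1}}\left[\V\left(G^{b^*}(\tau^{\mu^{*,b^*}_{t+1:T-1}}_{t+1:T-1})\mid S_{t+1}\right)\mid S_t, A_t\right]$ already carries ours' future under its own policy; bound it by Jensen's inequality together with \eqref{eq: unbiasedness sqrt u} (this is where your observation $\mu^{*,\pdis}\in\Lambda^{b^*}$ is actually needed) to get $\E_{A_t \sim \pi_t}\left[\sqrt{u^{b^*}_t(S_t,A_t)}\mid S_t\right]^2$, which by \eqref{eq: mu star sqrt u} and \eqref{eq: variance recursive doubly T-2} equals $\V\left(G^{b^*}(\tau^{\mu^{*,b^*}_{t:T-1}}_{t:T-1})\mid S_t = s\right)$. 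Subtracting gives the statement with the correct $\delta^{\text{ODI, ours}}_t$, and its nonnegativity then follows by your induction, since the bracketed difference inside $\delta^{\text{ODI, ours}}_t$ is precisely the theorem's left-hand side at step $t+1$.
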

Its proof is in Appendix \ref{appendix: double optimal smaller than optimal mu}. 
The variance reduction of our estimator includes two sources. First, the variance of the $q$ function \hyperlink{5.1}{(5.1)} is eliminated. Second, the variance reduction in the future step \hyperlink{5.2}{(5.2)} is compounded to the current step. 

We also quantify the variance reduction of our estimator with the doubly robust (DR) estimator defined in \citet{jiang2015doubly}. Since \citet{jiang2015doubly} does not specify any candidate behavior policy, we leverage the conventional wisdom, supposing they use the canonical target policy $\pi$ as the data-collecting policy. 

\begin{restatable}{theorem}
{reOOdoubleOOoptimalOOsmallerOOthanOOoptimalOOb}
\label{lemma: double optimal smaller than optimal b}
$\forall t, s$, 
\begin{align}
& \V\left(G^{b^*}(\tau^{\pi_{t:T-1}}_{t:T-1})\mid S_t = s\right) - \V\left(G^{b^*}(\tau^{\mu^*_{t:T-1}}_{t:T-1})\mid S_t = s\right) \\
=&\underbrace{\V_{A_t \sim \pi_t}\qty(\sqrt{u^{b^*}_t(S_t,A_t)} \mid S_t=s)}_{\hypertarget{6.1}{(6.1)}}  +\underbrace{ \delta^{\text{DR, ours}}_t(s)}_{\hypertarget{6.2}{(6.2)}} ,
\end{align}
where $\delta^{\text{DR, ours}}_t(s)\doteq 0$ for $t=T-1$ and $\forall t\in [T-2]$, $\delta^{\text{DR, ours}_t}(s)\doteq$
\begin{align}
\! \! \! \! \! \! \!\textstyle \E_{A_t \sim \pi_t, S_{t+1}}\left[\V\left(G^{b^*}(\tau^{\pi_{t+1:T-1}}_{t+1:T-1}) \mid S_{t+1}\right) - \V\left(G^{b^*}(\tau^{\mu^*_{t+1:T-1}}_{t+1:T-1}) \mid S_{t+1}\right) \mid S_t\right].
\end{align}
Moreover, we prove $\forall t, s, \delta^{\text{DR, ours}}_t(s) \geq 0$ meaning the variance reduction in future steps is compounded into the current step.
\end{restatable}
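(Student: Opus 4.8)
The plan is to evaluate both conditional variances in closed form via a single application of the law of total variance, exploiting the structural simplification that the choice $b^*_t(s,a)=q_{\pi,t}(s,a)$ produces. First I would show that with this baseline the conditional mean given the current action is constant in the action: conditioning on $S_t=s,A_t=a$ and applying Lemma~\ref{lemma: Lambda unbiasedness} to the tail $\tau^{\mu_{t+1:T-1}}_{t+1:T-1}$ gives $\E[G^{b^*}(\tau^{\mu_{t:T-1}}_{t:T-1})\mid S_t=s,A_t=a]=\rho_t\qty(q_{\pi,t}(s,a)-b^*_t(s,a))+\bar{b}^*_t(s)=\bar{b}^*_t(s)=v_{\pi,t}(s)$, independent of $a$. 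Hence the between-action term in the total-variance decomposition over $A_t$ vanishes for \emph{both} behavior policies under comparison, so each conditional variance reduces to the expected within-action variance $\E_{A_t\sim\mu_t}[\V(G^{b^*}\mid S_t=s,A_t)\mid S_t=s]$.

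Next I would expand this within-action variance. Because $R_{t+1}$, $\rho_t$, $b^*_t(s,a)$ and $\bar{b}^*_t(s)$ are deterministic given $(s,a)$, only the tail is random, giving $\V(G^{b^*}\mid S_t=s,A_t=a)=\rho_t^2\,\V(G^{b^*}(\tau^{\mu_{t+1:T-1}}_{t+1:T-1})\mid S_t=s,A_t=a)$; a second law of total variance over $S_{t+1}$ combined with $\E[G^{b^*}\mid S_{t+1}=s']=v_{\pi,t+1}(s')$ turns the inner variance into $\nu_{\pi,t}(s,a)+\sum_{s'}p(s'|s,a)\V(G^{b^*}(\tau^{\mu_{t+1:T-1}}_{t+1:T-1})\mid S_{t+1}=s')$. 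For $\mu=\mu^*$ this is exactly $u^{b^*}_t(s,a)$ by~\eqref{def: u} (the squared $q-b^*$ term vanishing), and substituting the closed form $\mu^*_t(a|s)\propto\pi_t(a|s)\sqrt{u^{b^*}_t(s,a)}$ collapses $\sum_a\frac{\pi_t(a|s)^2}{\mu^*_t(a|s)}u^{b^*}_t(s,a)$ into $\qty(\E_{A_t\sim\pi_t}[\sqrt{u^{b^*}_t(S_t,A_t)}\mid S_t=s])^2$. For the doubly-robust choice $\mu=\pi$ we have $\rho_t=1$, yielding $\E_{A_t\sim\pi_t}[\nu_{\pi,t}(S_t,A_t)+\sum_{s'}p(s'|\cdot)\V(G^{b^*}(\tau^{\pi_{t+1:T-1}}_{t+1:T-1})\mid S_{t+1}=s')\mid S_t=s]$.

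To finish, I would subtract the two expressions and use the add-and-subtract trick with $\E_{A_t\sim\pi_t}[u^{b^*}_t(S_t,A_t)\mid S_t=s]$: the pair $\E_{A_t\sim\pi_t}[u^{b^*}_t]-\qty(\E_{A_t\sim\pi_t}[\sqrt{u^{b^*}_t}])^2$ is precisely $\V_{A_t\sim\pi_t}(\sqrt{u^{b^*}_t(S_t,A_t)}\mid S_t=s)$, the term \hyperlink{6.1}{(6.1)}, while the residual $\E_{A_t\sim\pi_t}[w^\pi_t(S_t,A_t)-u^{b^*}_t(S_t,A_t)\mid S_t=s]$ equals $\delta^{\text{DR, ours}}_t(s)$ once the shared $\nu_{\pi,t}$ cancels, leaving only the difference of the $\pi$-tail and $\mu^*$-tail future variances. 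Non-negativity of $\delta^{\text{DR, ours}}_t(s)$ then follows because $\pi\in\Lambda^-\subseteq\Lambda$ (Lemma~\ref{lemma: Lambda broadness}), so Theorem~\ref{lemma: single optimization mu star} gives $\V(G^{b^*}(\tau^{\pi_{t+1:T-1}}_{t+1:T-1})\mid S_{t+1}=s')\ge\V(G^{b^*}(\tau^{\mu^*_{t+1:T-1}}_{t+1:T-1})\mid S_{t+1}=s')$ pointwise, so the integrand of $\delta$ is non-negative. The base case $t=T-1$ is immediate since $u^{b^*}_{T-1}=0$ and both estimators are deterministic (equal to $v_{\pi,T-1}$) given $S_{T-1}$, making every term zero.

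The main obstacle I expect is the algebraic collapse for $\mu^*$: one must track the normalizing constant $\sum_a\pi_t(a|s)\sqrt{u^{b^*}_t(s,a)}$ carefully, handle the degenerate convention (uniform $\mu^*_t$) when it is zero, and verify that the future-variance term inside $u^{b^*}_t$ is genuinely the $\mu^*$-tail variance rather than the $\pi$-tail variance, since conflating the two is exactly what distinguishes term \hyperlink{6.1}{(6.1)} from $\delta^{\text{DR, ours}}_t(s)$. Everything else is a mechanical double application of the law of total variance together with the unbiasedness lemma.
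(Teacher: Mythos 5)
Your proposal is correct, and its computational core coincides with the paper's: both arguments reduce each conditional variance, via the law of total variance, to an expectation over $A_t$ of $\rho_t^2\qty(\nu_{\pi,t}+\text{future variance})$, collapse the $\mu^*$ case to $\E_{A_t\sim\pi_t}\qty[\sqrt{u^{b^*}_t(S_t,A_t)}\mid S_t]^2$ exactly as in \eqref{eq: mu star sqrt u}, and isolate $\delta^{\text{DR, ours}}_t$ by the same add-and-subtract step (compare your final display with \eqref{eq: variance recursive doubly T-2} and \eqref{eq: variance recursive dr T-2}). There are two genuine differences in routing. First, the paper reaches these expressions by specializing its $b$-generic machinery (Lemma~\ref{lemma: recursive var} together with Lemma~\ref{lemma: variance of q - b}), which it reuses verbatim in Theorems~\ref{lemma: double optimal smaller than on PDIS} and \ref{lemma: double optimal smaller than optimal mu}; you instead exploit the choice $b^*=q_{\pi,t}$ up front, observing that $\E\qty[G^{b^*}\mid S_t,A_t]=v_{\pi,t}(S_t)$ is constant in the action so the between-action term of the total-variance decomposition dies immediately for \emph{both} behavior policies --- a shorter, self-contained derivation, at the cost of not producing lemmas usable elsewhere. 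Second, for $\delta^{\text{DR, ours}}_t(s)\geq 0$ the paper runs an induction over $t$ (mirroring \eqref{eq: delta on non negative}), whereas you invoke Theorem~\ref{lemma: single optimization mu star} pointwise at $S_{t+1}=s'$, using Lemma~\ref{lemma: Lambda broadness} to place $\pi\in\Lambda^-\subseteq\Lambda$; this is cleaner and avoids the induction entirely, and it is legitimate since Theorem~\ref{lemma: single optimization mu star} is already established independently of the comparison theorems. One small point to make explicit in a full write-up: the vanishing of the between-action term for the $\mu^*$-trajectory needs the unbiasedness of the tail under $\mu^*$, i.e.\ $\mu^*\in\Lambda$ (shown at the start of the paper's proof of Theorem~\ref{lemma: single optimization mu star}) fed into Lemma~\ref{lemma: Lambda unbiasedness}, which you use implicitly.
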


Its proof is in Appendix \ref{appendix: double optimal smaller than optimal b}. Similarly, there are two sources of the variance reduction for our method. First, with an optimal behavior policy $\mu^*$, we eliminate a part of the future variance  \hyperlink{6.1}{(6.1)}. Second, the variance reduction in the future steps  \hyperlink{6.2}{(6.2)} is compounded to the current step.

\section{Learning Closed-Form Behavior Policies}
\label{section: learning behavior polcies}

\begin{algorithm}
\caption{Doubly Optimal (DOpt) Policy Evaluation}
\label{alg: DOpt algorithm}
\begin{algorithmic}[1]
\STATE {\bfseries Input:} 
a target policy $\pi$,  \\
\hspace{1cm} an offline dataset $\mathcal{D} = \qty{(t_i,s_i,a_i,r_i,s'_i)}_{i=1}^m$
\STATE {\bfseries Output:} a behavior policy $\mu^*$, \\
\hspace{1.25cm} a baseline function $b^*$
\STATE Approximate $q_{\pi,t}$ from $\mathcal{D}$ using offline RL methods (e.g. Fitted Q-Evaluation)
\STATE Construct $\nu_{\pi,t}$ from $\mathcal{D}$  by \eqref{eq: nu learning target} 
\STATE Construct $\mathcal{D}_{\nu} \doteq \qty{(t_i,s_i,a_i,\nu_i,s'_i)}_{i=1}^m$
\STATE Approximate $u_{\pi,t}$ from  $\mathcal{D}_{\nu}$ by Lemma~\ref{lemma: u recursive form}
\STATE \textbf{Return:} 
$\mu^*_t(a|s) \propto \pi_t(a|s)\sqrt{u_{\pi, t}(s, a)}$,  $b^*_t(s,a) = q_{\pi,t}(s,a)$\\
\end{algorithmic}
\end{algorithm}

In this section,
we present an efficient Algorithm~\ref{alg: DOpt algorithm} to learn our doubly optimal method including the optimal behavior policy $\mu^*$ and the optimal baseline function $b^*$. Specifically, we learn $(\mu^*, b^*)$ from offline data pairs. 
By definition \eqref{def: b star}, we can apply any off-the-shelf offline policy evaluation methods to learn  $b^*_t(s,a) \doteq q_{\pi,t}(s,a)$ (e.g., Fitted Q-Evaluation \citep{le2019batch}).
By \eqref{def: mu star}, $\mu_t^*(a|s)  \propto \textstyle \pi_{t}(a|s) \sqrt{u_{\pi, t}(s, a)}$, where $u$ is defined in \eqref{def: u} as
\begin{align}
\!\!\! \textstyle u_{\pi, t}(s, a) \doteq \qty(q_{\pi, t}(s, a) - b_t(s,a))^2 + \nu_{\pi, t}(s, a) + \textstyle{\sum_{s'} p(s'|s, a)\V\left(G^b(\tau^{\mu^*_{t+1:T-1}}_{t+1:T-1}) \mid S_{t+1} = s'\right)}.
\end{align}
Learning $u$ from this perspective is very inefficient because it requires the approximation of the complicated variance term $\V\left(G^b(\tau^{\mu^*_{t+1:T-1}}_{t+1:T-1}) \mid S_{t+1} = s'\right)$ regarding future trajectories. To solve this problem, we propose the following recursive form of $u$. 

\begin{restatable}[Recursive form of $u$]{lemma}{reOOuOOrecursiveOOform}
\label{lemma: u recursive form}
With $b = b^*$, when $t = T-1$, $\forall s, a$, 
$u_{\pi, t}(s, a) = 0$, when $t \in [T-2]$, $\forall s, a$, 
\begin{align}\label{eq: u recursive form}
u_{\pi, t}(s, a) = \textstyle \nu_{\pi,t}(s,a) +  \sum_{s', a'} \rho_{t+1} p(s'|s, a) \pi_{t+1}(a'|s')  u_{\pi, t+1}(s', a').
\end{align}
\end{restatable}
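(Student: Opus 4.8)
The plan is to work with the conditional variance
$w_t(s) \doteq \V\!\left(G^{b^*}(\tau^{\mu^*_{t:T-1}}_{t:T-1}) \mid S_t = s\right)$
and to relate it to $u_{\pi,t}$ at the \emph{same} time step; the recursion \eqref{eq: u recursive form} then drops out by substituting this relation into the definition \eqref{def: u}. The base case $t = T-1$ is immediate: with $b = b^*$, definition \eqref{def: u} gives $u_{\pi,T-1}(s,a) = (q_{\pi,T-1}(s,a) - b^*_{T-1}(s,a))^2$, which vanishes by \eqref{def: b star}. For $t \in [T-2]$, plugging $b = b^*$ into \eqref{def: u} annihilates the squared term, leaving
\[
u_{\pi,t}(s,a) = \nu_{\pi,t}(s,a) + \textstyle\sum_{s'} p(s'|s,a)\, w_{t+1}(s').
\]
Hence it suffices to prove the single-step identity, for every time step $t'$ and state $s$,
\[
w_{t'}(s) = \textstyle\sum_{a} \rho_{t'}\,\pi_{t'}(a|s)\, u_{\pi,t'}(s,a),
\]
with $\rho_{t'}$ evaluated at the summed action; substituting this for $w_{t+1}(s')$ yields exactly \eqref{eq: u recursive form}.

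The core of the argument, and the step I expect to be the main obstacle, is establishing this identity via two applications of the law of total variance. First I would condition on $A_{t'}$: writing $X_{t'} \doteq G^{b^*}(\tau^{\mu^*_{t':T-1}}_{t':T-1})$, one has $w_{t'}(s) = \E_{A_{t'} \sim \mu^*_{t'}}\!\left[\V(X_{t'} \mid S_{t'}=s, A_{t'})\right] + \V_{A_{t'}}\!\left(\E[X_{t'} \mid S_{t'}=s, A_{t'}]\right)$. The crucial observation is that the second (between-action) term is \emph{zero} precisely because $b = b^*$: using the recursion \eqref{eq: GQbase recursive}, the unbiasedness Lemma~\ref{lemma: Lambda unbiasedness}, and the Bellman relation $r(s,a) + \E[v_{\pi,t'+1}(S_{t'+1}) \mid s,a] = q_{\pi,t'}(s,a)$, I get $\E[X_{t'} \mid S_{t'}=s, A_{t'}=a] = \rho_{t'}(q_{\pi,t'}(s,a) - b^*_{t'}(s,a)) + \bar b^*_{t'}(s) = v_{\pi,t'}(s)$, which is independent of $a$. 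So the conditional mean collapses to a constant and the between-action variance disappears.

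It then remains to evaluate the within-action term. Given $(S_{t'},A_{t'}) = (s,a)$, every factor of $X_{t'}$ in \eqref{eq: GQbase recursive} except $X_{t'+1}$ is a constant (the reward $R_{t'+1} = r(s,a)$ is deterministic, and $\rho_{t'}, b^*_{t'}(s,a), \bar b^*_{t'}(s)$ are fixed), so $\V(X_{t'} \mid s,a) = \rho_{t'}^2\, \V(X_{t'+1} \mid s,a)$. A second law of total variance over $S_{t'+1}$, again using unbiasedness for the conditional mean and the definition \eqref{def: nu} of $\nu_{\pi,t'}$, gives $\V(X_{t'+1}\mid s,a) = \nu_{\pi,t'}(s,a) + \sum_{s'} p(s'|s,a)\,w_{t'+1}(s') = u_{\pi,t'}(s,a)$ — this is exactly the quantity appearing in \eqref{def: u}. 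Combining the two steps, $w_{t'}(s) = \sum_a \mu^*_{t'}(a|s)\,\rho_{t'}^2\,u_{\pi,t'}(s,a) = \sum_a \frac{\pi_{t'}(a|s)^2}{\mu^*_{t'}(a|s)}\,u_{\pi,t'}(s,a) = \sum_a \rho_{t'}\pi_{t'}(a|s)\,u_{\pi,t'}(s,a)$, which is the identity. The one technical point to handle carefully is the coverage convention: for actions with $\mu^*_{t'}(a|s) = 0$, the construction \eqref{def: mu star} forces $\pi_{t'}(a|s)u_{\pi,t'}(s,a) = 0$, so these terms contribute nothing to either side (they are absent from the $\E_{A_{t'}\sim\mu^*_{t'}}$ average and carry a zero numerator in the $\rho_{t'}\pi_{t'}u_{\pi,t'}$ sum), and the degenerate uniform case is checked directly.
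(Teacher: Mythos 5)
Your proposal is correct and takes essentially the same route as the paper's proof: after using $b^*=q_\pi$ to kill the $(q_{\pi,t}-b_t)^2$ term, both arguments reduce the claim to the identity $\V\left(G^{b^*}(\tau^{\mu^*_{t+1:T-1}}_{t+1:T-1})\mid S_{t+1}=s'\right)=\sum_{a'}\rho_{t+1}\,\pi_{t+1}(a'|s')\,u_{\pi,t+1}(s',a')$, obtained from the law-of-total-variance decomposition (over the action, then over the next state), the fact that $\bar{b}^*_{t+1}=v_{\pi,t+1}$ makes the conditional-mean contribution vanish, and the closed form of $\mu^*$. The only difference is presentational: the paper cites its already-established Lemma~\ref{lemma: recursive var} (whose own proof is exactly your two applications of the law of total variance), whereas you re-derive that special case inline.
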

Its proof is in Appendix~\ref{appendix: u recursive form}.
This lemma allows us to learn $u$ recursively without approximating the complicated trajectory variance.
Subsequently, the desired optimal behavior policy 
$\mu^*$ can be easily calculated using \eqref{def: mu star}.
To ensure broad applicability, we utilize the behavior policy-agnostic offline learning setting \citep{nachum2019dualdice}, in which the offline data consists of $
\qty{(t_i, s_i, a_i, r_i, s_i')}_{i=1}^m$, 
with $m$ previously logged data tuples. Those tuples can be generated by various unknown behavior policies, and they are not required to form a complete trajectory. In the $i$-th data tuple, $t_i$ represents the time step, $s_i$ is the state at time step $t_i$, $a_i$ is the action executed, $r_i$ is the sampled reward, and $s_i'$ is the successor state. 
In this paper, we learn $(\mu^*, b^*)$ from cheaply available offline data using Fitted $Q$ -Evaluation (FQE, \citep{le2019batch}), but our framework is ready to integrate any state-of-the-art offline policy evaluation technique.
As for constructing $\nu$, we use the learned $q$ function and $r_i$, $s'_i$
from the data tuples, according to the derivation \eqref{eq: nu learning target} in Appendix~\ref{append: experiment}.


\section{Empirical Results}\label{sec: experiment}
In this section, we show the empirical comparison between our methods and three baselines: \tb{(1)} the on-policy Monte Carlo estimator, \tb{(2)} the offline data informed estimator (ODI, \citet{liu2024efficient}), and \tb{(3)} the doubly robust estimator (DR, \citet{jiang2015doubly}).
In the doubly robust estimator, because they do not design a tailored behavior policy, we leverage the conventional wisdom to use the target policy $\pi$ as the behavior policy. 
Given previously logged offline data, we learn our optimal behavior policy and the optimal baseline tuple $(\mu^{*}, b^*)$ using Algorithm~\ref{alg: DOpt algorithm}. All baseline methods 
learn their required quantities from the same offline dataset to ensure fair comparisons.
We use the behavior policy $\mu^{*}$ for data collection and the baseline $b^*$ for data processing. Since our method reduces variance in both the data-collecting and the data-processing phases, we name our method doubly optimal (DOpt) policy evaluation. More experiment details are in Appendix~\ref{append: experiment}.

\begin{figure}[H]
\begin{minipage}{0.48\textwidth}
\centering
\includegraphics[width=1\textwidth]{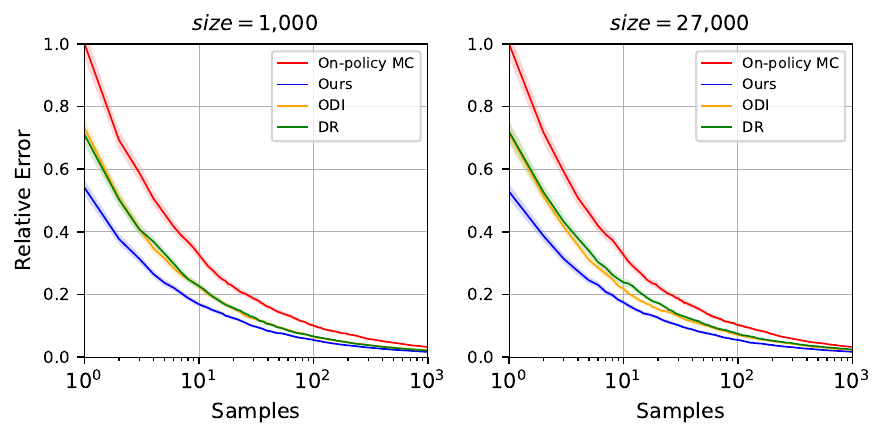}
\caption{Results on Gridworld. 
Each curve is averaged over 900 runs (30 target policies, each having 30 independent runs).
Shaded regions denote standard errors and are invisible for some curves because they are too small.}
\label{fig:gridworld}
\end{minipage}
\hfill
\begin{minipage}{0.48\textwidth}
\begin{table}[H]
\centering
\begin{tabular}{lllll}
\toprule
Env & On-policy & Ours  & ODI & DR \\
Size &  MC & &  &  \\
\midrule
1,000 & 1.000 & \textbf{0.274} & 0.467 & 0.450 \\
27,000 & 1.000 & \textbf{0.283} & 0.481 & 0.541 \\
\bottomrule
\end{tabular}
\caption{Relative variance of estimators on Gridworld. The relative variance is defined as the variance of each estimator divided by the variance of the on-policy Monte Carlo estimator. Numbers are averaged over 900 independent runs (30 target policies, each having 30 independent runs).}
\label{table: gridworld variance ratio}
\end{table}
\end{minipage}
\end{figure}

\textbf{Gridworld:} 
We begin by conducting experiments in Gridworld   with $n^3$ states,
i.e., an $n \times n$ grid with $n$ as the time horizon.
The number of states in this Gridworld environment  scales cubically with $n$, offering a suitable tool to test algorithm scalability. We choose Gridworld with $n^3 = 1,000$ and $n^3 = 27,000$, which are the largest Gridworld environment tested among related works \citep{jiang2015doubly, hanna2017data, liu2024efficient}.
We use randomly generated reward functions with $30$ randomly generated target policies.
The offline data is generated by various unknown policies to simulate cheaply available but segmented offline data.
Because MC methods use each episode as one empirical return sample, we view each episode as one online sample.
We report the \emph{relative error} of the four methods against the number of online samples. 
This relative error is the estimation error normalized by the estimation error of the on-policy Monte Carlo estimator after the first episode.  
Thus, the relative error of the on-policy Monte Carlo estimator starts from $1$.

Figure~\ref{fig:gridworld} shows our method outperforms all baselines by a large margin. 
The blue line in the graph is below all other lines, indicating that our method requires fewer samples to achieve the same accuracy. 
This is because our designed $(\mu^*, b^*)$ substantially reduces estimation variance.
In Table \ref{table: gridworld variance ratio}, we quantify such variance reduction, showing our method reduces variance by around $75\%$ in both Gridworld with size $1,000$ and $27,000$. 

One observation is that DR performs slightly better than ODI in smaller Gridworld but is slightly worse in larger Gridworld, which shows that there might be no dominating relationship between those two methods. Meanwhile, our method is superior to both approaches because the variance reduction of our method comes from both data-collecting and data-processing.

\begin{figure}[t]
\includegraphics[width=1\textwidth]{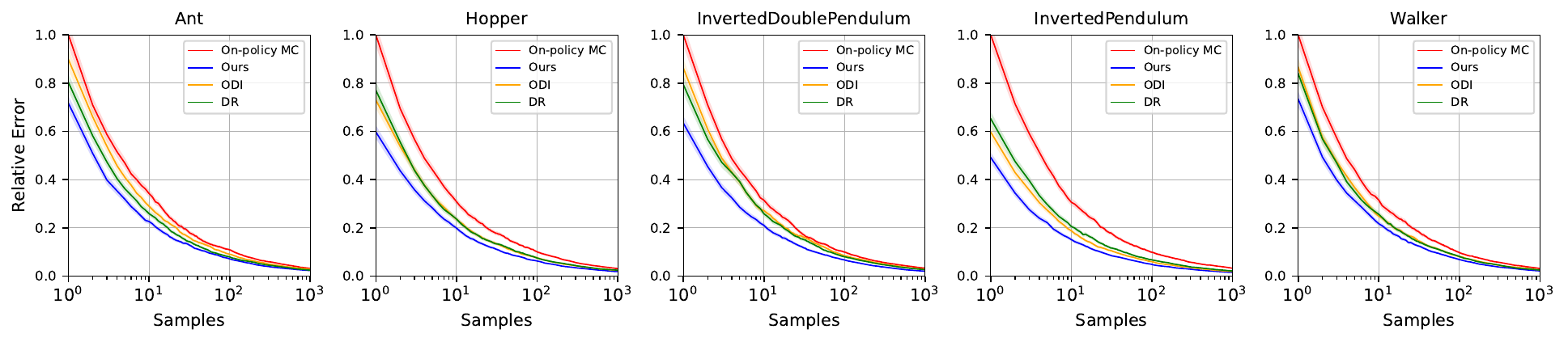}
\centering
\caption{
Results on MuJoCo. 
Each curve is averaged over 900 independent runs (30 target policies, each having 30 independent runs).
Shaded regions denote standard errors and are invisible for some curves because they are too small.
}
\label{fig:mujoco}
\end{figure}

\begin{table*}[t]
\begin{center}

\begin{tabular}{lllllll}
\toprule
 & On-policy MC & \textbf{Ours} & ODI & DR & Saved Episodes Percentage \\
\midrule
Ant & 1000 & \textbf{492} & 810 & 636 & (1000 - 492)/1000 = \textbf{50.8\%} \\
Hopper & 1000 & \textbf{372} & 544 & 582 & (1000 - 372)/1000 = \textbf{62.8\%} \\
I. D. Pendulum & 1000 & \textbf{426} & 727 & 651 & (1000 - 426)/1000 = \textbf{57.4\%} \\
I. Pendulum & 1000 & \textbf{225} & 356 & 439 & (1000 - 225)/1000 = \textbf{77.5\%} \\
Walker & 1000 & \textbf{475} & 705 & 658 & (1000 - 475)/1000 = \textbf{52.5\%} \\
\bottomrule
\end{tabular}

\end{center}
\caption{Episodes needed to achieve the same of estimation accuracy that on-policy Monte Carlo achieves with $1000$ episodes. Standard errors are plotted in Figure \ref{fig:mujoco}. Each number is averaged over $900$ independet runs.
}
\label{table: compare num}
\end{table*}

\textbf{MuJoCo:} We also conduct experiments in MuJoCo robot simulation tasks \citep{todorov2012mujoco}.
MuJoCo is a physics engine containing various stochastic environments, where the goal is to control a robot to achieve different behaviors such as walking, jumping, and balancing. 
Figure \ref{fig:mujoco} shows our method is consistently better than all baselines in various MuJoCo robot environments. 
Table \ref{table: compare num} shows our method requires substantially fewer samples to achieve the same estimation accuracy compared with the on-policy Monte Carlo method. Specifically, our method saves  50.8\% to 77.5\% of online interactions in different tasks, achieving state-of-the-art performance in policy evaluation. 

It is worth mentioning that our method is robust to hyperparameter choices---all hyperparameters required to learn $(\mu^*, b^*)$ in our method are tuned offline and stay the same across all environments.

\section{Conclusion}
Due to the sequential nature of reinforcement learning, policy evaluation often suffers from large variance and requires massive data to achieve the desired level of accuracy.
In this work, we design an optimal combination of data-collecting policy $\mu^*$ and data-processing baseline $b^*$.

Theoretically, we prove our method considers 
larger policy space (Lemma \ref{lemma: Lambda broadness}), and is unbiased (Lemma \ref{lemma: Lambda unbiasedness}) and optimal (Theorem \ref{lemma: bilevel optimization b star mu star}).
Further, we mathematically quantify the superiority of our method in variance reduction compared with existing methods (Theorem \ref{lemma: double optimal smaller than on PDIS}, \ref{lemma: double optimal smaller than optimal mu}, \ref{lemma: double optimal smaller than optimal b}). 

Empirically, compared with previous best-performing methods, we show our method reduces variance substantially in a broad range of environments, achieving state-of-the-art performance in policy evaluation. 

One limitation is, as there is no free lunch, if the offline data size is too small—perhaps consisting of just a single data tuple—the behavior policy and baseline approximated by our method may be inaccurate. In this case, we recommend on-policy evaluation. The future work of our paper is to extend the variance reduction technique to temporal difference learning.

\section*{Acknowledgements}
This work is supported in part by the US National Science Foundation (NSF) under grants III-2128019 and SLES-2331904.

\bibliographystyle{apalike}
\bibliography{bibliography}

\begin{thebibliography}{}

\bibitem[Chen et~al., 2024]{chen2024efficient}
Chen, C., Liu, S., and Zhang, S. (2024).
\newblock Efficient policy evaluation with safety constraint for reinforcement learning.
\newblock {\em ArXiv Preprint}.

\bibitem[Chervonyi et~al., 2022]{chervonyi2022semianalytical}
Chervonyi, Y., Dutta, P., Trochim, P., Voicu, O., Paduraru, C., Qian, C., Karagozler, E., Davis, J.~Q., Chippendale, R., Bajaj, G., et~al. (2022).
\newblock Semi-analytical industrial cooling system model for reinforcement learning.
\newblock {\em ArXiv Preprint}.

\bibitem[Chua et~al., 2018]{chua2018deep}
Chua, K., Calandra, R., McAllister, R., and Levine, S. (2018).
\newblock Deep reinforcement learning in a handful of trials using probabilistic dynamics models.
\newblock In {\em Advances in Neural Information Processing Systems}.

\bibitem[Deisenroth and Rasmussen, 2011]{deisenroth2011pilco}
Deisenroth, M.~P. and Rasmussen, C.~E. (2011).
\newblock {PILCO:} {A} model-based and data-efficient approach to policy search.
\newblock In {\em Proceedings of the International Conference on Machine Learning}.

\bibitem[Farahmand and Szepesv{\'a}ri, 2011]{farahmand2011model}
Farahmand, A.-m. and Szepesv{\'a}ri, C. (2011).
\newblock Model selection in reinforcement learning.
\newblock {\em Machine Learning}.

\bibitem[Fawzi et~al., 2022]{fawzi2022discovering}
Fawzi, A., Balog, M., Huang, A., Hubert, T., Romera-Paredes, B., Barekatain, M., Novikov, A., R~Ruiz, F.~J., Schrittwieser, J., Swirszcz, G., et~al. (2022).
\newblock Discovering faster matrix multiplication algorithms with reinforcement learning.
\newblock {\em Nature}.

\bibitem[Geweke, 1988]{geweke1988antithetic}
Geweke, J. (1988).
\newblock Antithetic acceleration of monte carlo integration in bayesian inference.
\newblock {\em Journal of Econometrics}.

\bibitem[Greensmith et~al., 2004]{greensmith2004variance}
Greensmith, E., Bartlett, P.~L., and Baxter, J. (2004).
\newblock Variance reduction techniques for gradient estimates in reinforcement learning.
\newblock {\em Journal of Machine Learning Research}.

\bibitem[Hanna et~al., 2017]{hanna2017data}
Hanna, J.~P., Thomas, P.~S., Stone, P., and Niekum, S. (2017).
\newblock Data-efficient policy evaluation through behavior policy search.
\newblock In {\em Proceedings of the International Conference on Machine Learning}.

\bibitem[Hesterberg, 1995]{hesterberg1995weighted}
Hesterberg, T. (1995).
\newblock Weighted average importance sampling and defensive mixture distributions.
\newblock {\em Technometrics}.

\bibitem[Huang et~al., 2022]{huang2022cleanrl}
Huang, S., Dossa, R. F.~J., Ye, C., Braga, J., Chakraborty, D., Mehta, K., and Ara{\'u}jo, J.~G. (2022).
\newblock Cleanrl: High-quality single-file implementations of deep reinforcement learning algorithms.
\newblock {\em Journal of Machine Learning Research}.

\bibitem[Jiang and Li, 2016]{jiang2015doubly}
Jiang, N. and Li, L. (2016).
\newblock Doubly robust off-policy value evaluation for reinforcement learning.
\newblock In {\em Proceedings of the International Conference on Machine Learning}.

\bibitem[Jumper et~al., 2021]{jumper2021highly}
Jumper, J., Evans, R., Pritzel, A., Green, T., Figurnov, M., Ronneberger, O., Tunyasuvunakool, K., Bates, R., {\v{Z}}{\'\i}dek, A., Potapenko, A., et~al. (2021).
\newblock Highly accurate protein structure prediction with alphafold.
\newblock {\em Nature}.

\bibitem[Kakutani, 1945]{kakutani1945markoff}
Kakutani, S. (1945).
\newblock Markoff process and the dirichlet problem.
\newblock In {\em Proceedings of the Japan Academy}.

\bibitem[Kalashnikov et~al., 2018]{kalashnikov2018scalable}
Kalashnikov, D., Irpan, A., Pastor, P., Ibarz, J., Herzog, A., Jang, E., Quillen, D., Holly, E., Kalakrishnan, M., Vanhoucke, V., et~al. (2018).
\newblock Scalable deep reinforcement learning for vision-based robotic manipulation.
\newblock In {\em Proceedings of Conference on Robot Learning}.

\bibitem[Kingma and Ba, 2015]{kingma2014adam}
Kingma, D.~P. and Ba, J. (2015).
\newblock Adam: {A} method for stochastic optimization.
\newblock In {\em Proceedings of the International Conference on Learning Representations}.

\bibitem[Koller and Friedman, 2009]{koller2009probabilistic}
Koller, D. and Friedman, N. (2009).
\newblock {\em Probabilistic Graphical Models: Principles and Techniques}.
\newblock MIT Press.

\bibitem[Le et~al., 2019]{le2019batch}
Le, H.~M., Voloshin, C., and Yue, Y. (2019).
\newblock Batch policy learning under constraints.
\newblock In {\em Proceedings of the International Conference on Machine Learning}.

\bibitem[Levine, 2018]{levine2018reinforcement}
Levine, S. (2018).
\newblock Reinforcement learning and control as probabilistic inference: Tutorial and review.
\newblock {\em ArXiv Preprint}.

\bibitem[Levine et~al., 2020]{levine2020offline}
Levine, S., Kumar, A., Tucker, G., and Fu, J. (2020).
\newblock Offline reinforcement learning: Tutorial, review, and perspectives on open problems.
\newblock {\em ArXiv Preprint}.

\bibitem[Li, 2019]{li2019perspective}
Li, L. (2019).
\newblock A perspective on off-policy evaluation in reinforcement learning.
\newblock {\em Frontiers of Computer Science}.

\bibitem[Liu et~al., 2024a]{liu2024ode}
Liu, S., Chen, S., and Zhang, S. (2024a).
\newblock The {ODE} method for stochastic approximation and reinforcement learning with markovian noise.
\newblock {\em ArXiv Preprint}.

\bibitem[Liu et~al., 2024b]{liu2024multi}
Liu, S., Chen, Y., and Zhang, S. (2024b).
\newblock Efficient multi-policy evaluation for reinforcement learning.
\newblock {\em ArXiv Preprint}.

\bibitem[Liu and Zhang, 2024]{liu2024efficient}
Liu, S. and Zhang, S. (2024).
\newblock Efficient policy evaluation with offline data informed behavior policy design.
\newblock In {\em Proceedings of the International Conference on Machine Learning}.

\bibitem[Maei, 2011]{maei2011gradient}
Maei, H.~R. (2011).
\newblock {\em Gradient temporal-difference learning algorithms}.
\newblock PhD thesis, University of Alberta.

\bibitem[Marivate, 2015]{marivate2015improved}
Marivate, V.~N. (2015).
\newblock {\em Improved empirical methods in reinforcement-learning evaluation}.
\newblock Rutgers The State University of New Jersey, School of Graduate Studies.

\bibitem[Mukherjee et~al., 2022]{mukherjee2022revar}
Mukherjee, S., Hanna, J.~P., and Nowak, R.~D. (2022).
\newblock Revar: Strengthening policy evaluation via reduced variance sampling.
\newblock In {\em Proceedings of the Conference in Uncertainty in Artificial Intelligence}.

\bibitem[Nachum et~al., 2019]{nachum2019dualdice}
Nachum, O., Chow, Y., Dai, B., and Li, L. (2019).
\newblock Dualdice: Behavior-agnostic estimation of discounted stationary distribution corrections.
\newblock In {\em Advances in Neural Information Processing Systems}.

\bibitem[Precup et~al., 2000]{precup:2000:eto:645529.658134}
Precup, D., Sutton, R.~S., and Singh, S.~P. (2000).
\newblock Eligibility traces for off-policy policy evaluation.
\newblock In {\em Proceedings of the International Conference on Machine Learning}.

\bibitem[Puterman, 2014]{puterman2014markov}
Puterman, M.~L. (2014).
\newblock {\em Markov decision processes: discrete stochastic dynamic programming}.
\newblock John Wiley \& Sons.

\bibitem[Schulman et~al., 2017]{schulman2017proximal}
Schulman, J., Wolski, F., Dhariwal, P., Radford, A., and Klimov, O. (2017).
\newblock Proximal policy optimization algorithms.
\newblock {\em ArXiv Preprint}.

\bibitem[Sutton, 1988]{sutton1988learning}
Sutton, R.~S. (1988).
\newblock Learning to predict by the methods of temporal differences.
\newblock {\em Machine Learning}.

\bibitem[Sutton, 1990]{sutton1990integrated}
Sutton, R.~S. (1990).
\newblock Integrated architectures for learning, planning, and reacting based on approximating dynamic programming.
\newblock In {\em Proceedings of the International Conference on Machine Learning}.

\bibitem[Sutton and Barto, 2018]{sutton2018reinforcement}
Sutton, R.~S. and Barto, A.~G. (2018).
\newblock {\em Reinforcement Learning: An Introduction (2nd Edition)}.
\newblock MIT press.

\bibitem[Sutton et~al., 2016]{sutton2016emphatic}
Sutton, R.~S., Mahmood, A.~R., and White, M. (2016).
\newblock An emphatic approach to the problem of off-policy temporal-difference learning.
\newblock {\em Journal of Machine Learning Research}.

\bibitem[Sutton et~al., 2008]{sutton2012dyna}
Sutton, R.~S., Szepesv{\'{a}}ri, C., Geramifard, A., and Bowling, M.~H. (2008).
\newblock Dyna-style planning with linear function approximation and prioritized sweeping.
\newblock In {\em Proceedings of the Conference in Uncertainty in Artificial Intelligence}.

\bibitem[Thomas, 2015]{thomas2015safe}
Thomas, P.~S. (2015).
\newblock {\em Safe reinforcement learning}.
\newblock PhD thesis, University of Massachusetts Amherst.

\bibitem[Thomas and Brunskill, 2016]{ope:thomas2016}
Thomas, P.~S. and Brunskill, E. (2016).
\newblock Data-efficient off-policy policy evaluation for reinforcement learning.
\newblock In {\em Proceedings of the International Conference on Machine Learning}.

\bibitem[Thomas and Brunskill, 2017]{thomas2017policy}
Thomas, P.~S. and Brunskill, E. (2017).
\newblock Policy gradient methods for reinforcement learning with function approximation and action-dependent baselines.
\newblock {\em ArXiv Preprint}.

\bibitem[Todorov et~al., 2012]{todorov2012mujoco}
Todorov, E., Erez, T., and Tassa, Y. (2012).
\newblock Mujoco: {A} physics engine for model-based control.
\newblock In {\em Proceedings of the International Conference on Intelligent Robots and Systems}.

\bibitem[Towers et~al., 2024]{towers2024gymnasium}
Towers, M., Kwiatkowski, A., Terry, J., Balis, J.~U., De~Cola, G., Deleu, T., Goul{\textasciitilde a}o, M., Kallinteris, A., Krimmel, M., KG, A., et~al. (2024).
\newblock Gymnasium: A standard interface for reinforcement learning environments.
\newblock {\em ArXiv Preprint}.

\bibitem[Vinyals et~al., 2019]{vinyals2019grandmaster}
Vinyals, O., Babuschkin, I., Czarnecki, W.~M., Mathieu, M., Dudzik, A., Chung, J., Choi, D.~H., Powell, R., Ewalds, T., Georgiev, P., Oh, J., Horgan, D., Kroiss, M., Danihelka, I., Huang, A., Sifre, L., Cai, T., Agapiou, J.~P., Jaderberg, M., Vezhnevets, A.~S., Leblond, R., Pohlen, T., Dalibard, V., Budden, D., Sulsky, Y., Molloy, J., Paine, T.~L., G{\"{u}}l{\c{c}}ehre, {\c{C}}., Wang, Z., Pfaff, T., Wu, Y., Ring, R., Yogatama, D., W{\"{u}}nsch, D., McKinney, K., Smith, O., Schaul, T., Lillicrap, T.~P., Kavukcuoglu, K., Hassabis, D., Apps, C., and Silver, D. (2019).
\newblock Grandmaster level in starcraft {II} using multi-agent reinforcement learning.
\newblock {\em Nature}.

\bibitem[Williams, 1992]{williams1992simple}
Williams, R.~J. (1992).
\newblock Simple statistical gradient-following algorithms for connectionist reinforcement learning.
\newblock {\em Machine Learning}.

\bibitem[Zhang, 2022]{zhang2022thesis}
Zhang, S. (2022).
\newblock {\em Breaking the deadly triad in reinforcement learning}.
\newblock PhD thesis, University of Oxford.

\bibitem[Zhang, 2023]{Zhang_2023}
Zhang, S. (2023).
\newblock A new challenge in policy evaluation.
\newblock In {\em Proceedings of the AAAI Conference on Artificial Intelligence}.

\bibitem[Zhong et~al., 2022]{zhong2022robust}
Zhong, R., Zhang, D., Sch{\"a}fer, L., Albrecht, S.~V., and Hanna, J.~P. (2022).
\newblock Robust on-policy sampling for data-efficient policy evaluation in reinforcement learning.
\newblock In {\em Advances in Neural Information Processing Systems}.

\end{thebibliography}

\newpage
\appendix
\onecolumn

\section{Proofs}

\subsection{Proof of Lemma~\ref{lemma: Lambda broadness}}\label{appendix: Lambda broadness}

\begin{proof}
Given any baseline function $b$, $\forall \mu \in \Lambda^-$, $\forall t, s, a$,
\begin{align}
&\mu_t(a|s) = 0 \\
\implies& \pi_t(a|s) = 0 \explain{Definition of $\Lambda^-$} \\
\implies& \pi_t(a|s)u_{\pi, t}(s, a) = 0.   
\end{align}

This shows $\mu \in \Lambda$. Thus, $\Lambda^- \subseteq \Lambda$.
\end{proof}

\subsection{Proof of Lemma~\ref{lemma: Lambda unbiasedness}}\label{appendix: Lambda unbiasedness}
To prove Lemma \ref{lemma: Lambda unbiasedness}, we first prove the following auxiliary lemma.
\begin{restatable}{lemma}{reOOintermediateOO unbiasedness}
\label{lemma: intermediate unbiasedness}
$\forall b, \forall \mu \in \Lambda$,$\forall t, s$,
\begin{align}
\E_{A_t \sim \mu_t(\cdot \mid S_t)}\left[\rho_t \qty[q_{\pi, t}(S_t, A_t)-b_t(S_t,A_t)]+\bar{b}_t(S_t)  \mid S_t = s \right]= \E_{A_t \sim \pi_t(\cdot \mid S_t)}\left[q_{\pi, t}(S_t, A_t) \mid S_t =s \right].
\end{align}
\end{restatable}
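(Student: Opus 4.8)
The plan is to start from the left-hand side, separate the deterministic baseline correction $\bar{b}_t(s)$ from the importance-weighted term, and then apply a change of measure from $\mu_t$ to $\pi_t$ on the latter. Concretely, since $\bar{b}_t(S_t)$ is a function of $S_t$ alone, conditioning on $S_t = s$ turns it into the constant $\bar{b}_t(s)$, which factors out of the expectation. What remains to handle is
\begin{align}
\E_{A_t \sim \mu_t(\cdot \mid S_t)}\left[\rho_t \qty(q_{\pi, t}(S_t, A_t) - b_t(S_t, A_t)) \mid S_t = s\right],
\end{align}
which I would rewrite as a sum over actions, $\sum_{a : \mu_t(a|s) > 0} \mu_t(a|s)\,\tfrac{\pi_t(a|s)}{\mu_t(a|s)}\,\qty(q_{\pi, t}(s,a) - b_t(s,a))$, so that the $\mu_t(a|s)$ factors cancel and leave the restricted sum $\sum_{a : \mu_t(a|s) > 0} \pi_t(a|s)\,\qty(q_{\pi, t}(s,a) - b_t(s,a))$.

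The crux of the argument, and the step I expect to be the main obstacle, is extending this restricted sum to a sum over \emph{all} actions $a$, i.e., showing that the actions with $\mu_t(a|s) = 0$ contribute nothing even though the ratio $\rho_t$ is formally undefined there. This is exactly where the enlarged space $\Lambda$ is needed. By the definition of $\Lambda$ in \eqref{def: Lambda}, $\mu_t(a|s) = 0$ forces $\pi_t(a|s)\,u_{\pi, t}(s,a) = 0$. Since $u_{\pi, t}(s,a)$ is a sum of non-negative terms that includes $\qty(q_{\pi, t}(s,a) - b_t(s,a))^2$ by \eqref{def: u}, I would argue by cases: whenever $\pi_t(a|s) \neq 0$ the constraint yields $u_{\pi, t}(s,a) = 0$ and hence $q_{\pi, t}(s,a) - b_t(s,a) = 0$, while whenever $\pi_t(a|s) = 0$ the term vanishes trivially. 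Either way $\pi_t(a|s)\,\qty(q_{\pi, t}(s,a) - b_t(s,a)) = 0$ for every $a$ with $\mu_t(a|s) = 0$, so the restricted sum equals the full sum over all $a$.

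Having completed the change of measure, the remaining step is purely algebraic: the full sum equals $\E_{A_t \sim \pi_t}\qty[q_{\pi, t}(S_t, A_t) - b_t(S_t, A_t) \mid S_t = s]$, and by the definition of $\bar{b}_t$ in \eqref{def: Vbase} the baseline piece is exactly $\E_{A_t \sim \pi_t}\qty[b_t(S_t, A_t) \mid S_t = s] = \bar{b}_t(s)$. Adding back the constant $\bar{b}_t(s)$ that was factored out at the start cancels this piece precisely, leaving $\E_{A_t \sim \pi_t}\qty[q_{\pi, t}(S_t, A_t) \mid S_t = s]$, which is the desired right-hand side. I anticipate that the only genuinely delicate point is the careful treatment of the $\mu_t(a|s) = 0$ actions, together with the observation that it is precisely the decomposition of $u_{\pi, t}$ into non-negative summands that makes the enlarged constraint in $\Lambda$ strong enough to preserve unbiasedness; everything else is bookkeeping.
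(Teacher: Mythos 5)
Your proposal is correct and follows essentially the same route as the paper's proof: a change of measure over the support of $\mu_t$, followed by the key observation that the definition of $\Lambda$ together with the non-negativity of the summands in $u_{\pi,t}$ (in particular $\qty(q_{\pi,t}(s,a)-b_t(s,a))^2$) forces $\pi_t(a|s)\qty(q_{\pi,t}(s,a)-b_t(s,a))=0$ whenever $\mu_t(a|s)=0$, so the restricted sum extends to all actions. The only differences are cosmetic bookkeeping (you factor out $\bar{b}_t(s)$ up front, while the paper carries it through the sums), so nothing further is needed.
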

\begin{proof}
We fix any baseline function $b$. Because $\mu \in \Lambda$, $\forall t, s, a$, 
\begin{align}
&\mu_t(a|s) = 0 \\
\implies& \pi_t(a|s)u_{\pi, t}(s, a) = 0 \\
\implies& \pi_t(a|s) \qty[\qty(q_{\pi, t}(s, a) - b_t(s,a))^2 + \nu_{\pi, t}(s, a) + \textstyle{\sum_{s'} p(s'|s, a)\V\left(G^b(\tau^{\mu^*_{t+1:T-1}}_{t+1:T-1}) \mid S_{t+1} = s'\right)}] = 0 \explain{By \eqref{def: u}} \\
\implies& \pi_t(a|s)\qty(q_{\pi, t}(s, a) - b_t(s,a))^2 = 0 
\explain{$\nu_{\pi, t}(s, a)$ and $\textstyle{\sum_{s'} p(s'|s, a)\V\left(G^b(\tau^{\mu^*_{t+1:T-1}}_{t+1:T-1}) \mid S_{t+1} = s'\right)}$ are non-negative}
\\
\implies& \pi_t(a|s)\qty(q_{\pi, t}(s, a) - b_t(s,a)) = 0. \label{eq: pi q - b zero}
\end{align}

Then, we have
\begin{align}
&\E_{A_t \sim \mu_t(\cdot \mid S_t)}\left[\rho_t \qty[q_{\pi, t}(S_t, A_t)-b_t(S_t,A_t)]+\bar{b}_t(S_t)  \mid S_t = s\right]\\
=&\E_{A_t \sim \mu_t(\cdot \mid S_t)}\left[\frac{\pi_t(A_t|S_t)}{\mu_t(A_t|S_t)} \qty[q_{\pi, t}(S_t, A_t)-b_t(S_t,A_t)]+\bar{b}_t(S_t)  \mid S_t = s\right]\\
=&\sum_{a \in \qty{a|\mu_t(a|s) > 0}}\mu_t(a|s)\qty[\frac{\pi_t(a|s)}{\mu_t(a|s)}[q_{\pi,t}(s,a)-b_t(s,a)]+\bar{b}_t(s)]\\
=&\sum_{a \in \qty{a|\mu_t(a|s) > 0}}\pi_t(a|s)[q_{\pi,t}(s,a)-b_t(s,a)]+\sum_{a \in \qty{a|\mu_t(a|s) > 0}}\mu_t(a|s)\bar{b}_t(s)\\
=&\sum_{a \in \qty{a|\mu_t(a|s) > 0}}\pi_t(a|s)[q_{\pi,t}(s,a)-b_t(s,a)]+\bar{b}_t(s)\sum_{a \in \qty{a|\mu_t(a|s) > 0}}\mu_t(a|s)\\
=&\sum_{a \in \qty{a|\mu_t(a|s) > 0}}\pi_t(a|s)[q_{\pi,t}(s,a)-b_t(s,a)]+\bar{b}_t(s)\\
=&\sum_{a \in \qty{a|\mu_t(a|s) > 0}}\pi_t(a|s)[q_{\pi,t}(s,a)-b_t(s,a)]\\
&+\sum_{a \in \qty{a|\mu_t(a|s) = 0}}\pi_t(a|s)[q_{\pi,t}(s,a)-b_t(s,a)]+\bar{b}_t(s) \explain{By \eqref{eq: pi q - b zero}}\\
=&\sum_{a}\pi_t(a|s)[q_{\pi,t}(s,a)-b_t(s,a)]+\bar{b}_t(s) \\
=&\sum_{a}\pi_t(a|s)q_{\pi,t}(s,a)-\bar{b}_t(s)+\bar{b}_t(s)\explain{By \eqref{def: Vbase}}\\
=&\E_{A\sim\pi}\qty[q_{\pi,t}(S_t,A_t) \mid S_t = s].
\end{align}
\end{proof}

Now, we are ready to prove Lemma \ref{lemma: Lambda unbiasedness}.
\reOOLambdaOOunbiasedness*
\begin{proof}
Fix any baseline function $b$.
We proceed via induction.

For $t = T-1$, $\forall \mu \in \Lambda$, $\forall s$, 
we have
\begin{align}
&\E\left[G^b(\tau^{\mu_{t:T-1}}_{t:T-1}) \mid S_t = s\right]  \\
=& \E_{A_t\sim\mu_t(\cdot|S_t)}\left[\rho_t [R_{t+1}-b_t(S_t,A_t)]+\bar{b}_t(S_t)\mid S_t\right]\\
=& \E_{A_t\sim\mu_t(\cdot|S_t)}\left[\rho_t [q_{\pi,t}(S_t,A_t)-b_t(S_t,A_t)]+\bar{b}_t(S_t)\mid S_t\right]\\
=&\E_{A_t \sim \pi_t(\cdot \mid S_t)}\left[q_{\pi,t}(S_t, A_t) \mid S_t\right] \explain{Lemma~\ref{lemma: intermediate unbiasedness}} \\
=& v_{\pi, t}(S_t).
\end{align}
For $t \in [T-2]$,
assuming that Lemma~\ref{lemma: Lambda unbiasedness} holds for $t+1$, 
we have $\forall \mu \in \Lambda$, $\forall s$, 
\begin{align}
  \E\qty[G^b(\tau^{\mu_{t+1:T-1}}_{t+1:T-1})\mid S_{t+1} = s]=v_{\pi,t+1}(s).
  \end{align}
Then, $\forall t$,
\begin{align}
&\E\left[G^b(\tau^{\mu_{t:T-1}}_{t:T-1}) \mid S_t = s \right] \\
=& \E\left[\rho_t \left(R_{t+1} + G^b(\tau^{\mu_{t+1:T-1}}_{t+1:T-1})-b_t(S_t,A_t)\right)+\bar{b}_t(S_t))  \mid S_t\right] \explain{By \eqref{eq: GQbase recursive}}\\
=& \E\left[\rho_t (R_{t+1}-b_t(S_t,A_t))+\bar{b}_t(S_t)  \mid S_t\right] 
+\E\left[\rho_tG^b(\tau^{\mu_{t+1:T-1}}_{t+1:T-1}) \mid S_t\right] \\
=& \E\left[\rho_t (R_{t+1}-b_t(S_t,A_t))+\bar{b}_t(S_t) \mid S_t\right] \\
&+ \E_{A_t \sim \mu_t(\cdot \mid S_t), S_{t+1} \sim p(\cdot \mid S_t, A_t)}\left[ \E\left[\rho_tG^b(\tau^{\mu_{t+1:T-1}}_{t+1:T-1}) \mid S_t, A_t, S_{t+1}\right] \mid S_t \right] \explain{Law of Iterated Expectation} \\
=& \E\left[\rho_t (R_{t+1}-b_t(S_t,A_t))+\bar{b}_t(S_t)  \mid S_t\right]\\
&+ \E_{A_t \sim \mu_t(\cdot \mid S_t), S_{t+1} \sim p(\cdot \mid S_t, A_t)}\left[ \rho_t \E\left[G^b(\tau^{\mu_{t+1:T-1}}_{t+1:T-1}) \mid S_{t+1}\right] \mid S_t \right] \explain{Conditional independence and Markov property}\\
=&  \E\left[\rho_t (R_{t+1}-b_t(S_t,A_t))+\bar{b}_t(S_t)  \mid S_t\right]  + \E_{A_t \sim \mu_t(\cdot \mid S_t), S_{t+1} \sim p(\cdot \mid S_t, A_t)}\left[ \rho_t v_{\pi, t+1}(S_{t+1}) \mid S_t \right]
\explain{Inductive hypothesis} \\
=& \E_{A_t \sim \mu_t(\cdot \mid S_t)}\left[\rho_t \qty[q_{\pi, t}(S_t, A_t)-b_t(S_t,A_t)]+\bar{b}_t(S_t)  \mid S_t\right] \explain{Definition of $q_{\pi, t}$} \\
=& \E_{A_t \sim \pi_t(\cdot \mid S_t)}\left[q_{\pi, t}(S_t, A_t) \mid S_t\right] \explain{Lemma~\ref{lemma: intermediate unbiasedness}} \\
=& v_{\pi, t}(s),
\end{align}
which completes the proof.
\end{proof}

\subsection{Proof of Theorem~\ref{lemma: single optimization mu star}}\label{appendix: b rl optimal}
To prove Theorem~\ref{lemma: single optimization mu star}, we first characterize the variance of the off-policy estimator in a recursive form.

\begin{restatable}{lemma}{relemmaoorecursiveoovar}
\label{lemma: recursive var}
$\forall b$, $\forall \mu \in \Lambda$, 
for $t = T-1$,
\begin{align}
\V\left(G^b(\tau^{\mu_{t:T-1}}_{t:T-1})\mid S_t\right) =\E_{A_t \sim \mu_t}\left[\rho_t^2[ q_{\pi,t}(S_t,A_t)-b_t(S_t,A_t)]^2\mid S_t\right] -  [v_{\pi,t}(S_t)-\bar{b}_t(S_t) ]^2;
\end{align}
For $t \in [T-2]$,
\begin{align}
&\V\left(G^b(\tau^{\mu_{t:T-1}}_{t:T-1})\mid S_t\right) \\
=& \E_{A_t \sim \mu_t}\left[\rho_t^2 \left(\E_{S_{t+1}}\left[\V\left(G^b(\tau^{\mu_{t+1:T-1}}_{t+1:T-1}) \mid S_{t+1}\right) \mid S_t, A_t\right] + \nu_t(S_t, A_t)+[q_{\pi,t}(S_t,A_t)-b_t(S_t,A_t)]^2\right) \mid S_t\right] \\
&- [v_{\pi, t}(S_t)-\bar{b}_t(S_t) ]^2.
\end{align}
\end{restatable}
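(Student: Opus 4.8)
\textbf{Proof plan for Lemma~\ref{lemma: recursive var}.}

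The plan is to establish both the base case ($t = T-1$) and the recursive case ($t \in [T-2]$) directly from the definition of $G^b$ in \eqref{eq: GQbase recursive}, using the variance decomposition $\V(X) = \E[X^2] - (\E[X])^2$ together with the law of total variance. The unbiasedness result in Lemma~\ref{lemma: Lambda unbiasedness} will be the workhorse that lets me replace conditional expectations of $G^b$ by value functions.

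First I would handle the base case. For $t = T-1$, by \eqref{eq: GQbase recursive} we have $G^b(\tau^{\mu_{T-1:T-1}}_{T-1:T-1}) = \rho_t(R_{t+1} - b_t(S_t,A_t)) + \bar{b}_t(S_t)$. I would write $\V(G^b \mid S_t) = \E[(G^b)^2 \mid S_t] - (\E[G^b \mid S_t])^2$. For the second term, Lemma~\ref{lemma: Lambda unbiasedness} gives $\E[G^b \mid S_t] = v_{\pi,t}(S_t)$, but I want the expression in terms of $v_{\pi,t} - \bar{b}_t$; since $R_{t+1} = q_{\pi,t}(S_t,A_t)$ deterministically at the last step and the $\bar{b}_t(S_t)$ term is constant given $S_t$, I would peel off $\bar{b}_t(S_t)$ from inside the variance (it is measurable w.r.t.\ $S_t$) so that $\V(G^b \mid S_t) = \V(\rho_t(R_{t+1}-b_t) \mid S_t)$, then expand the square to get $\E[\rho_t^2(q_{\pi,t}-b_t)^2 \mid S_t] - (\E[\rho_t(q_{\pi,t}-b_t) \mid S_t])^2$ and identify the inner expectation as $v_{\pi,t}(S_t) - \bar{b}_t(S_t)$ via the computation already carried out in Lemma~\ref{lemma: intermediate unbiasedness}.

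For the recursive step $t \in [T-2]$, I would condition on $(S_t, A_t, S_{t+1})$ and apply the law of total variance twice. Writing $G^b_t = \rho_t(R_{t+1} + G^b_{t+1} - b_t(S_t,A_t)) + \bar{b}_t(S_t)$, I would first split $\V(G^b_t \mid S_t)$ into $\E_{A_t}[\V(G^b_t \mid S_t, A_t)] + \V_{A_t}(\E[G^b_t \mid S_t, A_t])$. The second piece, after using the inductive unbiasedness to compute the inner conditional expectation, collapses to $-[v_{\pi,t}(S_t) - \bar{b}_t(S_t)]^2$ plus a term that combines with the first piece. Inside $\V(G^b_t \mid S_t, A_t)$, the factor $\rho_t$ and the additive constants $\bar{b}_t(S_t), b_t(S_t,A_t)$ are fixed, so this equals $\rho_t^2 \V(R_{t+1} + G^b_{t+1} \mid S_t, A_t)$; a second application of the law of total variance over $S_{t+1}$ separates the expected future variance $\E_{S_{t+1}}[\V(G^b_{t+1} \mid S_{t+1}) \mid S_t, A_t]$ from $\V_{S_{t+1}}(v_{\pi,t+1}(S_{t+1}) \mid S_t, A_t) = \nu_{\pi,t}(S_t,A_t)$, using the inductive hypothesis $\E[G^b_{t+1} \mid S_{t+1}] = v_{\pi,t+1}(S_{t+1})$ and noting $R_{t+1}$ is deterministic given $(S_t,A_t)$.

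The main obstacle I anticipate is bookkeeping the cross terms correctly when I recombine the two applications of total variance: specifically, tracking how the $(q_{\pi,t} - b_t)^2$ term emerges from the mean-shift contribution and verifying that the $\bar{b}_t(S_t)$ and $b_t(S_t,A_t)$ corrections cancel cleanly so that the final squared term is exactly $[v_{\pi,t}(S_t) - \bar{b}_t(S_t)]^2$ rather than something involving $b_t$ pointwise. Carefully separating the measurable-given-$S_t$ quantities from the random ones before squaring should resolve this, and the identity from Lemma~\ref{lemma: intermediate unbiasedness} provides the precise form needed.
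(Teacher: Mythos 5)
Your proposal is correct and follows essentially the same route as the paper's proof: the base case via expanding $\V(\rho_t[q_{\pi,t}-b_t]+\bar{b}_t \mid S_t)$ with the mean identified through Lemma~\ref{lemma: intermediate unbiasedness}, and the recursive case via two applications of the law of total variance (first over $A_t$, then over $S_{t+1}$ inside the conditional-variance term), invoking Lemma~\ref{lemma: Lambda unbiasedness} and the Markov property to replace $\E[G^b(\tau^{\mu_{t+1:T-1}}_{t+1:T-1}) \mid S_{t+1}]$ by $v_{\pi,t+1}(S_{t+1})$ so that $\nu_{\pi,t}(S_t,A_t)$ and the expected future variance emerge, exactly as the paper does. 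The only cosmetic difference is that you label the use of unbiasedness at step $t+1$ an ``inductive hypothesis,'' whereas the paper simply cites the already-established Lemma~\ref{lemma: Lambda unbiasedness}; the content is identical.
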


\begin{proof}
When $t\in [T-2]$, we have
\begin{align}
&\V\left(G^b(\tau^{\mu_{t:T-1}}_{t:T-1})\mid S_t\right) \\
=& \E_{A_t}\left[ \V\left(G^b(\tau^{\mu_{t:T-1}}_{t:T-1})\mid S_t, A_t\right)\mid S_t\right] + \V_{A_t}\left(\E\left[G^b(\tau^{\mu_{t:T-1}}_{t:T-1}) \mid S_t, A_t\right]\mid S_t\right) 
\explain{Law of total variance} \\
=& \E_{A_t}\left[ \V\left(\rho_t\qty[r(S_t,A_t)+G^b(\tau^{\mu_{t+1:T-1}}_{t+1:T-1})-b_t(S_t,A_t)]+\bar{b}_t(S_t)\mid S_t, A_t\right)\mid S_t\right] \\&+ \V_{A_t}\left(\E\left[\rho_t\qty[r(S_t,A_t)+G^b(\tau^{\mu_{t+1:T-1}}_{t+1:T-1})-b_t(S_t,A_t)]+\bar{b}_t(S_t))\mid S_t, A_t\right]\mid S_t\right) \explain{By \eqref{eq: GQbase recursive}}\\
=& \E_{A_t}\left[ \rho_t^2\V\left(G^b(\tau^{\mu_{t+1:T-1}}_{t+1:T-1})\mid S_t, A_t\right)\mid S_t\right] \\&+ \V_{A_t}\qty(\rho_t[r(S_t,A_t)+\E\left[ G^b(\tau^{\mu_{t+1:T-1}}_{t+1:T-
1})\mid S_t, A_t\right]-b_t(S_t,A_t)]+\bar{b}_t(S_t)\mid S_t)
\explain{$r(S_t,A_t), b_t(S_t,A_t),\bar{b}_t(S_t)$ are constant given $S_t,A_t$}\\
=&\E_{A_t}\left[ \rho_t^2\V\left(G^b(\tau^{\mu_{t+1:T-1}}_{t+1:T-1})\mid S_t, A_t\right)\mid S_t\right] \\&+ \V_{A_t}\qty(\rho_t[r(S_t,A_t)+\E\left[ v_{\pi,t+1}(S_{t+1})\mid S_t, A_t\right]-b_t(S_t,A_t)]+\bar{b}_t(S_t)\mid S_t)
\explain{Lemma~\ref{lemma: Lambda unbiasedness}}\\
=&\E_{A_t}\left[ \rho_t^2\V\left(G^b(\tau^{\mu_{t+1:T-1}}_{t+1:T-1})\mid S_t, A_t\right)\mid S_t\right] \\&+ \explaind{\V_{A_t}\qty(\rho_t [q_{\pi,t}(S_t,A_t)-b_t(S_t,A_t)]+\bar{b}_t(S_t)\mid S_t).}{Defintion of $q_{\pi,t}$}
\label{eq: tmp4}
\end{align}

For the inner part of the first term, we have
\begin{align}
&\V\left(G^b(\tau^{\mu_{t+1:T-1}}_{t+1:T-1}) \mid S_t, A_t\right) \\
=& \E_{S_{t+1}}\left[\V\left(G^b(\tau^{\mu_{t+1:T-1}}_{t+1:T-1}) \mid S_t, A_t, S_{t+1}\right) \mid S_t, A_t\right] \\
&+ \V_{S_{t+1}}\left(\E\left[G^b(\tau^{\mu_{t+1:T-1}}_{t+1:T-1}) \mid S_t, A_t, S_{t+1}\right]\mid S_t, A_t\right) 
\explain{Law of total variance}
\\
=& \E_{S_{t+1}}\left[\V\left(G^b(\tau^{\mu_{t+1:T-1}}_{t+1:T-1}) \mid S_{t+1}\right) \mid S_t, A_t\right] + \V_{S_{t+1}}\left(\E\left[G^b(\tau^{\mu_{t+1:T-1}}_{t+1:T-1}) \mid S_{t+1}\right]\mid S_t, A_t\right) \explain{Markov property} \\
=& \E_{S_{t+1}}\left[\V\left(G^b(\tau^{\mu_{t+1:T-1}}_{t+1:T-1}) \mid S_{t+1}\right) \mid S_t, A_t\right] 
+ \V_{S_{t+1}}\left(v_{\pi, t+1}(S_{t+1})\mid S_t, A_t\right) \explain{Lemma~\ref{lemma: Lambda unbiasedness}}  \\
=& \explaind{ \E_{S_{t+1}}\left[\V\left(G^b(\tau^{\mu_{t+1:T-1}}_{t+1:T-1}) \mid S_{t+1}\right) \mid S_t, A_t\right]
+ \nu_t(S_t,A_t).}{By \eqref{def: nu}}\label{eq: tmp3}
\end{align}
For the second term, we have
\begin{align}
& \nu_t(S_t,A_t)\\
=&\V_{A_t}\qty(\rho_t[ q_{\pi,t}(S_t,A_t)-b_t(S_t,A_t)]+\bar{b}_t(S_t)\mid S_t) \explain{By \eqref{def: nu}}\\
=& \E_{A_t}\left[\qty(\rho_t[ q_{\pi,t}(S_t,A_t)-b_t(S_t,A_t)]+\bar{b}_t(S_t))^2\mid S_t\right] \\
&- \left(\E_{A_t}\left[\rho_t [q_{\pi, t}(S_t, A_t)-b_t(S_t,A_t)]+\bar{b}_t(S_t) \mid S_t\right]\right)^2 \\
=&  \E_{A_t}\left[\qty(\rho_t[ q_{\pi,t}(S_t,A_t)-b_t(S_t,A_t)]+\bar{b}_t(S_t))^2\mid S_t\right]-  v_{\pi, t}(S_t)^2. \explain{Lemma~\ref{lemma: intermediate unbiasedness}}\\
=&\E_{A_t}\left[\rho_t^2[ q_{\pi,t}(S_t,A_t)-b_t(S_t,A_t)]^2\mid S_t\right]+2\bar{b}_t(S_t) \E_{A_t}\left[\rho_t[q_{\pi,t}(S_t,A_t)-b_t(S_t,A_t)]\mid S_t\right]\\
&+\bar{b}_t(S_t)^2-  v_{\pi, t}(S_t)^2\\
=&\E_{A_t}\left[\rho_t^2[ q_{\pi,t}(S_t,A_t)-b_t(S_t,A_t)]^2\mid S_t\right]+2\bar{b}_t(S_t) \E_{A_t}\left[\rho_t[q_{\pi,t}(S_t,A_t)-b_t(S_t,A_t)] + \bar{b}_t(S_t)\mid S_t\right] \\
&- 2\bar{b}_t(S_t)^2 +\bar{b}_t(S_t)^2-  v_{\pi, t}(S_t)^2\\
=&\E_{A_t}\left[\rho_t^2[ q_{\pi,t}(S_t,A_t)-b_t(S_t,A_t)]^2\mid S_t\right]+2\bar{b}_t(S_t) v_{\pi,t}(S_t)\\
&- \bar{b}_t(S_t)^2 -  v_{\pi, t}(S_t)^2 \explain{Lemma \ref{lemma: Lambda unbiasedness}}\\
=&\E_{A_t}\left[\rho_t^2[ q_{\pi,t}(S_t,A_t)-b_t(S_t,A_t)]^2\mid S_t\right] - (v_{\pi, t}(S_t) - \bar{b}_t(S_t))^2. \label{eq: tmp3.5}
\end{align}

Plugging \eqref{eq: tmp3} and \eqref{eq: tmp3.5} back to~\eqref{eq: tmp4} gives
\begin{align}
&\V\left(G^b(\tau^{\mu_{t:T-1}}_{t:T-1})\mid S_t\right) \\
=& \E_{A_t}\left[ \rho_t^2\V\left(G^b(\tau^{\mu_{t+1:T-1}}_{t+1:T-1})\mid S_t, A_t\right)\mid S_t\right] \\&+ \V_{A_t}\qty(\rho_t [q_{\pi,t}(S_t,A_t)-b_t(S_t,A_t)]+\bar{b}_t(S_t)\mid S_t) \explain{By \eqref{eq: tmp4}}\\
=&\E_{A_t}\left[\rho_t^2 \left(\E_{S_{t+1}}\left[\V\left(G^b(\tau^{\mu_{t+1:T-1}}_{t+1:T-1}) \mid S_{t+1}\right) \mid S_t, A_t\right] + \nu_t(S_t, A_t)\right) \mid S_t\right] \\
&+\V_{A_t}\qty(\rho_t[ q_{\pi,t}(S_t,A_t)-b_t(S_t,A_t)]+\bar{b}_t(S_t)\mid S_t) \explain{By \eqref{eq: tmp3}}\\
=&\E_{A_t}\left[\rho_t^2 \left(\E_{S_{t+1}}\left[\V\left(G^b(\tau^{\mu_{t+1:T-1}}_{t+1:T-1}) \mid S_{t+1}\right) \mid S_t, A_t\right] + \nu_t(S_t, A_t)\right) \mid S_t\right] \\
&+\E_{A_t}\left[\rho_t^2[ q_{\pi,t}(S_t,A_t)-b_t(S_t,A_t)]^2\mid S_t\right] - (v_{\pi, t}(S_t) - \bar{b}_t(S_t))^2\explain{By \eqref{eq: tmp3.5}}\\
=&\E_{A_t}\left[\rho_t^2 \left(\E_{S_{t+1}}\left[\V\left(G^b(\tau^{\mu^*_{t+1:T-1}}_{t+1:T-1}) \mid S_{t+1}\right) \mid S_t, A_t\right] + \nu_t(S_t, A_t)+[ q_{\pi,t}(S_t,A_t)-b_t(S_t,A_t)]^2\right) \mid S_t\right] \\
&- [v_{\pi,t}(S_t)-\bar{b}_t(S_t) ]^2.
\end{align}

When $t =  T-1$, we have
\begin{align}
&\V\left(G^b(\tau^{\mu_{t:T-1}}_{t:T-1})\mid S_t\right) \\
=& \V\left(\rho_t [r(S_t, A_t)-b_t(S_t,A_t)]+\bar{b}_t(S_t) \mid S_t\right)\explain{By \eqref{eq: GQbase recursive}} \\
=& \V\left(\rho_t [q_{\pi,t}(S_t, A_t)-b_t(S_t,A_t)]+\bar{b}_t(S_t) \mid S_t\right) \\
=&\E_{A_t}\left[\rho_t^2[ q_{\pi,t}(S_t,A_t)-b_t(S_t,A_t)]^2\mid S_t\right] - (v_{\pi, t}(S_t) - \bar{b}_t(S_t))^2,\explain{By \eqref{eq: tmp3.5}}\label{eq: variance T-1}
  \end{align}
  
which completes the proof.
\end{proof}

We now restate Theorem \ref{lemma: single optimization mu star} and give its proof.

\restaterloptimal*
\begin{proof}
Fix a baseline function $b$. $\forall t, s, a$,
\begin{align}
&\mu^*_t(a|s) = 0 \\
\implies& \pi_t(a|s)\sqrt{u_{\pi, t}(s, a)} = 0 \explain{By \eqref{def: mu star}}\\
\implies& \pi_t(a|s)u_{\pi, t}(s, a)  = 0.
\end{align}
Thus, $\mu^* \in \Lambda$.

$\forall t$, $\forall \mu \in \Lambda$, we have an  unbiasedness on $\sqrt{u_{\pi,t}(s,a)}$.
\begin{align}
&\E_{A_t \sim \mu_t}\left[\rho_t \sqrt{u_{\pi,t}(S_t,A_t)} \mid S_t = s\right] \\
=& \sum_{a \in \qty{a|\mu_t(a|s) > 0}} \mu_t(a|s) \frac{\pi_t(a|s)}{\mu_t(a|s)}\sqrt{u_{\pi,t}(s,a)} \\
=& \sum_{a \in \qty{a|\mu_t(a|s) > 0}}\pi_t(a|s)\sqrt{u_{\pi,t}(s,a)} \\
=& \sum_{a \in \qty{a|\mu_t(a|s) > 0}}\pi_t(a|s)\sqrt{u_{\pi,t}(s,a)} + \sum_{a \in \qty{a|\mu_t(a|s) =  0}}\pi_t(a|s)\sqrt{u_{\pi,t}(s,a)}  \explain{$\forall \mu \in \Lambda, \mu_t(a|s) =  0 \implies \pi_t(a|s)u_{\pi, t}(s, a)=0$ by \eqref{def: Lambda}}\\
=&\E_{A_t\sim \pi_t}\left[\sqrt{u_{\pi,t}(S_t,A_t)} \mid S_t = s\right]. \label{eq: unbiasedness sqrt u}\\
\end{align}
We prove the optimality of the behavior policy $\mu^*$ via induction.

When $t = T-1$, $\forall \mu \in \Lambda$, $\forall s$, the variance of the off-policy estimator has the following lower bound
\begin{align}
&\V\left(G^b(\tau^{\mu_{t:T-1}}_{t:T-1}) \mid S_t = s\right) \\
=&\E_{A_t \sim \mu_t}\left[\rho_t^2[ q_{\pi,t}(S_t,A_t)-b_t(S_t,A_t)]^2\mid S_t\right] - (v_{\pi, t}(S_t) - \bar{b}_t(S_t))^2  \explain{Lemma \ref{lemma: recursive var}}\\
=&\E_{A_t \sim \mu_t}\left[\rho_t^2 u_{\pi,t}(S_t,A_t) \mid S_t\right] - (v_{\pi, t}(S_t) - \bar{b}_t(S_t))^2  \explain{By \eqref{def: u}}\\
\geq& \E_{A_t \sim \mu_t}\left[\rho_t \sqrt{u_{\pi,t}(S_t,A_t)} \mid S_t\right]^2 - (v_{\pi, t}(S_t) - \bar{b}_t(S_t))^2 \explain{By Jensen's Inequality} \\
=& \E_{A_t \sim \pi_t}\left[ \sqrt{u_{\pi,t}(S_t,A_t)} \mid S_t\right]^2 - (v_{\pi, t}(S_t) - \bar{b}_t(S_t))^2.  \explain{By \eqref{eq: unbiasedness sqrt u}} 
\end{align}

For any state $s$, the variance of the off-policy estimator with the behavior policy $\mu^*$ achieves this lower bound by the following derivations.
\begin{align}
\label{eq: variance mu star half}
&\V\left(G^b(\tau^{\mu^*_{t:T-1}}_{t:T-1}) \mid S_t = s\right) \\
=&\E_{A_t \sim \mu^*_t}\left[\rho_t^2[ q_{\pi,t}(S_t,A_t)-b_t(S_t,A_t)]^2\mid S_t\right] - (v_{\pi, t}(S_t) - \bar{b}_t(S_t))^2  \explain{Lemma \ref{lemma: recursive var}}\\
=&\E_{A_t \sim \mu^*_t}\left[\rho_t^2 u_{\pi,t}(S_t,A_t) \mid S_t\right] - (v_{\pi, t}(S_t) - \bar{b}_t(S_t))^2.  \explain{By \eqref{def: u}}
\end{align}
For the first term, we have
\begin{align}
&\E_{A_t \sim \mu^*_t}\left[\rho_t^2 u_{\pi,t}(S_t,A_t) \mid S_t\right] \\
=&\sum_a \frac{\pi_t(a|S_t)^2}{\mu^*_t(a|S_t)} u_{\pi,t}(S_t,a) \\
=&\sum_a \pi_t(a|S_t)\sqrt{u_{\pi,t}(S_t,a) }\sum_b \pi_t(S_t,b)\sqrt{u_{\pi,t}(S_t,b) } \explain{By \eqref{def: mu star}}\\
=&\E_{A_t \sim \pi_t}\left[ \sqrt{u_{\pi,t}(S_t,A_t)} \mid S_t\right]^2.\label{eq: mu star sqrt u}
\end{align}
Plugging \eqref{eq: mu star sqrt u} back to \eqref{eq: variance mu star half}, we obtain
\begin{align}
&\V\left(G^b(\tau^{\mu^*_{t:T-1}}_{t:T-1}) \mid S_t = s\right) \\  
=&\E_{A_t \sim \mu^*_t}\left[\rho_t^2 u_{\pi,t}(S_t,A_t) \mid S_t\right] - (v_{\pi, t}(S_t) - \bar{b}_t(S_t))^2 \\
=&\E_{A_t \sim \pi_t}\left[ \sqrt{u_{\pi,t}(S_t,A_t)} \mid S_t\right]^2 - (v_{\pi, t}(S_t) - \bar{b}_t(S_t))^2.
\end{align}
Thus, the behavior policy $\mu^*$ defined in \eqref{def: mu star} is an optimal solution to the optimization problems
\begin{align}
\min_{\mu} \quad &\V\left(G^b(\tau^{\mu_{t:T-1}}_{t:T-1})\mid S_t = s\right) \\
\text{s.t.} \quad & \mu \in \Lambda
\end{align}
for $t = T-1$ and all $s$.

When $t \in [T-2]$, we proceed via induction. The inductive hypothesis is that the behavior policy $\mu^*$ is an optimal solution to the optimization problems
\begin{align}
\min_{\mu} \quad &\V\left(G^b(\tau^{\mu_{t+1:T-1}}_{t+1:T-1})\mid S_t = s\right) \\
\text{s.t.} \quad & \mu \in \Lambda
\end{align}
for all $s$. 

To complete the induction,
we prove that the behavior policy $\mu^*$ is an optimal solution to the optimization problems
\begin{align}
\min_{\mu} \quad &\V\left(G^b(\tau^{\mu_{t:T-1}}_{t:T-1})\mid S_t = s\right) \\
\text{s.t.} \quad & \mu \in \Lambda
\end{align}
for  all $s$.

$\forall \mu \in \Lambda$, $\forall s$, the variance of the off-policy estimator has the following lower bound
\begin{align}
&\V\left(G^b(\tau^{\mu_{t:T-1}}_{t:T-1}) \mid S_t = s\right) \\
=& \E_{A_t \sim \mu_t}\left[\rho_t^2 \left(\E_{S_{t+1}}\left[\V\left(G^b(\tau^{\mu_{t+1:T-1}}_{t+1:T-1}) \mid S_{t+1}\right) \mid S_t, A_t\right] + \nu_t(S_t, A_t)+[q_{\pi,t}(S_t,A_t)-b_t(S_t,A_t)]^2\right) \mid S_t\right] \\
&- [v_{\pi, t}(S_t)-\bar{b}_t(S_t) ]^2 \explain{Lemma \ref{lemma: recursive var}}\\
\geq& \E_{A_t \sim \mu_t}\left[\rho_t^2 \left(\E_{S_{t+1}}\left[\V\left(G^b(\tau^{\mu^*_{t+1:T-1}}_{t+1:T-1}) \mid S_{t+1}\right) \mid S_t, A_t\right] + \nu_t(S_t, A_t)+[q_{\pi,t}(S_t,A_t)-b_t(S_t,A_t)]^2\right) \mid S_t\right] \\
&- [v_{\pi, t}(S_t)-\bar{b}_t(S_t) ]^2 \explain{Indutive Hypothesis}\\
=&\E_{A_t \sim \mu_t}\left[\rho_t^2 u_{\pi,t}(S_t,A_t) \mid S_t\right] - (v_{\pi, t}(S_t) - \bar{b}_t(S_t))^2  \explain{By \eqref{def: u}}\\
\geq& \E_{A_t \sim \mu_t}\left[\rho_t \sqrt{u_{\pi,t}(S_t,A_t)} \mid S_t\right]^2 - (v_{\pi, t}(S_t) - \bar{b}_t(S_t))^2 \explain{By Jensen's Inequality} \\
=& \E_{A_t \sim \pi_t}\left[ \sqrt{u_{\pi,t}(S_t,A_t)} \mid S_t\right]^2 - (v_{\pi, t}(S_t) - \bar{b}_t(S_t))^2. \explain{By \eqref{eq: unbiasedness sqrt u}} 
\end{align}

For any state $s$, the variance of the off-policy estimator with the behavior policy $\mu^*$ achieves the lower bound by the following derivations.
\begin{align}
&\V\left(G^b(\tau^{\mu^*_{t:T-1}}_{t:T-1}) \mid S_t = s\right) \\
=& \E_{A_t \sim \mu_t}\left[\rho_t^2 \left(\E_{S_{t+1}}\left[\V\left(G^b(\tau^{\mu^*_{t+1:T-1}}_{t+1:T-1}) \mid S_{t+1}\right) \mid S_t, A_t\right] + \nu_t(S_t, A_t)+[q_{\pi,t}(S_t,A_t)-b_t(S_t,A_t)]^2\right) \mid S_t\right] \\
&- [v_{\pi, t}(S_t)-\bar{b}_t(S_t) ]^2  \explain{Lemma \ref{lemma: recursive var}}\\
=&\E_{A_t \sim \mu^*_t}\left[\rho_t^2 u_{\pi,t}(S_t,A_t) \mid S_t\right] - (v_{\pi, t}(S_t) - \bar{b}_t(S_t))^2  \explain{By \eqref{def: u}}\\
=&\E_{A_t \sim \pi_t}\left[ \sqrt{u_{\pi,t}(S_t,A_t)} \mid S_t\right]^2 - (v_{\pi, t}(S_t) - \bar{b}_t(S_t))^2.\explain{By \eqref{eq: mu star sqrt u}}
\end{align}
Thus, the behavior policy $\mu^*$ defined in \eqref{def: mu star} is an optimal solution to the optimization problems
\begin{align}
\min_{\mu} \quad &\V\left(G^b(\tau^{\mu_{t:T-1}}_{t:T-1})\mid S_t = s\right) \\
\text{s.t.} \quad & \mu \in \Lambda
\end{align}
for $t$ and all $s$.

This completes the induction.
\end{proof}

\subsection{Proof of Theorem~\ref{lemma: single optimization b star}}\label{appendix: single optimization b star}

In this proof, to differentiate different $\mu^*$ with different baseline functions $b$, we use $\mu^{*,b}$ to denote the corresponding $\mu^*$ when using a function $b$ as the baseline function. $G^b$, $u^b_{\pi,t}$, and $\Lambda^b$ are defined following the same convention.
We first present an auxiliary lemma.
\begin{lemma}\label{lemma: variance of q - b}
$\forall b$, $\forall\mu \in \Lambda^b$, $\forall t$,
\begin{align}
&\E_{A_t \sim \mu_t}\left[\rho_t^2[ q_{\pi,t}(S_t,A_t)-b_t(S_t,A_t)]^2\mid S_t\right] -  [v_{\pi,t}(S_t)-\bar{b}_t(S_t) ]^2 \\
=& \V_{A_t \sim \mu_t}\left(\rho_t[ q_{\pi,t}(S_t,A_t)-b_t(S_t,A_t)]\mid S_t\right) \label{eq: variance of q - b}
\end{align}
\end{lemma}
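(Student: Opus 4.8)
The plan is to prove the identity directly from the definition of variance, reducing everything to a single application of Lemma~\ref{lemma: intermediate unbiasedness}. Fix $b$, fix $\mu \in \Lambda^b$, fix $t$, and regard $\rho_t[q_{\pi,t}(S_t,A_t)-b_t(S_t,A_t)]$ as a random variable in $A_t \sim \mu_t(\cdot \mid S_t)$ for a fixed value of $S_t$. By the standard second-moment decomposition of the variance,
\[
\V_{A_t \sim \mu_t}\!\left(\rho_t[q_{\pi,t}(S_t,A_t)-b_t(S_t,A_t)] \mid S_t\right) = \E_{A_t \sim \mu_t}\!\left[\rho_t^2[q_{\pi,t}(S_t,A_t)-b_t(S_t,A_t)]^2 \mid S_t\right] - \left(\E_{A_t \sim \mu_t}\!\left[\rho_t[q_{\pi,t}(S_t,A_t)-b_t(S_t,A_t)] \mid S_t\right]\right)^2.
\]
The first term on the right already matches the first term on the left-hand side of the claimed identity verbatim, so the whole lemma reduces to showing that the squared-mean term equals $[v_{\pi,t}(S_t)-\bar{b}_t(S_t)]^2$.

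To establish this, I would invoke Lemma~\ref{lemma: intermediate unbiasedness}, which gives
\[
\E_{A_t \sim \mu_t}\!\left[\rho_t(q_{\pi,t}(S_t,A_t)-b_t(S_t,A_t)) + \bar{b}_t(S_t) \mid S_t = s\right] = \E_{A_t \sim \pi_t}\!\left[q_{\pi,t}(S_t,A_t) \mid S_t = s\right] = v_{\pi,t}(s),
\]
the last equality being the definition of $v_{\pi,t}$. Since $\bar{b}_t(S_t)$ is constant in $A_t$, it pulls out of the $\mu_t$-expectation and subtracts off, yielding $\E_{A_t \sim \mu_t}[\rho_t(q_{\pi,t}(S_t,A_t)-b_t(S_t,A_t)) \mid S_t = s] = v_{\pi,t}(s) - \bar{b}_t(s)$. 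Squaring both sides produces exactly the squared-mean term needed, and substituting back into the variance decomposition completes the proof.

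There is no real obstacle here; the single point requiring care is a notational one. Lemma~\ref{lemma: intermediate unbiasedness} is stated for $\mu \in \Lambda$, whereas the present statement quantifies over $\mu \in \Lambda^b$, but under the convention introduced at the start of this subsection $\Lambda^b$ is precisely $\Lambda$ built from the baseline $b$ in force, so the hypothesis transfers unchanged. It is worth emphasizing that the mean appearing in the variance is the \emph{off-policy} mean under $\mu_t$ (not $\pi_t$), and it is exactly this off-policy quantity that Lemma~\ref{lemma: intermediate unbiasedness} evaluates; no coverage or well-definedness issues arise because no square root enters and every term is finite given $S_t$. The identity is therefore an immediate consequence of the variance formula together with one prior lemma.
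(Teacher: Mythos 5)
Your proof is correct and is essentially the paper's own argument: both rest on the second-moment decomposition of the variance together with Lemma~\ref{lemma: intermediate unbiasedness} (plus the fact that $\bar{b}_t(S_t)$ is constant given $S_t$) to identify the off-policy mean $\E_{A_t \sim \mu_t}\left[\rho_t[q_{\pi,t}(S_t,A_t)-b_t(S_t,A_t)]\mid S_t\right]$ with $v_{\pi,t}(S_t)-\bar{b}_t(S_t)$, after which the squared-mean terms cancel. The only differences are presentational—the paper expands the second moment as variance plus squared mean while you expand the variance directly—and your observation that $\Lambda^b$ is just $\Lambda$ instantiated with the baseline $b$ matches the notational convention the paper introduces at the start of that appendix subsection.
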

\begin{proof}
$\forall b$, $\forall\mu \in \Lambda^b$, $\forall t$,
\begin{align}
&\E_{A_t \sim \mu_t}\left[\rho_t^2[ q_{\pi,t}(S_t,A_t)-b_t(S_t,A_t)]^2\mid S_t\right] -  [v_{\pi,t}(S_t)-\bar{b}_t(S_t) ]^2 \\
=& \V_{A_t \sim \mu_t}\left(\rho_t[ q_{\pi,t}(S_t,A_t)-b_t(S_t,A_t)]\mid S_t\right) + \E_{A_t \sim \mu_t}\left[\rho_t[ q_{\pi,t}(S_t,A_t)-b_t(S_t,A_t)]\mid S_t\right]^2 \\
&-  [v_{\pi,t}(S_t)-\bar{b}_t(S_t) ]^2 \\
=& \V_{A_t \sim \mu_t}\left(\rho_t[ q_{\pi,t}(S_t,A_t)-b_t(S_t,A_t)]\mid S_t\right) + \E_{A_t \sim \mu_t}\left[\rho_t[ q_{\pi,t}(S_t,A_t)-b_t(S_t,A_t)]\mid S_t\right]^2 \\
&-  [\E_{A_t \sim \mu_t(\cdot \mid S_t)}\left[\rho_t \qty[q_{\pi, t}(S_t, A_t)-b_t(S_t,A_t)]+\bar{b}_t(S_t)  \mid S_t \right] -\bar{b}_t(S_t) ]^2 \explain{Definition of $q_{\pi, t}$, Lemma \ref{lemma: intermediate unbiasedness}} \\
=& \V_{A_t \sim \mu_t}\left(\rho_t[ q_{\pi,t}(S_t,A_t)-b_t(S_t,A_t)]\mid S_t\right) + \E_{A_t \sim \mu_t}\left[\rho_t[ q_{\pi,t}(S_t,A_t)-b_t(S_t,A_t)]\mid S_t\right]^2 \\
&-  \E_{A_t \sim \mu_t(\cdot \mid S_t)}\left[\rho_t \qty[q_{\pi, t}(S_t, A_t)-b_t(S_t,A_t)] \mid S_t \right]^2 \\
=& \V_{A_t \sim \mu_t}\left(\rho_t[ q_{\pi,t}(S_t,A_t)-b_t(S_t,A_t)]\mid S_t\right).
\end{align}
\end{proof}
We now restate Theorem \ref{lemma: single optimization b star} and give its proof.
\reOOsingleOOoptimizationOObOOstar*
\begin{proof}

We prove this by induction on the time step $t$.

When $t = T-1$, $\forall s,$ the optimization problem \eqref{eq: single optimization b star} has the following lower bound
\begin{align}
&\V( G^b(\tau^{\mu^{*,b}_{t:T-1}}_{t:T-1}) \mid S_0 = s ) \\
=& \E_{A_t \sim \mu^{*,b}_t}\left[\rho_t^2[ q_{\pi,t}(S_t,A_t)-b_t(S_t,A_t)]^2\mid S_t\right] -  [v_{\pi,t}(S_t)-\bar{b}_t(S_t) ]^2 \explain{Lemma \ref{lemma: recursive var}}\\
=& \V_{A_t \sim \mu^{*,b}_t}\left(\rho_t[ q_{\pi,t}(S_t,A_t)-b_t(S_t,A_t)]\mid S_t\right) \explain{Lemma~\ref{lemma: variance of q - b}}\\
\geq& 0 \explain{Variance non-negativity}.
\end{align}
When using $b^*$ as the baseline, we achieve this lower bound.
\begin{align}
&\V( G^{b^*}(\tau^{\mu^{*,b^*}_{t:T-1}}_{t:T-1}) \mid S_0 = s ) \\
=& \E_{A_t \sim \mu^{*,b^*}_t}\left[\rho_t^2[ q_{\pi,t}(S_t,A_t)-b^*_t(S_t,A_t)]^2\mid S_t\right] -  [v_{\pi,t}(S_t)-\bar{b}^*_t(S_t) ]^2 \explain{Lemma \ref{lemma: recursive var}}\\
=& \V_{A_t \sim \mu^{*,b^*}_t}\left(\rho_t[ q_{\pi,t}(S_t,A_t)-b^*_t(S_t,A_t)]\mid S_t\right) \explain{Lemma~\ref{lemma: variance of q - b}}\\
=& \V_{A_t \sim \mu^{*,b^*}_t}\left(\rho_t[ q_{\pi,t}(S_t,A_t)-q_{\pi,t}(S_t,A_t)]\mid S_t\right) \explain{Definition of $b^*$ \eqref{def: b star}}\\
=& 0.
\end{align}

When $t \in [T-2]$, we proceed via induction. The inductive hypothesis is that the baseline function $b^*$ is an optimal solution to the optimization problems
\begin{align}
\min_{b} \quad &\V\left(G^b(\tau^{\mu^{*,b}_{t+1:T-1}}_{t+1:T-1})\mid S_t = s\right) 
\end{align}
for all $s$.
Notice that we have 
\begin{align}\label{eq: Lambda b in Lambda b star}
\Lambda^b \subseteq \Lambda^{b^*}.
\end{align}
This is because $\forall s,a$,
\begin{align}
&u^b_{\pi,t}(s,a) \\
=& \qty(q_{\pi, t}(s, a) - b_t(s,a))^2 + \nu_{\pi, t}(s, a) + \textstyle{\sum_{s'} p(s'|s, a)\V\left(G^b(\tau^{\mu^{*,b}_{t+1:T-1}}_{t+1:T-1}) \mid S_{t+1} = s'\right)} \explain{By \eqref{def: u}}\\
\geq&  \nu_{\pi, t}(s, a) + \textstyle{\sum_{s'} p(s'|s, a)\V\left(G^b(\tau^{\mu^{*,b}_{t+1:T-1}}_{t+1:T-1}) \mid S_{t+1} = s'\right)} \\
\geq& \nu_{\pi, t}(s, a) + \textstyle{\sum_{s'} p(s'|s, a)\V\left(G^{b^*}(\tau^{\mu^{*,b^*}_{t+1:T-1}}_{t+1:T-1}) \mid S_{t+1} = s'\right)} \explain{Inductive Hypothesis}\\
\geq& u^{b^*}_{\pi,t}(s,a).
\end{align}
Thus, $\forall \mu \in \Lambda^b$, we have $\forall s,a$
\begin{align}
&\mu(a|s) = 0 \\
\implies& \pi(a|s)u^b_{\pi,t}(s,a) = 0 \\ \implies& \pi(a|s)u^{b^*}_{\pi,t}(s,a) = 0.
\end{align}
This shows 
\begin{align}
\Lambda^b\subseteq \Lambda^{b^*}.
\end{align}

$\forall b$, the optimization problem \eqref{eq: single optimization b star} has the following lower bound
\begin{align}
&\V\qty( G^b(\tau^{\mu^{*,b}_{t:T-1}}_{t:T-1}) \mid S_0 = s ) \\
=&  \E_{A_t \sim \mu^{*,b}_t}\left[\rho_t^2 \left(\E_{S_{t+1}}\left[\V\left(G^b(\tau^{\mu^{*,b}_{t+1:T-1}}_{t+1:T-1}) \mid S_{t+1}\right) \mid S_t, A_t\right] + \nu_t(S_t, A_t)\right) \mid S_t\right] \\
&+\E_{A_t \sim \mu^{*,b}_t}\left[\rho_t^2 [q_{\pi,t}(S_t,A_t)-b_t(S_t,A_t)]^2 \mid S_t\right]- [v_{\pi, t}(S_t)-\bar{b}_t(S_t) ]^2 \explain{Lemma \ref{lemma: recursive var}}\\
\geq &  \E_{A_t \sim \mu^{*,b}_t}\left[\rho_t^2 \left(\E_{S_{t+1}}\left[\V\left(G^{b^*}(\tau^{\mu^{*,b^*}_{t+1:T-1}}_{t+1:T-1}) \mid S_{t+1}\right) \mid S_t, A_t\right] + \nu_t(S_t, A_t)\right) \mid S_t\right] \\
&+\E_{A_t \sim \mu^{*,b}_t}\left[\rho_t^2 [q_{\pi,t}(S_t,A_t)-b_t(S_t,A_t)]^2 \mid S_t\right]- [v_{\pi, t}(S_t)-\bar{b}_t(S_t) ]^2 \explain{Inductive hypothesis}\\
=&  \E_{A_t \sim \mu^{*,b}_t}\left[\rho_t^2 \left(\E_{S_{t+1}}\left[\V\left(G^{b^*}(\tau^{\mu^{*,b^*}_{t+1:T-1}}_{t+1:T-1}) \mid S_{t+1}\right) \mid S_t, A_t\right] + \nu_t(S_t, A_t)\right) \mid S_t\right] \\
&+ \V_{A_t \sim \mu^{*,b}_t}\left(\rho_t[ q_{\pi,t}(S_t,A_t)-b_t(S_t,A_t)]\mid S_t\right) \explain{Lemma~\ref{lemma: variance of q - b}}\\
\geq& \E_{A_t \sim \mu^{*,b}_t}\left[\rho_t^2 \left(\E_{S_{t+1}}\left[\V\left(G^{b^*}(\tau^{\mu^{*,b^*}_{t+1:T-1}}_{t+1:T-1}) \mid S_{t+1}\right) \mid S_t, A_t\right] + \nu_t(S_t, A_t)\right) \mid S_t\right] \explain{Variance non-negativity}\\
=& \E_{A_t \sim \mu^{*,b}_t}\left[\rho_t^2 u^{b^*}_{\pi,t}(S_t, A_t) \mid S_t\right] \explain{By \eqref{def: u}}\\
\geq& \E_{A_t \sim \mu^{*,b}_t}\left[\rho_t \sqrt{u^{b^*}_{\pi,t}(S_t, A_t)} \mid S_t\right]^2 \explain{Jensen's inequality}  \\
=& \E_{A_t \sim \pi_t}\left[ \sqrt{u^{b^*}_{\pi,t}(S_t, A_t)} \mid S_t\right]^2 \explain{By \eqref{eq: unbiasedness sqrt u} and \eqref{eq: Lambda b in Lambda b star}}
\end{align}

When setting $\forall s,\forall a, b^*_t(s,a) \doteq q_{\pi,t}(s,a)$ as the baseline, we achieve this lower bound.
\begin{align}
&\V\qty( G^{b^*}(\tau^{\mu^{*,b^*}_{t:T-1}}_{t:T-1}) \mid S_0 = s ) \\
=& \E_{A_t \sim \mu^{*,b^*}_t}\left[\rho_t^2 \left(\E_{S_{t+1}}\left[\V\left(G^{b^*}(\tau^{\mu^{*,b^*}_{t+1:T-1}}_{t+1:T-1}) \mid S_{t+1}\right) \mid S_t, A_t\right] + \nu_t(S_t, A_t)\right) \mid S_t\right] \\
&+\E_{A_t \sim \mu^{*,b^*}_t}\left[\rho_t^2[ q_{\pi,t}(S_t,A_t)-b^*_t(S_t,A_t)]^2\mid S_t\right] -  [v_{\pi,t}(S_t)-\bar{b}^*_t(S_t) ]^2 \explain{Lemma \ref{lemma: recursive var}}\\
=&\E_{A_t \sim \mu^{*,b^*}_t}\left[\rho_t^2 \left(\E_{S_{t+1}}\left[\V\left(G^{b^*}(\tau^{\mu^{*,b^*}_{t+1:T-1}}_{t+1:T-1}) \mid S_{t+1}\right) \mid S_t, A_t\right] + \nu_t(S_t, A_t)\right) \mid S_t\right] \\
&+ \V_{A_t \sim \mu^{*,b^*}_t}\left(\rho_t[ q_{\pi,t}(S_t,A_t)-b^*_t(S_t,A_t)]\mid S_t\right) \explain{Lemma~\ref{lemma: variance of q - b}}\\
=& \E_{A_t \sim \mu^{*,b^*}_t}\left[\rho_t^2 \left(\E_{S_{t+1}}\left[\V\left(G^{b^*}(\tau^{\mu^{*,b^*}_{t+1:T-1}}_{t+1:T-1}) \mid S_{t+1}\right) \mid S_t, A_t\right] + \nu_t(S_t, A_t)\right) \mid S_t\right] \\
&+ \V_{A_t \sim \mu^{*,b^*}_t}\left(\rho_t[ q_{\pi,t}(S_t,A_t)-q_{\pi,t}(S_t,A_t)]\mid S_t\right) \explain{Definition of $b^*$ \eqref{def: b star}}\\
=& \E_{A_t \sim \mu^{*,b^*}_t}\left[\rho_t^2 \left(\E_{S_{t+1}}\left[\V\left(G^{b^*}(\tau^{\mu^{*,b^*}_{t+1:T-1}}_{t+1:T-1}) \mid S_{t+1}\right) \mid S_t, A_t\right] + \nu_t(S_t, A_t)\right) \mid S_t\right] \\
=& \E_{A_t \sim \mu^{*,b^*}_t}\left[\rho_t^2 u^{b^*}_{\pi,t}(S_t, A_t) \mid S_t\right] \explain{By \eqref{def: u}}\\
=& \E_{A_t \sim \pi_t}\left[ \sqrt{u^{b^*}_{\pi,t}(S_t, A_t)} \mid S_t\right]^2 \explain{By \eqref{def: mu star}}
\end{align}

Thus, $b^*$ is the optimal solution to the optimization problem
\begin{align}
\min_{b}  \quad & \V\left(G^b(\tau^{\mu^*_{t:T-1}}_{t:T-1})\mid S_t = s\right)
\end{align}
for all $t$ and $s$.

\end{proof}

\subsection{Proof of Theorem~\ref{lemma: double optimal smaller than on PDIS}}\label{appendix: double optimal smaller than on PDIS}

\begin{proof}
Use $u^{b^*}_t$ to denote $u_t$ \eqref{def: u} using $b^*$ as the baseline function.  Then, by \eqref{def: u}, for $t=T-1$, 
\begin{align}
u^{b^*}_t(s,a)=\qty[q_{\pi, t}(s, a) - b_t(s,a)]^2=0. \label{eq: u b star T - 1}
\end{align}
For $t\in[T-2]$,
\begin{align}
&u^{b^*}_t(s,a)\\
=& \qty(q_{\pi, t}(s, a) - b_t(s,a))^2 + \nu_{\pi, t}(s, a) + \textstyle{\sum_{s'} p(s'|s, a)\V\left(G^{b^*}(\tau^{\mu^{*,b^*}_{t+1:T-1}}_{t+1:T-1}) \mid S_{t+1} = s'\right)} \explain{By \eqref{def: u}}\\
=&\explaind{\nu_{\pi, t}(s, a) + \textstyle{\sum_{s'} p(s'|s, a)\V\left(G^{b^*}(\tau^{\mu^{*,b^*}_{t+1:T-1}}_{t+1:T-1}) \mid S_{t+1} = s'\right).}}{By \eqref{def: b star}}\label{eq: u b star}\\
\end{align}
The variance of $G^{b^*}(\tau^{\mu^*_{t:T-1}}_{t:T-1})$ has  
$\forall s$, for $t=T-1$,
\begin{align}
\label{eq: variance recursive doubly T-1}
&\V\left(G^{b^*}(\tau^{\mu^*_{t:T-1}}_{t:T-1})\mid S_t = s\right) \\   =&\E_{A_t \sim \mu^*_t}\left[\rho_t^2[ q_{\pi,t}(S_t,A_t)-b^*_t(S_t,A_t)]^2\mid S_t\right] -  [v_{\pi,t}(S_t)-\bar{b^*}_t(S_t) ]^2 \explain{Lemma~\ref{lemma: recursive var}}\\
=& 0\explain{Definition of $b^*$ \eqref{def: b star}}\\
=&  \E_{A_t \sim \mu^*_t}\left[\rho_t^2 u^{b^*}_t(S_t,A_t) \mid S_t\right] \explain{By \eqref{eq: u b star T - 1}}\\
=&\E_{A_t \sim \pi_t}\left[\sqrt{u^{b^*}_t(S_t,A_t)} \mid S_t\right]^2.\explain{By \eqref{eq: mu star sqrt u}}
\end{align}
For $t\in[T-2]$,
\begin{align}
\label{eq: variance recursive doubly T-2}
&\V\left(G^{b^*}(\tau^{\mu^*_{t:T-1}}_{t:T-1})\mid S_t = s\right) \\
=&  \E_{A_t \sim \mu^*_t}\left[\rho_t^2 \left(\E_{S_{t+1}}\left[\V\left(G^{b^*}(\tau^{\mu^*_{t+1:T-1}}_{t+1:T-1}) \mid S_{t+1}\right) \mid S_t, A_t\right] + \nu_t(S_t, A_t)+[q_{\pi,t}(S_t,A_t)-b_t(S_t,A_t)]^2\right) \mid S_t\right] \explain{Lemma \ref{lemma: recursive var}}\\
&- [v_{\pi, t}(S_t)-\bar{b}_t(S_t) ]^2 \\
=&  \E_{A_t \sim \mu^*_t}\left[\rho_t^2 \left(\E_{S_{t+1}}\left[\V\left(G^{b^*}(\tau^{\mu^*_{t+1:T-1}}_{t+1:T-1}) \mid S_{t+1}\right) \mid S_t, A_t\right] + \nu_t(S_t, A_t)\right) \mid S_t\right] \\
&+ \V_{A_t \sim \mu^*_t}\left(\rho_t[ q_{\pi,t}(S_t,A_t)-b^*_t(S_t,A_t)]\mid S_t\right) \explain{Lemma~\ref{lemma: variance of q - b}} \\
=&  \E_{A_t \sim \mu^*_t}\left[\rho_t^2 \left(\E_{S_{t+1}}\left[\V\left(G^{b^*}(\tau^{\mu^*_{t+1:T-1}}_{t+1:T-1}) \mid S_{t+1}\right) \mid S_t, A_t\right] + \nu_t(S_t, A_t)\right) \mid S_t\right] \explain{By \eqref{def: b star}}\\
=&  \E_{A_t \sim \mu^*_t}\left[\rho_t^2 u^{b^*}_t(S_t,A_t) \mid S_t\right] \explain{By \eqref{eq: u b star}}\\
=&\E_{A_t \sim \pi_t}\left[\sqrt{u^{b^*}_t(S_t,A_t)} \mid S_t\right]^2.\explain{By \eqref{eq: mu star sqrt u}}
\end{align}

The variance of $\pdisg(\tau^{\pi_{t:T-1}}_{t:T-1})$ has  
$\forall s$, for $t-T-1$,
\begin{align}
\label{eq: variance recursive onpolicy T-1}
&\V\left(\pdisg(\tau^{\pi_{t:T-1}}_{t:T-1})\mid S_t = s\right) \\
=&\E_{A_t \sim \pi_t}\left[q_{\pi,t}(S_t,A_t)^2\mid S_t\right] -  v_{\pi,t}(S_t)^2\explain{Lemma \ref{lemma: recursive var} with $b=0$ and on-policy}\\
=& \V_{A_t \sim \pi_t}\left(q_{\pi,t}(S_t,A_t)\mid S_t\right) \explain{Lemma~\ref{lemma: variance of q - b} with $b=0$ and on-policy}\\
=&  \E_{A_t \sim \pi_t}\left[u^{b^*}_t(S_t,A_t) \mid S_t\right] +\V_{A_t \sim \pi_t}\left(q_{\pi,t}(S_t,A_t)\mid S_t\right).\explain{By \eqref{eq: u b star}} \\
\end{align}
For $t\in[T-2]$,
\begin{align}
\label{eq: variance recursive onpolicy T-2}
&\V\left(\pdisg(\tau^{\pi_{t:T-1}}_{t:T-1})\mid S_t = s\right) \\
=& \E_{A_t \sim \pi_t}\left[ \E_{S_{t+1}}\left[\V\left(\pdisg(\tau^{\pi_{t+1:T-1}}_{t+1:T-1}) \mid S_{t+1}\right) \mid S_t, A_t\right] + \nu_t(S_t, A_t)+q_{\pi,t}(S_t,A_t)^2 \mid S_t\right] \\
&- v_{\pi, t}(S_t)^2 \explain{Lemma \ref{lemma: recursive var} with $b=0$}\\
=& \E_{A_t \sim \pi_t}\left[ \E_{S_{t+1}}\left[\V\left(\pdisg(\tau^{\pi_{t+1:T-1}}_{t+1:T-1}) \mid S_{t+1}\right) \mid S_t, A_t\right] + \nu_t(S_t, A_t) \mid S_t\right] \\
&+ \V_{A_t \sim \pi_t}\left(q_{\pi,t}(S_t,A_t)\mid S_t\right) \explain{Lemma~\ref{lemma: variance of q - b}}\\
=& \E_{A_t \sim \pi_t}\left[ \E_{S_{t+1}}\left[\V\left(G^{b^*}(\tau^{\mu^{*,b^*}_{t+1:T-1}}_{t+1:T-1}) \mid S_{t+1}\right) \mid S_t, A_t\right] + \nu_t(S_t, A_t) \mid S_t\right] \\
&+ \V_{A_t \sim \pi_t}\left(q_{\pi,t}(S_t,A_t)\mid S_t\right) \\
&+ \E_{A_t \sim \pi_t}\left[ \E_{S_{t+1}}\left[\V\left(\pdisg(\tau^{\pi_{t+1:T-1}}_{t+1:T-1}) \mid S_{t+1}\right) - \V\left(G^{b^*}(\tau^{\mu^{*,b^*}_{t+1:T-1}}_{t+1:T-1}) \mid S_{t+1}\right) \mid S_t, A_t\right]  \mid S_t\right] \\
=&  \E_{A_t \sim \pi_t}\left[u^{b^*}_t(S_t,A_t) \mid S_t\right] \\
&+ \V_{A_t \sim \pi_t}\left(q_{\pi,t}(S_t,A_t)\mid S_t\right) \\
&+ \E_{A_t \sim \pi_t}\left[ \E_{S_{t+1}}\left[\V\left(\pdisg(\tau^{\pi_{t+1:T-1}}_{t+1:T-1}) \mid S_{t+1}\right) - \V\left(G^{b^*}(\tau^{\mu^{*,b^*}_{t+1:T-1}}_{t+1:T-1}) \mid S_{t+1}\right) \mid S_t, A_t\right]  \mid S_t\right] .\explain{By \eqref{eq: u b star}}
\end{align}
Thus, for $t=T-1$, their difference is
\begin{align}
&\V\left(\pdisg(\tau^{\pi_{t:T-1}}_{t:T-1})\mid S_t = s\right) - \V\left(G^{b^*}(\tau^{\mu^*_{t:T-1}}_{t:T-1})\mid S_t = s\right) \\
=& \E_{A_t \sim \pi_t}\left[u^{b^*}_t(S_t,A_t) \mid S_t\right] - \E_{A_t \sim \pi_t}\left[\sqrt{u^{b^*}_t(S_t,A_t)} \mid S_t\right]^2\\
&+ \V_{A_t \sim \pi_t}\left(q_{\pi,t}(S_t,A_t)\mid S_t\right) \explain{By \eqref{eq: variance recursive doubly T-1} and \eqref{eq: variance recursive onpolicy T-1}}\\
=& \V_{A_t \sim \pi_t}\qty(\sqrt{u^{b^*}_t(S_t,A_t)} \mid S_t) + \V_{A_t \sim \pi_t}\left(q_{\pi,t}(S_t,A_t)\mid S_t\right).    
\end{align}
For $t\in[T-2]$,
\begin{align}
&\V\left(\pdisg(\tau^{\pi_{t:T-1}}_{t:T-1})\mid S_t = s\right) - \V\left(G^{b^*}(\tau^{\mu^*_{t:T-1}}_{t:T-1})\mid S_t = s\right)\\
=& \E_{A_t \sim \pi_t}\left[u^{b^*}_t(S_t,A_t) \mid S_t\right] - \E_{A_t \sim \pi_t}\left[\sqrt{u^{b^*}_t(S_t,A_t)} \mid S_t\right]^2\\
&+ \V_{A_t \sim \pi_t}\left(q_{\pi,t}(S_t,A_t)\mid S_t\right) \\
&+ \E_{A_t \sim \pi_t}\left[ \E_{S_{t+1}}\left[\V\left(\pdisg(\tau^{\pi_{t+1:T-1}}_{t+1:T-1}) \mid S_{t+1}\right) - \V\left(G^{b^*}(\tau^{\mu^*_{t+1:T-1}}_{t+1:T-1}) \mid S_{t+1}\right) \mid S_t, A_t\right]  \mid S_t\right]  \explain{By \eqref{eq: variance recursive doubly T-2} and \eqref{eq: variance recursive onpolicy T-2}}  \\
=& \V_{A_t \sim \pi_t}\qty(\sqrt{u^{b^*}_t(S_t,A_t)} \mid S_t) \\
&+ \V_{A_t \sim \pi_t}\left(q_{\pi,t}(S_t,A_t)\mid S_t\right) \\
&+ \E_{A_t \sim \pi_t}\left[ \E_{S_{t+1}}\left[\V\left(\pdisg(\tau^{\pi_{t+1:T-1}}_{t+1:T-1}) \mid S_{t+1}\right) - \V\left(G^{b^*}(\tau^{\mu^*_{t+1:T-1}}_{t+1:T-1}) \mid S_{t+1}\right) \mid S_t, A_t\right]  \mid S_t\right]. \\
\end{align}

We use induction to prove $\forall t, s, \delta^{\text{ON, ours}}_t(s) \geq 0$.
For $t=T-1$,
\begin{align}
\delta^{\text{ON, ours}}_t(s) = 0 \geq 0.
\end{align}

For $t \in [T-2]$, the induction hypothesis is $\forall s$, 
\begin{align}
\delta^{\text{ON, ours}}_{t+1}(s) \geq 0.
\end{align}
This implies $\forall s$, 
\begin{align}
&\V\left(\pdisg(\tau^{\pi_{t+1:T-1}}_{t+1:T-1})\mid S_{t+1} = s\right) - \V\left(G^{b^*}(\tau^{\mu^*_{t+1:T-1}}_{t+1:T-1})\mid S_{t+1} = s\right) \\
=& \V_{A_{t+1} \sim \pi_{t+1}}\qty(\sqrt{u^{b^*}_{t+1}(S_{t+1},A_{t+1})} \mid S_{t+1}=s) \\
&+\V_{A_{t+1} \sim \pi_{t+1}}\left(q_{\pi,t+1}(S_{t+1},A_{t+1})\mid S_{t+1} = s\right) + \delta^{\text{ON, ours}}_{t+1}(s) \\
\geq& 0 \label{eq: v pdis - v geq 0}.
\end{align}

Thus, $\forall s$,
$ $
\begin{align}
&\delta^{\text{ON, ours}}_t(s) \\
=&\E_{A_t \sim \pi_t, S_{t+1}}\left[ \V\left(\pdisg(\tau^{\pi_{t+1:T-1}}_{t+1:T-1}) \mid S_{t+1}\right) - \V\left(G^{b^*}(\tau^{\mu^*_{t+1:T-1}}_{t+1:T-1}) \mid S_{t+1}\right)  \mid S_t = s\right]\\
\geq&\explaind{0.}{by \eqref{eq: v pdis - v geq 0}} \label{eq: delta on non negative}
\end{align}

Thus, $\forall t, s, \delta^{\text{ON, ours}}_t(s) \geq 0$.

\end{proof}

\subsection{Proof of Theorem~\ref{lemma: double optimal smaller than optimal mu}}\label{appendix: double optimal smaller than optimal mu}
\begin{proof}
The variance of $\pdisg(\tau^{\mu^{*,\pdis}_{t:T-1}}_{t:T-1})$ has  
$\forall s$, for $t=T-1$,
\begin{align}
\label{eq: variance recursive pdis T-1}
&\V\left(\pdisg(\tau^{\mu^{*,\pdis}_{t:T-1}}_{t:T-1})\mid S_t = s\right) \\   
=&\E_{A_t \sim \mu^{*,\pdis}_t}\left[\rho_t^2 q_{\pi,t}(S_t,A_t)^2\mid S_t\right] -  v_{\pi,t}(S_t)^2 \explain{Lemma~\ref{lemma: recursive var} and $b=0$}\\
=&\V_{A_t \sim \mu^{*,\pdis}_t}\left(\rho_tq_{\pi,t}(S_t,A_t)\mid S_t\right)\explain{By \eqref{eq: variance of q - b}}\\
=&  \E_{A_t \sim \mu^{*,\pdis}_t}\left[\rho_t^2 u^{b^*}_t(S_t,A_t) \mid S_t\right] +\V_{A_t \sim \mu^{*,\pdis}_t}\left(\rho_tq_{\pi,t}(S_t,A_t)\mid S_t\right)\explain{By \eqref{eq: u b star}}\\
=&  \E_{A_t \sim \pi_t}\left[\sqrt{u^{b^*}_t(S_t,A_t)} \mid S_t\right]^2 +\V_{A_t \sim \mu^{*,\pdis}_t}\left(\rho_t q_{\pi,t}(S_t,A_t)\mid S_t\right).\explain{By \eqref{eq: mu star sqrt u}}
\end{align}
For $t\in[T-2]$,
\begin{align}
\label{eq: variance recursive pdis T-2}
&\V\left(\pdisg(\tau^{\mu^{*,\pdis}_{t:T-1}}_{t:T-1})\mid S_t = s\right) \\
=&  \E_{A_t \sim \mu^{*,\pdis}_t}\left[\rho_t^2 \left(\E_{S_{t+1}}\left[\V\left(\pdisg(\tau^{\mu^{*,\pdis}_{t+1:T-1}}_{t+1:T-1}) \mid S_{t+1}\right) \mid S_t, A_t\right] + \nu_t(S_t, A_t)+q_{\pi,t}(S_t,A_t)^2\right) \mid S_t\right] \\
&- v_{\pi, t}(S_t)^2 \explain{Lemma \ref{lemma: recursive var} with $b=0$}\\
=&  \E_{A_t \sim \mu^{*,\pdis}_t}\left[\rho_t^2 \left(\E_{S_{t+1}}\left[\V\left(\pdisg(\tau^{\mu^{*,\pdis}_{t+1:T-1}}_{t+1:T-1}) \mid S_{t+1}\right) \mid S_t, A_t\right] + \nu_t(S_t, A_t)\right) \mid S_t\right] \\
&+ \V_{A_t \sim \mu^{*,\pdis}_t}\left(\rho_t q_{\pi,t}(S_t,A_t)\mid S_t\right) \explain{Lemma~\ref{lemma: variance of q - b} with $b=0$} \\
=&  \E_{A_t \sim \mu^{*,\pdis}_t}\left[\rho_t^2 \left(\E_{S_{t+1}}\left[\V\left(\pdisg(\tau^{\mu^{*,b^*}_{t+1:T-1}}_{t+1:T-1}) \mid S_{t+1}\right) \mid S_t, A_t\right] + \nu_t(S_t, A_t)\right) \mid S_t\right] \\
&+ \V_{A_t \sim \mu^{*,\pdis}_t}\left(\rho_t q_{\pi,t}(S_t,A_t)\mid S_t\right) \\ 
& + \E_{A_t \sim \mu^{*,\pdis}_t}\left[\rho_t^2 \left(\E_{S_{t+1}}\left[\V\left(\pdisg(\tau^{\mu^{*,\pdis}_{t+1:T-1}}_{t+1:T-1}) \mid S_{t+1}\right) - \V\left(\pdisg(\tau^{\mu^{*,b^*}_{t+1:T-1}}_{t+1:T-1}) \mid S_{t+1}\right) \mid S_t, A_t\right]\right) \mid S_t\right] \\
\geq&  \E_{A_t \sim \mu^{*, b^*}_t}\left[\rho_t^2 \left(\E_{S_{t+1}}\left[\V\left(\pdisg(\tau^{\mu^{*,b^*}_{t+1:T-1}}_{t+1:T-1}) \mid S_{t+1}\right) \mid S_t, A_t\right] + \nu_t(S_t, A_t)\right) \mid S_t\right] \\
&+ \V_{A_t \sim \mu^{*,\pdis}_t}\left(\rho_t q_{\pi,t}(S_t,A_t)\mid S_t\right) \\ 
& + \E_{A_t \sim \mu^{*,\pdis}_t}\left[\rho_t^2 \left(\E_{S_{t+1}}\left[\V\left(\pdisg(\tau^{\mu^{*,\pdis}_{t+1:T-1}}_{t+1:T-1}) \mid S_{t+1}\right) - \V\left(\pdisg(\tau^{\mu^{*,b^*}_{t+1:T-1}}_{t+1:T-1}) \mid S_{t+1}\right) \mid S_t, A_t\right]\right) \mid S_t\right]  \explain{$\forall \mu^{*,\pdis}_t \in \Lambda_t$, $\mu^{*,b^*}_t$ achieves the minimum value of the first term in $\Lambda_t$} \\
=&  \E_{A_t \sim \mu^{*, b^*}_t}\left[\rho_t^2 u^{b^*}_t(S_t,A_t) \mid S_t\right] \\
&+ \V_{A_t \sim \mu^{*,\pdis}_t}\left(\rho_t q_{\pi,t}(S_t,A_t)\mid S_t\right) \\ 
& + \E_{A_t \sim \mu^{*,\pdis}_t}\left[\rho_t^2 \left(\E_{S_{t+1}}\left[\V\left(\pdisg(\tau^{\mu^{*,\pdis}_{t+1:T-1}}_{t+1:T-1}) \mid S_{t+1}\right) - \V\left(\pdisg(\tau^{\mu^{*,b^*}_{t+1:T-1}}_{t+1:T-1}) \mid S_{t+1}\right) \mid S_t, A_t\right]\right) \mid S_t\right]  \explain{By \eqref{eq: u b star}} \\
=&  \E_{A_t \sim \pi_t}\left[\sqrt{u^{b^*}_t(S_t,A_t)} \mid S_t\right]^2 \\
&+ \V_{A_t \sim \mu^{*,\pdis}_t}\left(\rho_t q_{\pi,t}(S_t,A_t)\mid S_t\right) \\ 
& + \E_{A_t \sim \mu^{*,\pdis}_t}\left[\rho_t^2 \left(\E_{S_{t+1}}\left[\V\left(\pdisg(\tau^{\mu^{*,\pdis}_{t+1:T-1}}_{t+1:T-1}) \mid S_{t+1}\right) - \V\left(\pdisg(\tau^{\mu^{*,b^*}_{t+1:T-1}}_{t+1:T-1}) \mid S_{t+1}\right) \mid S_t, A_t\right]\right) \mid S_t\right] .\explain{By \eqref{eq: mu star sqrt u}}
\end{align}
Thus, for $t=T-1$,
\begin{align}
&\V\left(\pdisg(\tau^{\mu^{*,\pdis}_{t:T-1}}_{t:T-1})\mid S_t = s\right) - \V\left(G^{b^*}(\tau^{\mu^{*,b^*}_{t:T-1}}_{t:T-1})\mid S_t = s\right) \\
=& \V_{A_t \sim \mu^{*,\pdis}_t}\left(\rho_t q_{\pi,t}(S_t,A_t)\mid S_t\right) \explain{By \eqref{eq: variance recursive doubly T-1} and \eqref{eq: variance recursive pdis T-1}}.
\end{align}
For $t\in[T-2]$,
\begin{align}
&\V\left(\pdisg(\tau^{\mu^{*,\pdis}_{t:T-1}}_{t:T-1})\mid S_t = s\right) - \V\left(G^{b^*}(\tau^{\mu^{*,b^*}_{t:T-1}}_{t:T-1})\mid S_t = s\right) \\
=& \V_{A_t \sim \mu^{*,\pdis}_t}\left(\rho_t q_{\pi,t}(S_t,A_t)\mid S_t\right) \\ 
& + \E_{A_t \sim \mu^{*,\pdis}_t}\left[\rho_t^2 \left(\E_{S_{t+1}}\left[\V\left(\pdisg(\tau^{\mu^{*,\pdis}_{t+1:T-1}}_{t+1:T-1}) \mid S_{t+1}\right) - \V\left(\pdisg(\tau^{\mu^{*,b^*}_{t+1:T-1}}_{t+1:T-1}) \mid S_{t+1}\right) \mid S_t, A_t\right]\right) \mid S_t\right].\explain{By \eqref{eq: variance recursive doubly T-2}and  \eqref{eq: variance recursive pdis T-2}}
\end{align}
The proof of $\forall t, s, \delta^{\text{ODI, ours}}_t(s) \geq 0$ is similar to \eqref{eq: delta on non negative} and is omitted.

\end{proof}

\subsection{Proof of Theorem~\ref{lemma: double optimal smaller than optimal b}}\label{appendix: double optimal smaller than optimal b}

\begin{proof}

We begin the proof by manipulating the variance of $G^{b^*}(\tau^{\pi_{t:T-1}}_{t:T-1})$.  
$\forall s$, for $t=T-1$,
\begin{align}
\label{eq: variance recursive dr T-1}
 &\V\left(G^{b^*}(\tau^{\pi_{t:T-1}}_{t:T-1})\mid S_t = s\right) \\
 =&\E_{A_t \sim \pi_t}\left[[ q_{\pi,t}(S_t,A_t)-b^*_t(S_t,A_t)]^2\mid S_t\right] -  [v_{\pi,t}(S_t)-\bar{b^*}_t(S_t) ]^2 \explain{Lemma~\ref{lemma: recursive var} and on-policy}\\
=& 0\explain{Definition of $b^*$ \eqref{def: b star}}\\
=&\E_{A_t \sim \pi_t}\left[ u^{b^*}_t(S_t,A_t) \mid S_t\right].\explain{By \eqref{eq: u b star}}
\end{align}
For $t\in[T-2]$,
\begin{align}
\label{eq: variance recursive dr T-2}
&\V\left(G^{b^*}(\tau^{\pi_{t:T-1}}_{t:T-1})\mid S_t = s\right) \\
=&  \E_{A_t \sim \pi_t}\left[\E_{S_{t+1}}\left[\V\left(G^{b^*}(\tau^{\pi_{t+1:T-1}}_{t+1:T-1}) \mid S_{t+1}\right) \mid S_t, A_t\right] + \nu_t(S_t, A_t)+[q_{\pi,t}(S_t,A_t)-b_t(S_t,A_t)]^2\mid S_t\right] \\
&- [v_{\pi, t}(S_t)-\bar{b}_t(S_t) ]^2 \explain{By Lemma \ref{lemma: recursive var}}\\
=&  \E_{A_t \sim \pi_t}\left[\E_{S_{t+1}}\left[\V\left(G^{b^*}(\tau^{\pi_{t+1:T-1}}_{t+1:T-1}) \mid S_{t+1}\right) \mid S_t, A_t\right] + \nu_t(S_t, A_t)\mid S_t\right] \\
&+ \V_{A_t \sim \pi_t}\left([q_{\pi,t}(S_t,A_t)-b^*_t(S_t,A_t)]\mid S_t\right) \explain{Lemma~\ref{lemma: variance of q - b}} \\
=&  \E_{A_t \sim \pi_t}\left[\E_{S_{t+1}}\left[\V\left(G^{b^*}(\tau^{\pi_{t+1:T-1}}_{t+1:T-1}) \mid S_{t+1}\right) \mid S_t, A_t\right] + \nu_t(S_t, A_t)\mid S_t\right] \explain{By \eqref{def: b star}}\\
=&  \E_{A_t \sim \pi_t}\left[\E_{S_{t+1}}\left[\V\left(G^{b^*}(\tau^{\mu^{*,b^*}_{t+1:T-1}}_{t+1:T-1}) \mid S_{t+1}\right) \mid S_t, A_t\right] + \nu_t(S_t, A_t)\mid S_t\right] \\
&+ \E_{A_t \sim \pi_t}\left[\E_{S_{t+1}}\left[\V\left(G^{b^*}(\tau^{\pi_{t+1:T-1}}_{t+1:T-1}) \mid S_{t+1}\right) - \V\left(G^{b^*}(\tau^{\mu^{*,b^*}_{t+1:T-1}}_{t+1:T-1}) \mid S_{t+1}\right) \mid S_t, A_t\right] \mid S_t\right]  \\
=& \E_{A_t \sim \pi_t}\left[ u^{b^*}_t(S_t,A_t) \mid S_t\right]\\
&+ \E_{A_t \sim \pi_t}\left[\E_{S_{t+1}}\left[\V\left(G^{b^*}(\tau^{\pi_{t+1:T-1}}_{t+1:T-1}) \mid S_{t+1}\right) - \V\left(G^{b^*}(\tau^{\mu^{*,b^*}_{t+1:T-1}}_{t+1:T-1}) \mid S_{t+1}\right) \mid S_t, A_t\right] \mid S_t\right].  \explain{By \eqref{eq: u b star}}
\end{align}

Thus, for $t=T-1$, their difference is 
\begin{align}
&\V\left(G^{b^*}(\tau^{\pi_{t:T-1}}_{t:T-1})\mid S_t = s\right) - \V\left(G^{b^*}(\tau^{\mu^{*,b^*}_{t:T-1}}_{t:T-1})\mid S_t = s\right) \\ 
=& \E_{A_t \sim \pi_t}\left[ u^{b^*}_t(S_t,A_t) \mid S_t\right] - \E_{A_t \sim \pi_t}\left[\sqrt{u^{b^*}_t(S_t,A_t)} \mid S_t\right]^2 \explain{By \eqref{eq: variance recursive doubly T-1} and \eqref{eq: variance recursive dr T-1}}\\
=& \V_{A_t \sim \pi_t}\qty(\sqrt{u^{b^*}_t(S_t,A_t)} \mid S_t).
\end{align}
For $t\in[T-2]$,
\begin{align}
&\V\left(G^{b^*}(\tau^{\pi_{t:T-1}}_{t:T-1})\mid S_t = s\right) - \V\left(G^{b^*}(\tau^{\mu^{*,b^*}_{t:T-1}}_{t:T-1})\mid S_t = s\right) \\
=& \E_{A_t \sim \pi_t}\left[ u^{b^*}_t(S_t,A_t) \mid S_t\right] - \E_{A_t \sim \pi_t}\left[\sqrt{u^{b^*}_t(S_t,A_t)} \mid S_t\right]^2 \\
&+ \E_{A_t \sim \pi_t}\left[\E_{S_{t+1}}\left[\V\left(G^{b^*}(\tau^{\pi_{t+1:T-1}}_{t+1:T-1}) \mid S_{t+1}\right) - \V\left(G^{b^*}(\tau^{\mu^{*,b^*}_{t+1:T-1}}_{t+1:T-1}) \mid S_{t+1}\right) \mid S_t, A_t\right] \mid S_t\right] \explain{By \eqref{eq: variance recursive doubly T-2} and \eqref{eq: variance recursive dr T-2}}\\
=& \V_{A_t \sim \pi_t}\qty(\sqrt{u^{b^*}_t(S_t,A_t)} \mid S_t)\\
&+ \E_{A_t \sim \pi_t}\left[\E_{S_{t+1}}\left[\V\left(G^{b^*}(\tau^{\pi_{t+1:T-1}}_{t+1:T-1}) \mid S_{t+1}\right) - \V\left(G^{b^*}(\tau^{\mu^{*,b^*}_{t+1:T-1}}_{t+1:T-1}) \mid S_{t+1}\right) \mid S_t, A_t\right] \mid S_t\right].
\end{align}

The proof of $\forall t, s, \delta^{\text{DR, ours}}_t(s) \geq 0$ is similar to \eqref{eq: delta on non negative} and is omitted.

\end{proof}

\subsection{Proof of Lemma~\ref{lemma: u recursive form}}\label{appendix: u recursive form}
\reOOuOOrecursiveOOform*
\begin{proof}
When $t = T-1$, $\forall s,a$,
\begin{align}
&u_{\pi, t}(s, a) \\
=&\qty(q_{\pi, t}(s, a) - b^*_t(s,a))^2 + \nu_{\pi, t}(s, a) \explain{By \eqref{eq: u b star}}\\
=& 0 \explain{By \eqref{def: b star}}.
\end{align}

When $t \in [T-2]$, $\forall s,a$,
\begin{align}
&u_{\pi, t}(s, a) \\
=& \nu_{\pi, t}(s, a) + \sum_{s'} p(s'|s, a)\V\left(G^b(\tau^{\mu^*_{t+1:T-1}}_{t+1:T-1}) \mid S_{t+1} = s'\right) \explain{By \eqref{eq: u b star}} \\
=& \nu_{\pi, t}(s, a) + \sum_{s'} p(s'|s, a)\left[\E_{A_{t+1} \sim \mu^*_{t+1}}\left[\rho_{t+1}^2 u_{\pi, t+1}(S_{t+1}, A_{t+1}) \mid S_{t+1} = s'\right] \right.\\
&\left.- [v_{\pi, t+1}(s')-\bar{b}^*_{t+1}(s') ]^2\right] \explain{Lemma \ref{lemma: recursive var}} \\ 
=& \nu_{\pi, t}(s, a) + \sum_{s'} p(s'|s, a)\E_{A_{t+1} \sim \mu^*_{t+1}}\left[\rho_{t+1}^2 u_{\pi, t+1}(S_{t+1}, A_{t+1}) \mid S_{t+1} = s'\right] \explain{By \eqref{def: b star}} \\
=& \nu_{\pi, t}(s, a) + \sum_{s', a'} \rho_{t+1} p(s'|s, a) \pi_{t+1}(a'|s')  u_{\pi, t+1}(s', a'). 
\end{align}

\end{proof}

\section{Experiment Details}\label{append: experiment}
We utilize the behavior policy-agnostic offline learning setting \citep{nachum2019dualdice}, in which the offline data consists of $
\qty{(t_i, s_i, a_i, r_i, s_i')}_{i=1}^m$, 
with $m$ previously logged data tuples. Those tuples can be generated by one or multiple behavior policies, regardless of whether these policies are known or unknown, and they are not required to form a complete trajectory. In the $i$-th data tuple, $t_i$ represents the time step, $s_i$ is the state at time step $t_i$, $a_i$ is the action executed, $r_i$ is the sampled reward, and $s_i'$ is the successor state.

In this paper, we first learn the action-value function $q_{\pi,t}$ from offline data using Fitted Q-Evaluation algorithms (FQE, \citet{le2019batch}), but our method can integrate state-of-the-art offline policy evaluation techniques.
Notably, Fitted Q-Evaluation (FQE, \citet{le2019batch}) is a different algorithm from Fitted Q-Improvement (FQI). Fitted Q-Evaluation is not prone to overestimate the action-value function $q_{\pi,t}$ because Fitted Q-Evaluation does not have any $\max$ operator and does not change the policy. 

Then, by the following derivation
\begin{align}
&\nu_{\pi,t}(s,a) \\
=& \V_{S_{t+1}}\left(v_{\pi, t+1}(S_{t+1})\mid S_t=s, A_t=a\right) \explain{By \eqref{def: nu}}\\
=& \E_{S_{t+1}}\left[v_{\pi, t+1}(S_{t+1})^2 \mid S_t=s, A_t=a\right] - \E_{S_{t+1}}\left[v_{\pi, t+1}(S_{t+1}) \mid S_t=s, A_t=a\right]^2 \\
=& \E_{S_{t+1}}\left[v_{\pi, t+1}(S_{t+1})^2 \mid S_t=s, A_t=a\right] - (q_{\pi,t}(s,a) - r(s,a))^2, \label{eq: nu learning target}
\end{align}
the first term is an expectation of $S_{t+1}$. Because we have $(t_i, s_i, a_i, r_i, s_i')$ data tuples, we construct $\nu$ using $s_i'$ in $(t_i, s_i, a_i, r_i, s_i')$ data tuples as the sample of the first term  
and compute the rest quantity using the learned action-value function $q_{\pi,t}$ and reward data $r_i$.
Therefore, we construct $\mathcal{D}_{\nu} \doteq \qty{(t_i,s_i,a_i,\nu_i,s'_i)}_{i=1}^m$.
Finally, by passing data tuples in $\mathcal{D}_{\nu}$ from $t=T-1$ to $0$, we fit the function $u_{\pi, t}$  using FQE in a dynamic programming way with respect to the recursive form of $u_{\pi, t}$  derived in Lemma \ref{lemma: u recursive form}. 
For each time step, we take a copy of the neural network as an approximation of function $u_{\pi, t}$ at time step $t$. After learning the functions $u_{\pi, t}$ and $q_{\pi,t}$, we return the learned behavior policy 
$\mu^*_t(a|s) \propto \pi_t(a|s)\sqrt{u_{\pi, t}(s, a)}$ and the learned baseline function $b^*_t(s,a) = q_{\pi,t}(s,a)$. 
The pseudocode of this procedure is presented in Algorithm \ref{alg: DOpt algorithm}.


\begin{figure}[t]
\begin{minipage}{0.18\textwidth}
\centering
\includegraphics[width=1\textwidth]{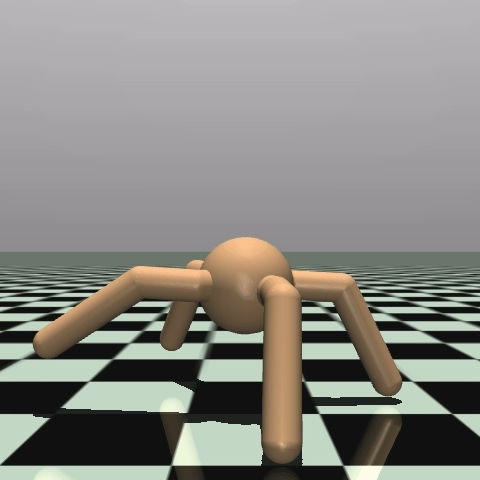}
\end{minipage}
\begin{minipage}{0.18\textwidth}
\centering \includegraphics[width=1\textwidth]{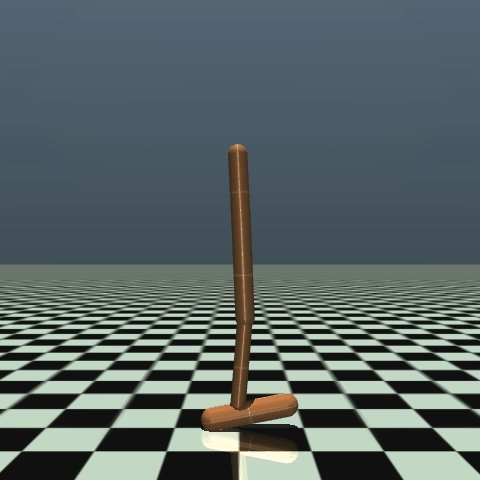}
\end{minipage}
\begin{minipage}{0.18\textwidth}
\centering \includegraphics[width=1\textwidth]{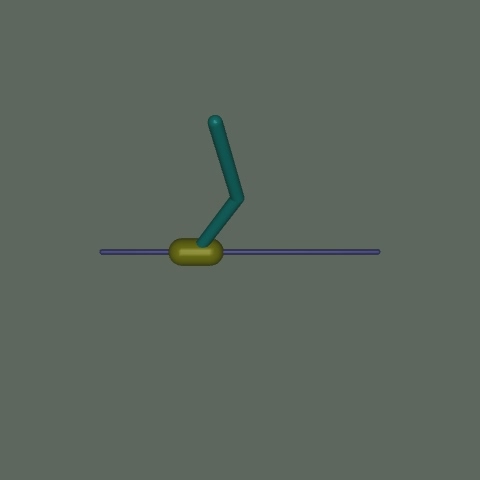}
\end{minipage}
\begin{minipage}{0.18\textwidth}
\centering \includegraphics[width=1\textwidth]{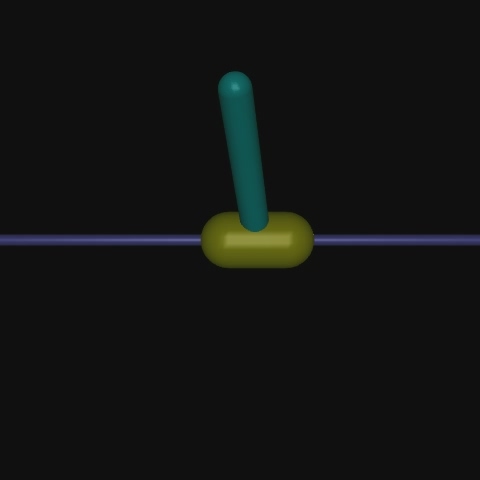}
\end{minipage}
\begin{minipage}{0.18\textwidth}
\centering \includegraphics[width=1\textwidth]{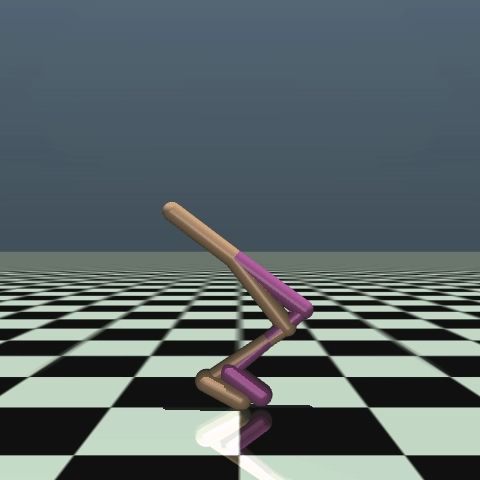}
\end{minipage}
\centering
\caption{
MuJoCo robot simulation tasks \citep{todorov2012mujoco}. The pictures are adapted from \citep{liu2024efficient}.
Environments from the left to the right are Ant, Hopper, InvertedDoublePendulum,  InvertedPendulum, and Walker.
} 
\label{fig:cart_pole_image}
\end{figure}

\subsection{GridWorld}

For a Gridworld with size $n$, we set its width, height,
and time horizon all to be $n$.
The number of states in this Gridworld environment  scales cubically with $n$, offering a suitable tool to test algorithm scalability. We choose Gridworld with $n^3 = 1,000$ and $n^3 = 27,000$, the largest Gridworld environment tested among related works \citep{jiang2015doubly, hanna2017data, liu2024efficient}.
There are four possible actions: left, right, up, and down. After the agent takes an action, it has a probability of $0.9$ to move accordingly and a probability of $0.1$ to move uniformly at random. 
If runs into a boundary, 
the agent stays in its current location. 
The reward function $r(s, a)$ is randomly generated.
We consider $30$ randomly generated target policies.
We generate the ground truth policy performance using the on-policy Monte Carlo method, running each target policy for $10^6$ episodes.
We test two different environment sizes of the Gridworld, one with $1,000$ states and $27,000$ states.
The offline dataset of both environments contains $1,000$ episodes generated by a set of random policies.
To learn functions $q_{\pi,t}$ and $u_{\pi,t}$, we split the offline data into a training set and a test set. We
tune all hyperparameters offline
based on Fitted Q-learning loss on the test set. We choose a one-hidden-layer neural network and test the neural network size with $[64,128,256]$ and choose $64$ as the final size. 
We test the learning rate for Adam optimizer with $[1\text{e}^{-5},1\text{e}^{-4},1\text{e}^{-3},1\text{e}^{-2}]$ and choose to use the default learning rate $1\text{e}^{-3}$ as learning rate for Adam optimizer \citep{kingma2014adam}.
All benchmark algorithms are learned using their reported hyperparameters \citep{jiang2015doubly, liu2024efficient}.
Each policy has 30 independent runs, resulting in $30 \cdot 30 = 900$ total runs.
Therefore, each curve in Figure \ref{fig:gridworld}  is averaged from 900 different runs over a wide range of policies, showing a strong statistical significance.

\subsection{MuJoCo}

MuJoCo is a physics engine containing various stochastic environments, where the goal is to control a robot to achieve different behaviors such as walking, jumping, and balancing. 
Environments in Figure \ref{fig:cart_pole_image} from the left to the right are Ant, Hopper, InvertedDoublePendulum, InvertedPendulum, and Walker. 
We construct $30$ policies in each environment (resulting a total of $150$ policies), incorporating a wide range of performance generated by 
the proximal policy optimization (PPO) algorithm \citep{schulman2017proximal}. We use the the default PPO implementation in \citet{huang2022cleanrl}. 
We set each MuJoCo environment to have a fixed time horizon $100$ in OpenAI Gymnasium \citep{towers2024gymnasium}.
As our methods are designed for discrete action space,
we discretize the first dimension of MuJoCo action space.
The remaining dimensions are controlled by the PPO policies, and they are deemed as part of the environment.
The offline dataset of each environment contains $1,000$ episodes generated by a set of policies with various performances.
Functions $q_{\pi,t}$ and $u_{\pi,t}$ are learned the same way as in Gridworld environments. 
Our algorithm is robust on hyperparameters. 
All hyperparameters in Algorithm \ref{alg: DOpt algorithm} are tuned offline and are the same across all MuJoCo and Gridworld experiments.
Each policy in MuJoCo also has 30 independent runs, resulting in $30 \cdot 30 = 900$ total runs.
Therefore, each curve in Figure \ref{fig:mujoco} and each number in Table \ref{table: compare num} are averaged from 900 different runs over a wide range of policies indicating strong statistical significance.

\end{document}